\documentclass[runningheads]{llncs}
\usepackage[T1]{fontenc}

\usepackage{latexsym}
\usepackage{amssymb}
\usepackage{array}
\newcolumntype{C}[1]{>{\centering\arraybackslash}p{#1}}
\usepackage{adjustbox}
\usepackage{amsmath}
\usepackage{booktabs}
\usepackage{enumitem}
\usepackage{graphicx}
\usepackage{color}
\usepackage{xcolor,colortbl}
\usepackage{multirow}

\usepackage{tabularx}
\usepackage{hyperref}

\usepackage{algorithmicx}
\usepackage{algpseudocode}
\usepackage{algorithm}
\usepackage{subcaption,booktabs,makecell}
\usepackage{wrapfig}

\newcommand{\Var}{{\rm Var}\,}
\newcommand{\Exp}{{\rm I\hspace{-0.8mm}E}\,}
\newcommand{\Prob}{{\mathbb{P}}\,}
\newcommand{\calN}{{\mathcal{N}}\,}

\newcommand{\F}{{\mathcal{F}}}
\newcommand{\x}{{\mathbf{x}}}
\newcommand{\X}{{\mathbf{X}}}

\newcommand{\examplefirst}{{1}}
\newcommand{\examplesecond}{{A1}}
\newcommand{\examplemultit}{{2}}
\newcommand{\examplesdf}{{3}}
\newcommand{\examplereal}{{4}}
\newcommand{\examplelarged}{{5}}
\newcommand{\examplelargestd}{{6}}

\begin{document}
\title{Reducing Estimation Uncertainty Using Normalizing Flows and Stratification}
\author{Pawe{\l} Lorek\inst{1,6}\orcidID{0000-0003-2894-2799} \and
Rafał Nowak\inst{2,6} \and Rafał Topolnicki\inst{3,6} \and Tomasz Trzciński\inst{4,6,7,8} \and Maciej Zięba\inst{5,6} \and Aleksandra Krystecka\inst{1}}
\authorrunning{P. Lorek, R. Nowak, R. Topolnicki, T. Trzciński, M. Zięba, A. Krystecka}

%\titlerunning{FlowStrat: Reducing Estimation Variance with Flows and Stratification}
\titlerunning{Reducing Estimation Uncertainty Using Flows and Stratification}
\institute{University of Wrocław, Mathematical Institute
\and University of Wrocław, Institute of Computer Science
\and Dioscuri Center in Topological Data Analysis,\\ Institute of Mathematics,  Polish Academy of Sciences
\and Warsaw University of Technology
\and Wrocław University of Science and Technology
\and Tooploox
\and Jagiellonian University of Cracow
\and IDEAS NCBR}
\maketitle

\makeatletter
\setlength{\skip\footins}{2.5em}
\makeatother

\renewcommand{\thefootnote}{}
\footnotetext{\footnotesize
\hspace{-0.1cm}This  is the extended version of a paper accepted for publication at ACIIDS 2026.
}
\addtocounter{footnote}{-1}

\begin{abstract}
 Estimating the expectation of a real-valued function of a random variable from sample data is a critical aspect of statistical analysis, with far-reaching implications in various applications. Current methodologies typically assume (semi-)parametric distributions such as Gaussian or mixed Gaussian, leading to significant estimation uncertainty if these assumptions do not hold. We propose a flow-based model, integrated with stratified sampling, that leverages a parametrized neural network to offer greater flexibility in modeling unknown data distributions, thereby mitigating this limitation. Our model shows a marked reduction in estimation uncertainty across multiple datasets, including high-dimensional (30 and 128) ones, outperforming crude Monte Carlo estimators and Gaussian mixture models. Reproducible code is available at \url{https://github.com/rnoxy/flowstrat}.
\end{abstract}

\section{Introduction}
In modern machine learning, estimating the expectation of a real-valued function $f$ under a random variable $\X$,
\begin{equation}\label{eq:I_ExpfX}
I=\Exp f(\X),
\end{equation}
is fundamental. While $I$ is often intractable, independent samples from $\X$ are typically available. High dimensionality, multi-modality, or costly evaluations of $f$ can render sample averages inefficient, motivating variance reduction methods.

Here we consider the case where the distribution of $\X$ is unknown but only $n$ samples $\x_1,\ldots,\x_n$ are given. A naive estimator is the sample mean, $\hat{Y}^{\rm obs}={1\over n}\sum_{i=1}^n f(\x_i)$. Alternatively, one can estimate the distribution of $\X$ and resample, often assuming a parametric family (e.g., Gaussians or mixtures), which enables stratified sampling. However, such assumptions are restrictive when the true distribution is unknown.

The \textsl{main contribution of our paper} is a model for estimating $I=\Exp f(\X)$ when the distribution of $\X$ is unknown, relying solely on observed samples. The model uses stratified sampling (with proportional or optimal allocations) to improve accuracy, yielding more precise estimations of $I$. We propose two methods for stratified sampling of Gaussian distributions (cartesian and spherical) and present an effective approach for stratifying high-dimensional Gaussian distributions.
To address this, we employ flow-based models \cite{rezende2015variational}, which map a Gaussian base distribution to complex ones via invertible neural transformations. These models provide flexible distribution estimates, from which stratified sampling improves the accuracy of $I$’s estimation.

The main application arises when the number of observations is \textsl{insufficient} to estimate $I$ directly, but \textsl{sufficient} to train the flow model, enabling more accurate estimation through model-based sampling. Estimation uncertainty is further reduced via stratified sampling. For example, in estimating $I=\Prob(X_1>1.2, X_2>1.2)$ from synthetic data, direct observation-based estimation is poor, whereas flow-based stratified sampling (with optimal allocation and 16 strata) yields more accurate results with substantially narrower 95\% confidence intervals (see Fig.~\ref{fig:ExNO2d_cartesian_lines_short} and Table~\ref{tab:Example2d_NO_cart}, row $j_{1.2}^+$). For high-dimensional data, we propose two stratification methods: \textsl{cartesian}, applied to selected coordinates, and \textsl{spherical}, which stratifies only the radius and scales to high dimensions (applied in Examples \examplelarged{} and \examplelargestd{} with 30$d$ and 128$d$, respectively).

\begin{figure}[t]
\centering
\setlength{\tabcolsep}{4pt}

\begin{tabular}{c c c c c}
\raisebox{0.7cm}{\small $\hat{Y}_R^{\F,\rm CMC}$} &
\includegraphics[width=0.34\textwidth]{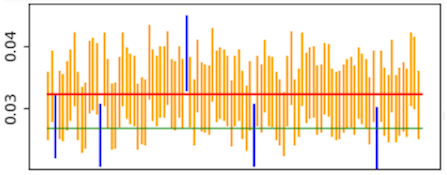} &
\hspace{-1mm}
&
\raisebox{0.7cm}{\small $\hat{Y}^{\F,\rm opt, M1}_{R,m=16}$} &
\includegraphics[width=0.34\textwidth]{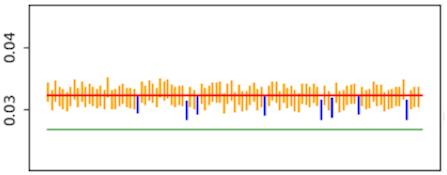}
\\[2mm]
\end{tabular}
\caption{Results for Example \examplefirst{}: 100   estimations of $I=\Prob(X_1>1.2, X_2>1.2)$, each (vertical line) resulted from $R=2^{12}$  simulations.
95$\%$ confidence intervals (vertical lines) depicted:
\textcolor{orange}{orange} lines: intervals containing true $I$ (\textcolor{red}{red} line), \textcolor{blue}{blue} lines: those not containing $I$.
A \textcolor{green}{green} line -- estimation of $I$ from observations.
Here $\F$ means that samples $\x_i$ were sampled from trained flow model, CMC stands for Crude Monte Carlo (i.e., a mean of $f(\x_i)$, no stratification) and ($\rm{opt}$,M1) denotes specific stratification.
}
\label{fig:ExNO2d_cartesian_lines_short}
\end{figure}

Normalizing flow models are known to outperform Gaussians and Gaussian mixture models (GMMs) when data come from a different distribution. This is illustrated in Example \examplefirst{}, where samples from this distribution are shown in Fig.~\ref{fig:Example2d_norm_flow_16str}. We compare GMM estimates with those from the trained flow model (both without stratification), as outlined in Table~\ref{tab:Example2d_NO_gmm}. Example \examplefirst{} further shows that relatively small sample sizes suffice to train the flow model accurately,   see Section \ref{sec:sample_perform}.

\section{Background}
\subsection{Normalizing Flows}

Generative models known as normalizing flows \cite{rezende2015variational,tabak2010density} can be effectively trained through direct likelihood estimation by utilizing the change-of-variable formula. Continuous Normalizing Flows (CNFs) \cite{grathwohl2018ffjord} propose to model transformation between data and base distributions with a dynamic system. The objective of CNFs is to solve a differential equation of the form $\frac{d\mathbf{z}}{dt}=\mathbf{g}_{\boldsymbol{\beta}}(\mathbf{z}(t), t)$, where $\mathbf{g}_{\boldsymbol{\beta}}(\mathbf{z}(t), t)$ represents the dynamics function described by parameters $\boldsymbol{\beta}$. The goal is to find a solution to the equation at time $t_1$, $\mathbf{y}:=\mathbf{z}(t_1)$, given an initial state $\mathbf{z}:=\mathbf{z}(t_0)$ with a known prior. The transformation function $\mathbf{h}_{\boldsymbol{\beta}}$
and the log-density of $\mathbf{x}$ are as follows:
% \begin{equation}\label{eq:H_beta}
% \mathbf{x} = \mathbf{h}_{\boldsymbol{\beta}}( \mathbf{z} ) =  \mathbf{z} + \int^{t_1}_{t_0} \mathbf{g}_{\boldsymbol{\beta}}(\mathbf{z}(t), t) \mathrm{dt},
% \quad \log p_{\boldsymbol{\beta}}(\mathbf{x}) = \log \calN(\mathbf{z};\mathbf{0},\mathbf{I})
%     - \int^{t_1}_{t_0} \frac{\mathrm{d} \mathbf{h}_{\boldsymbol{\beta}}(\mathbf{z}(t),t)}{\mathrm{d}\mathbf{z}(t)}\mathrm{dt}.
% \end{equation}
\begin{eqnarray}
\mathbf{x} & =&  \mathbf{h}_{\boldsymbol{\beta}}( \mathbf{z} ) =  \mathbf{z} + \int^{t_1}_{t_0} \mathbf{g}_{\boldsymbol{\beta}}(\mathbf{z}(t), t) \mathrm{dt}, \nonumber  \\
\log p_{\boldsymbol{\beta}}(\mathbf{x}) &=& \log \calN(\mathbf{z};\mathbf{0},\mathbf{I})
    - \int^{t_1}_{t_0} \frac{\mathrm{d} \mathbf{h}_{\boldsymbol{\beta}}(\mathbf{z}(t),t)}{\mathrm{d}\mathbf{z}(t)}\mathrm{dt}.\label{eq:H_beta}
\end{eqnarray}
The decision to use CNFs is driven by successful applications in models such as NGGP \cite{sendera2021non}, PointFlow \cite{yang2019pointflow}, and StyleFlow \cite{abdal2021styleflow}, which are characterized by similar dimensionalities.

\subsection{Monte Carlo estimation}
Here, we assume we aim to estimate the expectation of a random variable $Y\in\mathbb{R}$, namely
$I=\Exp Y,$
given that we can sample $Y$ and know that $\Exp Y^2<\infty$.
(Later we consider $Y=f(\X), f:\mathbb{R}^d\to\mathbb{R}$).
Suppose we have $R\geq 1$ samples $Y_1,\ldots,Y_R$, and $\hat{Y}_R$ is an unbiased estimator of $I$
(i.e., $\Exp\hat{Y}_R=I$) based on these samples.
Then, the central limit theorem (CLT) states that for any $\alpha\in(0,1)$,
we have
(in our applications $R$ is large, making the CLT approximation accurate)
\begin{equation}\label{eq:gen_conf_int}
 \Prob\left(I\in\left[\hat{Y}_R \pm z_{1-\alpha/2} \sqrt{\Var(\hat{Y}_R)}\right]\right)
\approx 1-\alpha.
\end{equation}
Thus, the smaller the estimator’s variance, the narrower the above confidence interval.
\paragraph{Crude Monte Carlo estimator.} The simplest estimator, usually called
\textsl{Crude Monte Carlo estimator}, is a~sample mean of iid $Y_1,\ldots,Y_R$,
namely
$\hat{Y}_R^{\rm CMC}=\sum_{j=1}^R Y_j/R.$
In such a case, we have $\Var(\hat{Y}_R^{\rm CMC})=\Var Y/R$.
In practice we often do not know $\Var Y$, we may however replace it with
a sample variance $\hat{S}_R^2=\sum_{j=1}^R (Y_j-\hat{Y}_R^{\rm CMC})^2/(R-1)$,
since a version  of CLT states  that $\sum_{j=1}^R(Y_j-I)/(\hat{S}_R\sqrt{R})$
converges to a standard normal $\mathcal{N}(0,1)$ as $R\to\infty$.
In other words, we have the confidence interval given by Eq.
\eqref{eq:gen_conf_int} with $\Var(\hat{Y}_R)$ replaced
with $\hat{S}^2_R/R$.
\subsection{Stratified sampling}\label{sec:strat_est}
Fix $m\geq 1$ (number of strata) and let $A^1,\ldots,A^m$ be disjoint sets such that
$\Prob\left(Y\in\cup_{j=1}^m A^j\right)=1$. This split defines \textsl{strata}
$S^j=\{\omega: Y(\omega)\in A^j\}$. Slightly abusing the notation, we will call $A^j$ also a stratum.
We assume that $m$ and strata $A^j,j=1,\ldots,m$ are fixed.
Let $p_j=\Prob(Y\in A^j)$ be the probability of $j$-th stratum and let
$I^j=\Exp[Y|Y\in A^j]$ be the expectation of $Y$ on $j$-th stratum.
The law of total expectation  yields
\begin{equation}\label{eq:I_total_law}
 I=\Exp Y = p_1 I^1+\ldots+p_m I^m.
\end{equation}
The idea of stratified sampling can be thus explained as follows. Split the total budget of
simulations (\textsl{general split}) $R=R_1+\ldots +R_m$, then estimate $I^j$ using $R_j$ simulations and finally
estimate $I$ computing the weighted average (\ref{eq:I_total_law}).
Note that to apply the procedure, we need to be able to sample
$Y$ from $j$-th stratum, i.e., $Y^j\stackrel{\mathcal{D}}{=}(Y|Y\in A^j)$ (equality in distribution).
Let $Y^j_1,\ldots, Y_{R_j}^j$ be iid replications of $Y^j$. We estimate
$I^j$ via $\hat{Y}_{R_j}^j={1\over R_j}\sum_{i=1}^{R_j} Y_i^j$ (which is
a CMC estimator of $I^j$). Finally, denoting, $\sigma_j^2=\Var Y^j$, the \textsl{stratified estimator} and
its variance are:
\begin{equation*}\label{eq:estim_str}
 \textstyle \hat{Y}_R^{\rm str}=p_1\hat{Y}_{R_1}^1+\ldots+p_m\hat{Y}_{R_m}^m,
 \Var(\hat{Y}_R^{\rm str})=\sum_{j=1}^m {p_j^2\over R_j}\sigma^2_j.
\end{equation*}
Again, in practice, we do not know the variances $\sigma^2_j$,
we estimate them via sample variances $\hat{s}^2_j$ and we estimate the variance
of $\hat{Y}_R^{\rm str}$ via
 $ \widehat{\Var}(\hat{Y}_R^{\rm str})=\sum_{j=1}^m {p_j^2\over R_j}\hat{s}^2_j.$

\paragraph{Proportional allocation.} We split $R$ proportionally to
probabilities of strata,
 i.e., $R_j=Rp_j$.
Denote the corresponding estimator as $\hat{Y}_R^{\rm pa}$.
Then we have
$ \Var(\hat{Y}_R^{\rm pa})={1\over R}\sum_{j=1}^m p_j \sigma_j^2$ and $
 \widehat{\Var}(\hat{Y}_R^{\rm pa})={1\over R}\sum_{j=1}^m p_j \hat{s}_j^2.$
 A variance of CMC estimator may be decomposed
 as $\Var(\hat{Y}_R^{\rm CMC})= \Var(\hat{Y}_R^{\rm pa})+\sum_{j=1}^m p_j(I^j-I)^2$,
 thus   $\Var(\hat{Y}_R^{\rm pa})\leq \Var(\hat{Y}_R^{\rm CMC}).$
 
 \paragraph{Optimal allocation.}
 It turns out that the  split:
$\displaystyle R_j={p_j\sigma_j\over \sum_{i=1}^m p_i \sigma_i} R, \quad j=1,\ldots,m,$
 is best possible, in the sense that  $\displaystyle \Var(\hat{Y}^{\rm opt}_R)\leq \Var(\hat{Y}^{\rm str}_R)$,
 where $\hat{Y}^{\rm opt}_R$ is the stratified estimator with above split and $\hat{Y}^{\rm str}_R$ is a general stratified estimator.
 The proof of this fact (formally in Theorem \ref{thm:optimal} in appendix) an the decomposition of
  CMC variance  may be found
 e.g., in \cite{madras} (Theorem 3.3), in Appendix \ref{sec:more_on_str}  we provide
  different proofs.

\section{Methodology of our method}

\subsection{Stratifying multivariate Gaussian}
The two methods described below are presented for equally probable
strata (i.e., $p_i=1/m$), as we will need only this form (it generalizes easily).
\subsubsection{Method M1: Cartesian stratification}\label{sec:method1}
Let us start with a univariate standard normal $Z\sim\calN(0,1)$, denoting its
distribution function by $\Phi$.
To perform stratified sampling with $m_0$ strata, we split $\mathbb{R}$ into $m_0$ intervals:
\begin{equation}\label{eq:strata_A}
 A^1=(a_0,a_1],  \ldots, A^{m_0}=(a_{m_0-1},a_{m_0}],
\end{equation}
where $a_0=-\infty$, $a_{m_0}=+\infty$, and $a_j$ are chosen so that $\Prob(Z\in(a_{j-1},a_j])=p_j$,
achieved by setting
$a_1=\Phi^{-1}(p_1), \quad a_2=\Phi^{-1} (p_1+p_2),\ldots$
(inverses $\Phi^{-1}(\cdot)$ are evaluated numerically).
To sample $Z^j$ from $A^j$, we take $U\sim\mathcal{U}(0,1)$ and set
$Y^j=\Phi^{-1}(V^j)$, where $V^j=a_{j-1}+(a_j-a_{j-1})U.$
Now, let $(Z_1,\ldots,Z_d)$ be a standard normal $d$-dimensional random variable.
We split each $Z_i$ into $m_0$ strata as above, obtaining $m=m_0^d$ strata. If $p_j=1/m_0$ for each $Z_i$,
we obtain $m_0^d$ equally probable strata.

\begin{figure}[t]
\centering

% -------------------- LEFT SUBFIGURE --------------------
\begin{subfigure}{0.48\textwidth}
\centering
\includegraphics[width=0.48\textwidth]{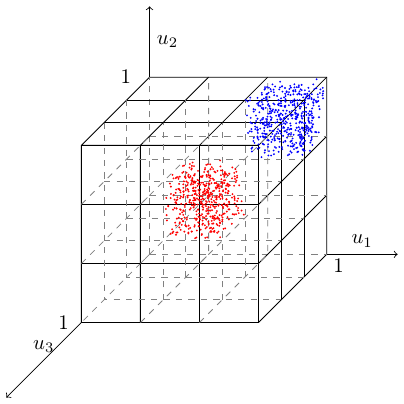}
\includegraphics[width=0.48\textwidth]{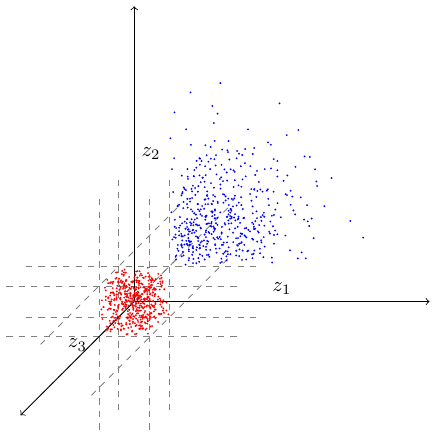}

\caption{\textsl{Cartesian stratification}. Stratified uniform $(U_1,U_2,U_3)$
(each into $m_0=3$ strata) and corresponding strata for $(Z_1,Z_2,Z_3)$.
In total $m=3^3=27$ strata.}
\label{fig:method1_2a}
\end{subfigure}
\hfill
% -------------------- RIGHT SUBFIGURE --------------------
\begin{subfigure}{0.48\textwidth}
\centering
\includegraphics[width=0.75\textwidth]{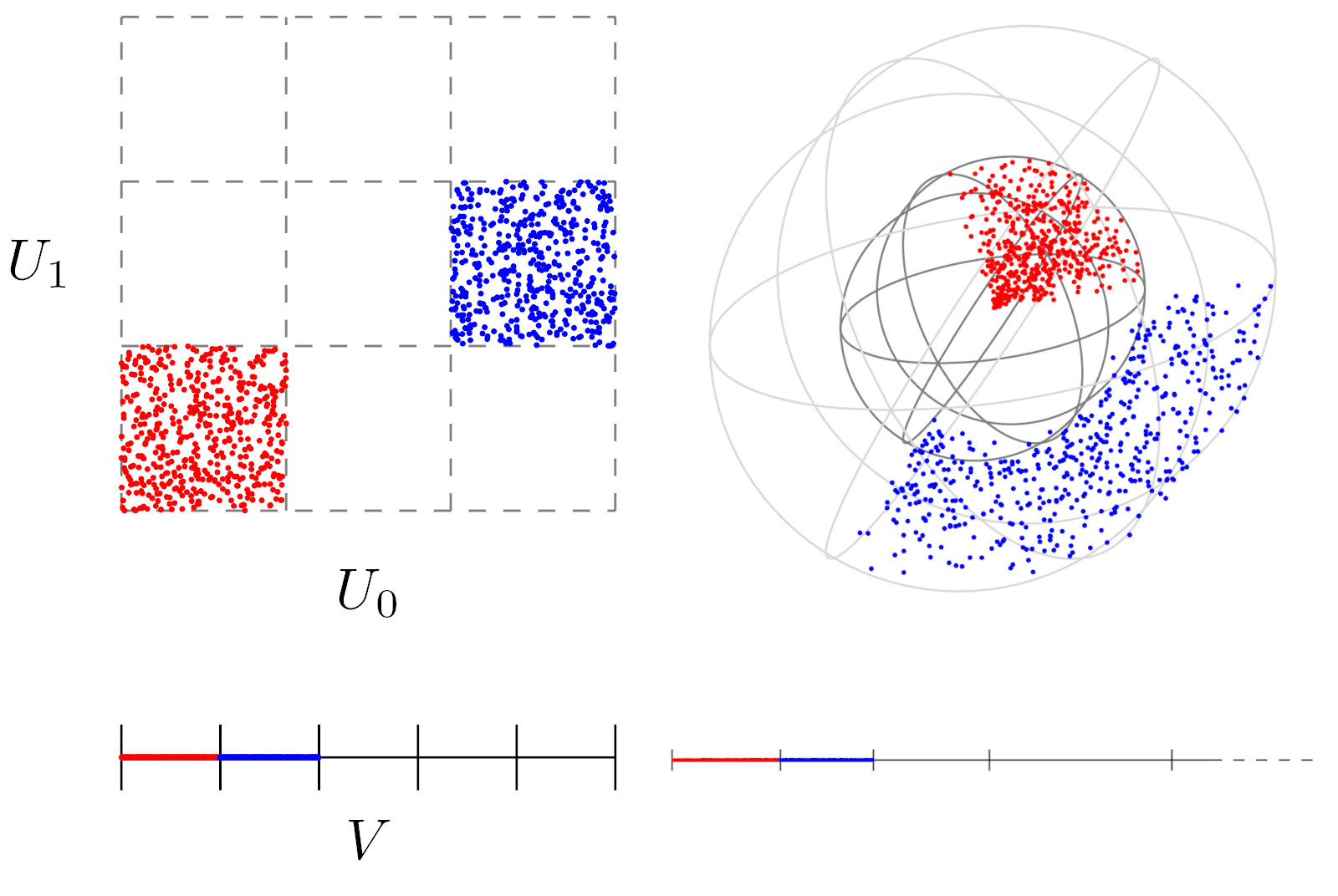}

\caption{\textsl{Spherical stratification}.
Radius is stratified into $m_r=5$ strata (two first strata shown on the left)
and each angle into $m_0=3$ strata; two strata distinguished.}
\label{fig:method1_2b}
\end{subfigure}

\caption{Comparison of Cartesian and Spherical stratifications.}
\label{fig:method1_combined}
\end{figure}
Consider the   example: to sample a 3D standard normal point $(Z_1,Z_2,Z_3)$, we can sample $(U_1,U_2,U_3)$
uniformly from the hypercube $[0,1]^3$ and compute $Z_i=\Phi^{-1}(U_i), i=1,2,3$. Now, if we
sample $(U_1,U_2,U_3)$ only from one of nine subhypercubes (of equal volume), the resulting $(Z_1,Z_2,Z_3)$ will be a normally distributed 3D point
sampled from the corresponding stratum. In Fig.~\ref{fig:method1_2a},
two such strata are illustrated.

\subsubsection{Method M2: Spherical stratification}\label{sec:method2}
\textbf{Stratification with respect to radius.}
The random variable
$D^2=Z_1^2+\ldots+Z_d^2$
follows a $\chi^2_d$ distribution (chi-squared with $d$ degrees of freedom) with cdf $F_{\chi^2_d}$.
Thus, if we sample $d$ iid standard normal $\calN(0,1)$ variables,
collect them as $\mathbf{Z}=(Z_1,\ldots,Z_d)$,
normalize them as
$ \mathbf{Z}'=\left( {Z_1\over ||\mathbf{Z}||},\ldots , {Z_d\over ||\mathbf{Z}||}\right), $
and then set $\mathbf{Z}''=(D Z_1',D Z_2',\ldots,D Z_d')$, where $D^2$ is sampled from a $\chi^2_d$ distribution, then $\mathbf{Z}''$ has the same distribution as $\mathbf{Z}$.
This leads to the following stratification. Fix the number of strata $m_r$, sample standard normal variables $Z_1,\ldots,Z_d$, and normalize them (obtaining $Z_1',\ldots,Z'_d$).
Now, stratify only the radius $D$ by sampling it from the $j$-th stratum, which is done by sampling $U\sim\mathcal{U}(0,1)$ and setting
$ (D^j)^2=F^{-1}_{\chi^2_d}\left({j\over m_r} + {1\over m_r} U\right), $
to finally obtain $\displaystyle \mathbf{Z}^j=(D^j Z_1',\ldots, D^j Z_d')$.
\smallskip\par
\textbf{Stratification w.r.t. angles.}
Multivariate standard normal $\mathbf{Z}=(Z_1,\ldots,Z_d)$ normalized
(i.e., $\mathbf{Z}'=\mathbf{Z}/\|\mathbf{Z}\|$) has a uniform distribution on the hypersphere
\[
S_{d-1}=\left\{\mathbf{x}=(x_1,\ldots,x_d): \sum_{j=1}^d x_j^2=1\right\}.
\]
A point $\mathbf{x}\in S_{d-1}$ can be represented in spherical coordinates as
\begin{eqnarray}
 x_1 &=& \cos \phi_1,\quad  x_2 = \sin \phi_1 \cos \phi_2,\ \ldots, \nonumber\\
 x_d &=& \sin \phi_1 \cdots \sin \phi_{d-3} \sin \phi_{d-2} \sin \theta,
 \label{eq:spher_coord_old}
\end{eqnarray}
where $\phi_1,\ldots,\phi_{d-2}\in[0,\pi)$ and $\theta\in[0,2\pi)$.
The density of the uniform distribution on $S_{d-1}$ (in these coordinates) factorizes as
\[
g(\theta,\phi_1,\ldots,\phi_{d-2}) \propto
 \sin^{d-2}\phi_1\, \sin^{d-3}\phi_2 \cdots \sin\phi_{d-2},
\]
which can be written as a product of one-dimensional densities
\[
g(\theta,\phi_1,\ldots,\phi_{d-2})
= h_0(\theta)\, h_1(\phi_1)\cdots h_{d-2}(\phi_{d-2}),
\]
where $h_0$ is the density of the uniform distribution on $(0,2\pi)$ and
\[
h_k(\phi)=\frac{1}{c_k}\sin^k(\phi),\qquad \phi\in(0,\pi),\quad k=1,\ldots,d-2,
\]
with normalization constants $c_k=\int_0^\pi \sin^k(x)\,dx$.
Consequently, sampling uniformly from $S_{d-1}$ reduces to independent sampling of
$\theta\sim h_0$ and $\phi_k\sim h_k$ for $k=1,\ldots,d-2$.

To obtain stratified sampling on the hypersphere, we stratify each angular coordinate.
For $\theta$ we split $(0,2\pi)$ into $m_0$ equal-length intervals.
For each $\phi_k$ we construct $m_0$ strata $(a_{k,j-1},a_{k,j}]$ such that
\[
\int_{a_{k,j-1}}^{a_{k,j}} h_k(\phi)\,d\phi = \frac{1}{m_0},\qquad j=1,\ldots,m_0,
\]
which is done numerically. Sampling from a given stratum then consists in
drawing $\theta$ and $\phi_k$ from the corresponding one-dimensional strata and
mapping $(\theta,\phi_1,\ldots,\phi_{d-2})$ back to a point on $S_{d-1}$ via
(\ref{eq:spher_coord_old}). Combining this with stratification of the radius
(as in the previous paragraph) yields $m=m_r m_0^{d-2}$ equally probable strata
for the Gaussian base distribution.

The one-dimensional densities $h_k(\phi)\propto\sin^k(\phi)$ do not admit
closed-form inverse distribution functions for general $k$.
In practice, we therefore generate samples from $h_k$ using a simple
acceptance--rejection scheme with a uniform proposal on $(0,\pi)$ and extend
it to stratified sampling by restricting the proposal to a chosen subinterval.
The expected number of accept--reject iterations remains moderate even for
large $k$, which makes the method practical in our settings.
Full derivations, the explicit form of the acceptance--rejection sampler
and construction of the boundaries $a_{k,j}$ are provided
in Appendix \ref{app:sampl_hk}.

\smallskip\par
\textbf{Full stratification.}
We simply independently stratify radius $D$ (using $m_r$ strata)
and angles $\theta, \phi_1,\ldots,\phi_{d-2}$ (using $m_0$ strata for each angle) as described above, thus finally we have $m=m_r m_0^{d-2}$ equally probable strata.
The case $d=3$ with $m_r=5$ and $m_0=3$ is depicted in Fig. \ref{fig:method1_2b}.

\subsection{Optimal allocation in practice}
First, we run $R'$ pilot simulations (usually much smaller than target $R$ simulations) with proportional allocation just to estimate
 variances $\hat{s}'^2_j$ in each strata. They serve to compute the final optimal split
 $ R_j={p_j \hat{s}'^2_j\over \sum_{k=1}^m p_k \hat{s}'^2_k} R.$
 Then we perform $R_j$ simulations $Y^j_1,\ldots,Y_{R_j}^j$ in  each stratum $j=1,\ldots,m$,
 estimate $I^j$ via $\hat{Y}^j_{R_j}$ and compute the final estimator
 $\hat{Y}_R^{\rm opt}=p_1 \hat{Y}^1_{R_1}+\ldots+p_m \hat{Y}^m_{R_m}.$
 \textsl{Afterwards} we recompute variances $\hat{s}^2_j$ within each stratum and estimate
 $\Var \hat{Y}^{\rm opt}_R$ via
 $\widehat{\Var} (Y_R^{\rm opt})=\left(\sum_{j=1}^m p_j\hat{s}_j\right)^2.$
 \subsection{High-dimensional stratification}

When $d$ is large, using Method M1 (cartesian) or Method M2 (spherical)
with $m_0>1$ strata in every dimension or angle leads to an exponential number
of strata ($m_0^d$ or $m_r m_0^{d-2}$), which is prohibitive in our
$30$- and $128$-dimensional examples. We therefore use two approximations.
\smallskip\par
\textsl{Method} $\textrm{M}\texttt{rad}$.
This method is a special case of spherical stratification. We stratify only the
radius $D$ into $m_r$ equally probable shells and do not stratify the angles
($m_0=1$). In other words, $\mathbb{R}^d$ is split into $m=m_r$ radial strata
$A^1,\ldots,A^m$, and we sample
$\mathbf{Z}^j \stackrel{\mathcal{D}}{=} (\mathbf{Z}\mid \mathbf{Z}\in A^j)$
by first sampling a direction on the sphere and then sampling the radius from
the $j$-th $\chi^2_d$-stratum as in Section~\ref{sec:method2}.
The resulting estimators are denoted
$\hat{Y}_{R,m}^{\F,\rm prop, M\texttt{rad}}$ and
$\hat{Y}_{R,m}^{\F,\rm opt, M\texttt{rad}}$ for proportional and optimal
allocation, respectively.
\par\smallskip
\textsl{Methods} $\textrm{M}\texttt{High3}$ \textsl{and} $\textrm{M}\texttt{Rand3}$.
Here we stratify only $\eta=3$ selected coordinates into $m_0$ strata each,
which yields $m=m_0^3$ equally probable strata. For a given triple
$(i,j,k)$ we use the one-dimensional normal stratification
from Section~\ref{sec:method1} on coordinates $Z_i,Z_j,Z_k$ and keep the remaining
coordinates unstratified. Sampling from a given stratum is then straightforward:
we sample $Z_t\sim\calN(0,1)$ for $t\notin\{i,j,k\}$ and sample $Z_i,Z_j,Z_k$
from the corresponding one-dimensional intervals.

In Method $\textrm{M}\texttt{Rand3}$ the indices $i,j,k$ are chosen uniformly
at random from $\{1,\ldots,d\}$ (independently in each repetition).
In Method $\textrm{M}\texttt{High3}$ the choice depends on the target function
$f$ to be estimated. We first run $R_0<R$ pilot simulations and, for each
coordinate $b$, construct a stratified estimator that splits only $Z_b$ into
$m_0$ strata. We then compute the empirical standard deviations of these
$d$ estimators and select the three coordinates with the largest variances
as $i,j,k$. The corresponding estimator is denoted
$\hat{Y}^{\F,\rm prop, M\texttt{High3}}_{R,m=m_0^3}$.
Further implementation details are provided in Appendix \ref{sec:app_more_high}.

\subsection{Stratified flow-based estimation}
Given $n$ observations $\mathbf{x}_1,\ldots, \mathbf{x}_n$ from $\mathbb{R}^d$,
we first approximate their distribution $p(\x)$ by $p_{\boldsymbol{\beta}}(\x)$
by training a flow model $\mathcal{F}_{\boldsymbol{\beta}}$ to minimize the negative log-likelihood
$
 \mathcal{L}=- \sum_{k=1}^n \log p_{\boldsymbol{\beta}}(\mathbf{x}_k),
$
with respect to the parameters $\boldsymbol{\beta}$,
where $\log p_{\boldsymbol{\beta}}(\cdot)$ is given by Eq.~\eqref{eq:H_beta}. This loss is optimized using a standard, gradient-based training procedure, as described in \cite{grathwohl2018ffjord}.
Once the flow model is trained, we can sample any number of points from $p_{\boldsymbol{\beta}}(\mathbf{x})$, which approximates the distribution $p(\mathbf{x})$ of the random variable $\X$. Direct stratification in the data space is challenging, so we propose stratifying in the latent space of the flow model, with the Gaussian base distribution $p(\mathbf{z})=\calN(\mathbf{z};\mathbf{0},\mathbf{I})$. We then apply either Method 1 (cartesian) or Method 2 (spherical) to split the Gaussian latent space and generate the desired number of points (proportional or optimal) in each region. The points generated in each region are then transformed to the data space using the transformation $ \mathbf{h}_{\boldsymbol{\beta}}( \mathbf{z} )$ defined by Eq.~\eqref{eq:H_beta}.
Afterward, we apply $f$ to the generated points and estimate $I=\Exp f(\X)$. Note that if
we need to estimate multiple quantities $I_k=\Exp f_k(\X), k=1,\ldots,M$, the flow model requires only a single training.

\section{Experimental results}\label{sec:exp_res}
For a random variable $\X$, we want to estimate $I=\Exp f(\X)$ based on observations $\x_1,\ldots,\x_n$.
An exact value of $I$ is known for all synthetic examples (in Example \examplesdf{}, an "exact" value is computed by sampling a large number of points).
Recall,  $\hat{Y}_n^{\rm obs}=\sum_{i=1}^n f(\x_i)/n$ estimates $I$ from observations.
After training a flow model $\F_{\boldsymbol{\beta}}$, we construct the following estimators of $I$:
i)~$\hat{Y}^{\F, \rm CMC}_R={1\over R}\sum_{i=1}^R f(\x_i^\F)$ -- we simply sample $R$ points $\x_1^\F,\ldots,\x_R^\F$ from $\F_{\boldsymbol{\beta}}$ (via independent sampling of base distribution rvs) and compute means of $f(\x_i^\F)$;
ii)~$\hat{Y}^{\F, \rm  prop, M}_{R,m=k}$ --
we sample $R$ points using stratification with proportional sampling and stratification
method $\text{M}\in\{$M1,M2,$\textrm{M}\texttt{rad}$,$\textrm{M}\texttt{High3}$,$\textrm{M}\texttt{Rand3}\}$ and $k$ strata;
iii)~$\hat{Y}^{\F, \rm opt}_{R,m=k}$ -- similarly, we use
stratification with optimal allocation (where we also used $R'$ pilot simulations to
estimate stratum variances). For optimal allocation, unless stated otherwise,  we  use  $R'=R/8$ pilot simulations to estimate
standard deviations.

We compute:
\texttt{E} -- the estimator value, an approximation of $I$;
\texttt{SD} -- the standard deviation of the estimator; and $\displaystyle \texttt{AC}=-\log_{10}\left|({I-\texttt{E}) / I}\right|$, the estimator’s accuracy (roughly speaking, the number of correct digits in $\texttt{E}$).
In tables, we report $I$, \texttt{E}, \texttt{SD}, \texttt{AC},
or (for readability)
$I^*=100I$, \texttt{SD${}^*$}=$100$\texttt{SD}, and \texttt{E}$^*$=$100$\texttt{E}.

All metric values are computed as
\textbf{means of 10 simulations.}
Note that the average of 10  \texttt{AC} values is \textbf{not} equal to \texttt{AC} computed from the average of 10 \texttt{E} values.
Metrics are presented in tables rounded to three or four digits, though they are computed with higher precision.
Data in tables allow for constructing a confidence interval at level
$\alpha$, namely $\Prob(\hat{Y}-z_{1-\alpha/2} \texttt{SD}\leq I \leq \hat{Y}+z_{1-\alpha/2} \texttt{SD})\approx 1-\alpha$, Fig.~\ref{fig:ExNO2d_cartesian_lines_short}
presents confidence intervals for Example \examplefirst{}.

In examples with known $I$, in tables, the best accuracy in each row is \textbf{bolded},
second best is \underline{underlined}, unless otherwise stated in a caption.
In examples with unknown $I$, neither $I$ nor \texttt{AC} are reported, then the best (smallest)
\texttt{SD} is \textbf{bolded}, and the second best (second smallest) \texttt{SD} is \underline{underlined}.
We consider different functions $f$ across examples, with the following
functions often used:
\begin{equation}\label{eq:fun_f1}
{\small
j^{+}_t(\x) = \mathbf{1}(\forall i:~x_i > t),~~ j^{-}_t(\x) = \mathbf{1}(\forall i:~ x_i \leq t),
}
\end{equation}
which estimate probabilities $\Prob(X_i>t, i=1,\ldots,d)$ and $\Prob(X_i\leq t, i=1,\ldots,d)$.

We demonstrate examples with different parameters $n, m, R$ and functions $f$ to show
the robustness of the method,   yielding superior results over observation-based estimations.

\begin{wraptable}{r}{0.6\textwidth}
\caption{(Example   \examplefirst{}) Results for \textbf{cartesian} method. }
\label{tab:Example2d_NO_cart}
\footnotesize
%\begin{tabular}%
%{|p{.35cm}|p{0.6cm}|p{.45cm}p{.6cm}|p{.4cm}|p{.5cm}p{.55cm}|p{.5cm}p{.55cm}|}
{\fontsize{8pt}{9.2pt}\selectfont
\setlength{\tabcolsep}{1.9pt}
\begin{tabular}{|p{0.04\textwidth}|p{0.05\textwidth}|p{0.05\textwidth}p{0.05\textwidth}|p{0.05\textwidth}|p{0.055\textwidth}p{0.055\textwidth}|p{0.055\textwidth}p{0.055\textwidth}|}
\hline
\multicolumn{1}{|c|}{$f$} & \multicolumn{1}{c|}{$I^*$} & \multicolumn{2}{c|}{ \small{$\hat{Y}_{1000}^{\rm obs}$}} & $R$
    & \multicolumn{2}{c|}{\small{$\hat{Y}^{\F, \rm CMC}_R$}}
    & \multicolumn{2}{c|}{\small{$\hat{Y}^{\F, \rm  opt, M1}_{R,m=4\times4}$}}
\\\hline
 &   & \texttt{E*} & \texttt{AC}
     &
     & \texttt{E*} & \texttt{AC}
     & \texttt{E*} & \texttt{AC}
\\\hline
           \multirow{2}{*}{$j^{+}_{1.2}$} & \multirow{2}{*}{3.24} & \multirow{2}{*}{2.70}  & \multirow{2}{*}{.776}
           &   $2^{12}$
            & 3.14  & \underline{1.25} % CMC
                & 3.24  & \textbf{1.82} % Opt m 16
           \\
           &   &   &   % &
                &   $2^{15}$
                & 3.23 & \underline{1.81} % CMC
                & 3.22  & \textbf{2.22} % Opt m 16
           \\ \hline
             \multirow{2}{*}{$j^{+}_{2.0}$} & \multirow{2}{*}{.080} & \multirow{2}{*}{.000}  & \multirow{2}{*}{.000}
             &   $2^{12}$
                & .08 & \underline{.613} % CMC
                & .09 & \textbf{1.06} % Opt m 16
           \\
           &   &   &   % &
                &   $2^{15}$
                & .10 & \underline{.668} % CMC
                & .10 & \textbf{.693} % Opt m 16
           \\ \hline
            \multirow{2}{*}{$h_1$} & \multirow{2}{*}{12.3} & \multirow{2}{*}{10.1} & \multirow{2}{*}{.755}
            &   $2^{12}$
                & 11.6  & \underline{1.10} % CMC
                & 12.0  & \textbf{1.63} % Opt m 16
           \\
           &   &   &   % &
           &   $2^{15}$
                & 12.2 & \underline{1.54} % CMC
                & 12.1 & \textbf{1.83} % Opt m 16
           \\ \hline
            \multirow{2}{*}{$h_2$} & \multirow{2}{*}{3.18} & \multirow{2}{*}{2.75}  & \multirow{2}{*}{.868}
           &   $2^{12}$
                & 3.03 & \underline{1.35} % CMC
                & 3.07 & \textbf{1.50} % Opt m 16
           \\
           &   &   &   % &
                &   $2^{15}$
                & 3.07 & \textbf{1.50} % CMC
                & 3.07 & \underline{1.47} % Opt m 16
           \\ \hline

\end{tabular}
}
\end{wraptable}

\smallskip\par
\textbf{Example \examplefirst{}.}
Random vector $\X=(X_1,X_2)^T$ has a density
$g(x_1,x_2)=x_1 \exp(-x_1(x_2+1))$ for $x_1,x_2\geq 0$.
The exponential $\displaystyle g_{X_1}(x_1)=e^{-x_1}$ and Pareto $ g_{X_2}(x_2)=(x_2+1)^{-2}$ distributions are marginal distributions of $g$
(in Appendix \ref{sec:app_Example2} we describe how to sample from $g$).
We will estimate $I=\Prob(X_1>t, X_2>t)$ for $t\in\{0.9, 1.0, 1.2\}$
(i.e., $\Exp j_t^{+}(\X)$) and expectations of:
{\small
\begin{equation}\label{eqn:h1h3}
 h_1(\x)   =   (\log|x_1 x_2|)^{-1}j^{+}_1(\x),  \quad h_2(\x)   =   \sin(x_1 x_2)j^{+}_1(\x),
 \quad  h_3(\x)   =    (x_1 x_2)^{-1}j^{+}_1(\x).
\end{equation}
}
We trained the model on 1000 observations, then estimated $I$
using $R\in\{2^{12},2^{15}\}$ replications.

The results for cartesian
stratification are gathered in
 \begin{wrapfigure}{r}{0.6\textwidth}%[h]%{r}{0.4\textwidth}
\includegraphics[width=0.45\linewidth]{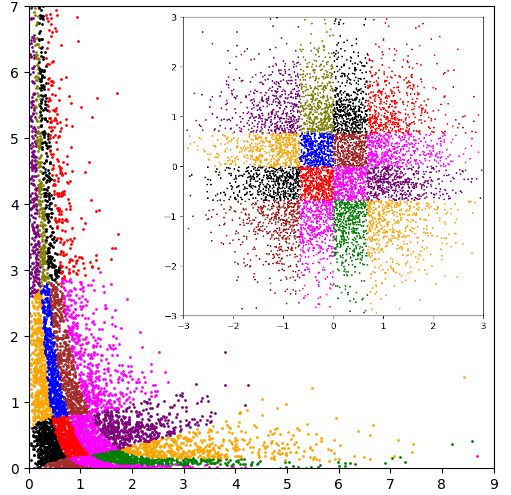}
\includegraphics[width=0.45\linewidth]{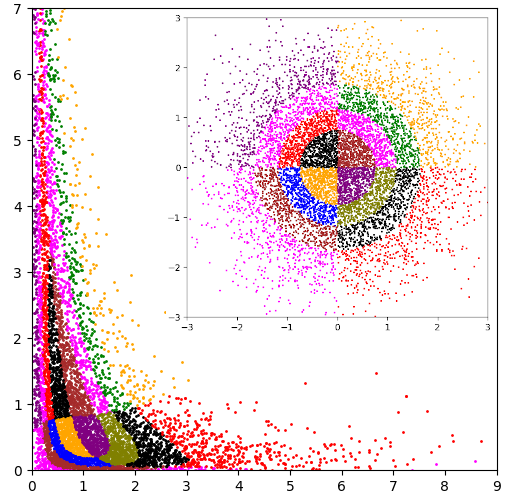}
\caption{Example \examplefirst{}: $R=2^{13}$ points and $m=16$ strata.
Cartesian (left column) and spherical (right column). Smaller plots: 2D iid standard normal; larger plots:
points mapped through $\F$, colors denote corresponding strata.}
 \label{fig:Example2d_norm_flow_16str}
\end{wrapfigure}
Table \ref{tab:Example2d_NO_cart} (only \texttt{E} and \texttt{AC} and one model reported,
more results are provided in Appendix in Table \ref{tab:Example2d_NO_cart_full}).

In Fig.~\ref{fig:Example2d_norm_flow_16str}, two stratification methods are illustrated.

In Fig.~\ref{fig:ExNO2d_cartesian_lines_short}, estimations and corresponding $95\%$ confidence
intervals from 100 simulations for $\hat{Y}_{R}^{\mathcal{F},\rm CMC}$ and
$\hat{Y}_{R,m=16}^{\mathcal{F},\rm opt,M1}$, both with $R=2^{12}$, are shown.
As expected from theory, the probability that a confidence interval includes the true
value of $I$ is 95$\%$: 5$\%$ of CMC estimator intervals and 8$\%$ of stratified
estimator intervals did not contain $I$ (the number of intervals not containing $I$
follows a binomial distribution with parameters 100 and 0.05; observing
$\{3,\ldots,8\}$ such intervals is quite likely with probability 0.81).
Confidence intervals for other estimators (as in Table~\ref{tab:Example2d_NO_cart} and
Table~\ref{tab:Example2d_NO_cart_full}), for spherical stratification (yielding similar
conclusions), and a comparison of methods M1 and M2 are provided in Appendix~\ref{sec:app_Example2}.

\textsl{Common conclusions for Example \examplefirst{} and Example \examplesecond{} (from the Appendix)}. In both cases
$n=1000$ points were not enough to estimate $I$ correctly from data, but they sufficed to train the flow model correctly,
which in turn resulted in better estimations of $I$ (in Example \examplefirst{} even $n=500$ points suffices, see Appendix \ref{sec:app_Example1} for details).
In both examples accuracy from flow models were better than
from data, best accuracy was always obtained for some stratified estimator. In all cases
the variance of CMC estimator was smaller than variance of $\hat{Y}^{\rm obs}_{1000}$
and in majority of cases, the variance any stratified estimator was better than the variance of CMC. Summarizing,
not only the point estimates (\texttt{AC}) are better using M1 or M2 methods, but also uncertainty in such   cases is smaller.

\textbf{Example \examplemultit{}.}
Let $\X=(X_1, \ldots,X_d)^T$ be a $d$ dimensional multivariate Student's $t$-distribution
with $\nu=5$ degrees of freedom, mean $\boldsymbol{\mu}=(0,\ldots,0)^T$
and with
  matrix $\boldsymbol{\Sigma}(i,i)=1$ and
$\boldsymbol{\Sigma}(i,j)=0.2$ for $i,j\in\{1,\ldots,d\}, j\neq i$, i.e.,
with density
${\Gamma((\nu+d)/2)\over \Gamma(\nu/2) \nu^{d/2} \pi^{d/2}
|\boldsymbol{\Sigma}|} \left(1+{1\over \nu} \x^T\boldsymbol{\Sigma}^{-1}\x\right)^{-(\nu+d)/2}.$
For $d=3$ and $d=4$ we aim at estimating $\Exp h_j(\X), j \in \{1, 2, 3\}$, where
$h_j(\x)$ are defined in (\ref{eqn:h1h3}) and adopted to $d=3$ and $d=4$.
In this example we performed simulations for larger $n=20k$ and $R\in\{50k,100k,500k\}$,
we considered up to $m=1296$ strata.
\begin{table*}%[htbp]
 \caption{(Example \examplemultit{}, $d=4$) Numerical results for
 \textbf{cartesian} (method M1) and \textbf{spherical} (method M2) stratification.
 The true value is $I^*$ is $-0.033$ for $h_1$, $.0993$ for $h_2$ and $.400$ for $h_3$.
 }
\label{tab:Example_stud_d4}
%\footnotesize
%\small
{\fontsize{8pt}{9.2pt}\selectfont
\setlength{\tabcolsep}{1.9pt}
%     \begin{tabular}{|p{0.15cm}|p{0.2cm}p{0.3cm}p{0.2cm}|p{0.45cm}|p{0.4cm}p{0.4cm}p{0.49cm}|p{0.49cm}p{0.49cm}p{0.49cm}|p{0.49cm}p{0.49cm}p{0.49cm}|p{0.49cm}p{0.49cm}p{0.55cm}|p{0.49cm}p{0.49cm}p{0.55cm}|p{0.49cm}p{0.49cm}p{0.55cm}} \hline
\begin{tabular}{|p{0.1815cm}|p{0.242cm}p{0.363cm}p{0.242cm}|p{0.5445cm}|
p{0.6cm}p{0.484cm}p{0.45cm}|
p{0.5929cm}p{0.5529cm}p{0.45cm}|
p{0.5929cm}p{0.5529cm}p{0.45cm}|
p{0.5929cm}p{0.5529cm}p{0.45cm}|
p{0.5929cm}p{0.5529cm}p{0.45cm}|}
\hline
 % &  & & & & & & & &  \multicolumn{6}{c|}{Cartesian} &\multicolumn{6}{c|}{Spherical} \\\hline
  $t$ &     \multicolumn{3}{c|}{ {$\hat{Y}_{20k}^{\rm obs}$}}& $R$
    &\multicolumn{3}{c|}{{$\hat{Y}^{\F, \rm CMC}_R$}}&
     \multicolumn{3}{c|}{{$\hat{Y}^{\F, \rm  prop, M1}_{R, m=6\!\times\! 6\!\times\! 6 \!\times\! 6}$}}&
  \multicolumn{3}{c|}{{$\hat{Y}^{\F, \rm  opt, M1}_{R, m=6\!\times\! 6\!\times\! 6 \!\times\! 6}$}}&
        \multicolumn{3}{c|}{{$\hat{Y}^{\F, \rm  prop, M2}_{R, m=10\!\times\! 5\!\times\! 5 \!\times\! 5}$}}&
  \multicolumn{3}{c|}{{$\hat{Y}^{\F, \rm  opt, M2}_{R, m=10\!\times\! 5\!\times\! 5 \!\times\! 5}$}}\\
    \hline
          &
            \texttt{E}$^*$ & \texttt{SD}${}^*$ & \texttt{AC} &
               &
           \texttt{E}$^*$ & \texttt{SD}${}^*$ & \texttt{AC}&
            \texttt{E}$^*$ & \texttt{SD}${}^*$ &\texttt{AC}&
            \texttt{E}$^*$ &\texttt{SD}${}^*$& \texttt{AC}&
            \texttt{E}$^*$ & \texttt{SD}${}^*$& \texttt{AC}&
          \texttt{E}$^*$ & \texttt{SD}${}^*$ & \texttt{AC}
    \\ \hline
&\multirow{3}{*}{\rotatebox{90}{.015}} & \multirow{3}{*}{\rotatebox{90}{.485}} & \multirow{3}{*}{\rotatebox{90}{-.157}} & 50k & \mbox{-.055} & .299 & \underline{.20} & \mbox{-.003} & .313 & \mbox{-.13} & \mbox{-.047} & .043 & .09 & \mbox{-.028} & .296 & \textbf{.36} & \mbox{-.072} & .048 & \mbox{-.07} \\
$h_1$ & & &  & 100k & \mbox{-.023} & .214 & \textbf{.62} & \mbox{-.026} & .216 & .38 & \mbox{-.020} & .034 & .28 & \mbox{-.012} & .211 & .12 & \mbox{-.037} & .035 & \underline{.47} \\
& & & &  500k & \mbox{-.028} & .097 & .44 & \mbox{-.023} & .096 & .571 & \mbox{-.041} & .020 & .72 & \mbox{-.035} & .095 & \textbf{1.1} & \mbox{-.029} & .016 & \underline{.90} \\ \hline
& \multirow{3}{*}{\rotatebox{90}{.103}} & \multirow{3}{*}{\rotatebox{90}{.105}} & \multirow{3}{*}{\rotatebox{90}{.986}} &  50k & .098 & .062 & 1.3 & .098 & .050 & \underline{1.3} & .109 & .007 & .82 & .091 & .052 & \textbf{1.5} & .084 & .006 & .99 \\
$h_2$ & & & & 100k & .096 & .044 & 1.5 & .094 & .035 & \textbf{1.8} & .092 & .006 & 1.6 & .093 & .036 & \underline{1.6} & .084 & .005 & 1.1 \\
& & &  &  500k & .095 & .019 & 1.7 & .096 & .016 & \underline{1.8} & .095 & .003 & 1.8 & .092 & .016 & \textbf{2.3} & .093 & .003 & 1.7 \\ \hline
& \multirow{3}{*}{\rotatebox{90}{.423}} & \multirow{3}{*}{\rotatebox{90}{.351}} & \multirow{3}{*}{\rotatebox{90}{1.24}} &  50k & .416 & .215 & 1.5 & .408 & .161 & \underline{1.5} & .427 & .021 & 1.4 & .395 & .173 & \textbf{1.6} & .402 & .021 & 1.2 \\
$h_3$ & & & & 100k & .410 & .151 & \underline{1.7} & .414 & .111 & 1.5 & .393 & .017 & 1.2 & .394 & .126 & \textbf{1.8} & .392 & .017 & 1.4 \\
& & & &  500k & .412 & .068 & 1.7 & .411 & .050 & 1.6 & .400 & .009 & \textbf{3.1} & .401 & .054 & \underline{2.1} & .432 & .009 & 1.4 \\
              \hline
\end{tabular}
}
\end{table*}

Results for $d=4$ are presented in Table  \ref{tab:Example_stud_d4}
(results for $d=3$ are provided in Appendix \ref{sec:app_Example2}).
As expected the variance of the stratified estimator with optimal allocation is lower than the variance of the proportionally stratified counterpart, which in turn is  lower than the variance of CMC estimator.
In most cases, the accuracy of the stratified estimator is higher than the one obtained in CMC method. In majority of considered cases the flow-based estimate yielded higher accuracy than the estimate obtained from train data. Similar conclusions can be drawn for estimations of $\Exp j_t^{+}(\X)$  (experiments results not shown).

\textbf{Example \examplereal{} (Real-world example)}.
For a real-world scenario, we considered 2-dimensional data from the AmeriGEOSS \cite{AmeriGEOSS} dataset
containing wind measurements in Papua New Guinea.

%\begin{table*}[htbp]
%\begin{table}[htbp]
\begin{wraptable}{r}{0.6\textwidth}
\caption{(Example \examplereal{}) Numerical results for \textbf{spherical} (method M2) stratification.}
\label{tab:Example2d_WIND_spher}
{\fontsize{8pt}{9.2pt}\selectfont
\setlength{\tabcolsep}{1.9pt}
\begin{tabular}{|p{.25cm}|p{0.6cm}p{0.6cm}|p{0.5cm}|p{0.6cm}p{0.6cm}|p{0.6cm}p{0.6cm}|p{0.7cm}p{0.7cm}|}
\hline
$F$ & \multicolumn{2}{c|}{$\hat{Y}_{3000}^{\rm obs}$} & $R$
    & \multicolumn{2}{c|}{$\hat{Y}^{\F, \rm CMC}_R$}
    & \multicolumn{2}{c|}{$\hat{Y}^{\F, \rm opt}_{R,2\times2}$}
    & \multicolumn{2}{c|}{$\hat{Y}^{\F, \rm opt}_{R,4\times4}$} \\
\hline
 & \texttt{E} & \texttt{SD}${}^*$ & & \texttt{E} & \texttt{SD}${}^*$ & \texttt{E} & \texttt{SD}${}^*$ & \texttt{E} & \texttt{SD}${}^*$ \\
\hline
\multirow{2}{*}{$g_1$} & \multirow{2}{*}{.192} & \multirow{2}{*}{.719}  & $2^{12}$
    & .194 & .618
    & .191 & \underline{.371}
    & .193 & \textbf{.175} \\
 &  &  & $2^{15}$
    & .194 & .218
    & .193 & \underline{.131}
    & .193 & \textbf{.064} \\
\hline
\multirow{2}{*}{$g_2$} & \multirow{2}{*}{.001} & \multirow{2}{*}{.087} & $2^{12}$
    & .001 & .075
    & .001 & \underline{.035}
    & .001 & \textbf{.034} \\
 &  &  & $2^{15}$
    & .001 & .026
    & .001 & \underline{.012}
    & .001 & \textbf{.010} \\
\hline
\multirow{2}{*}{$g_3$} & \multirow{2}{*}{.004} & \multirow{2}{*}{.009} & $2^{12}$
    & .004 & .008
    & .004 & \underline{.005}
    & .004 & \textbf{.004} \\
 &  &  & $2^{15}$
    & .004 & .003
    & .004 & \underline{.001}
    & .004 & \textbf{.001} \\
\hline
\end{tabular}
}
\end{wraptable}
%\end{table}

We chose the mean and standard deviation of air pressure (after applying a standard difference transform) collected between 02/15/18 and 02/28/19. We trained the model on the first 3000 observations, with training conducted over 2000 epochs (all other parameters of the flow model are as in Example \examplefirst{}). We estimated the following functions:
$$ g_1(x_1,x_2)=\mathbf{1}(\max(x_1,x_2)>0.01),\
 g_2(x_1,x_2)={x_2\over (1+x_1^2)}, \ g_3(x_1,x_2)=|x_1 x_2|.
 $$
The results for spherical stratification are provided in
Table~\ref{tab:Example2d_WIND_spher}.
In Appendix \ref{sec:app_ExampleWIND}
we present results for additional functions, as well as for cartesian stratification,
and show the training data and samples from the flow model.
Note that we do not know the exact values of $I_i=\Exp g_i(\X)$, so we focus on methods with the smallest \texttt{SD}.
In all cases, the stratified estimator with optimal allocation and 16 strata gave the best results.
In many cases, the estimated values are very similar – e.g., for $g_3$, all estimators
produced a value of 0.0046 (differing only in further decimal places); however, using
spherical stratification with 16 strata and optimal allocation yielded the smallest \texttt{SD},
indicating that with high probability the true value of $I$ lies in a narrow interval
$[0.0046 \pm z_{1-\alpha/2}\texttt{SD}]$ (cf.~Fig.~\ref{fig:ExNO2d_cartesian_lines_short}).

The next two examples address a problem of stratification in higher dimensions. 
\smallskip\par
\textbf{Example \examplelarged{} (30D)}.
Synthetic 30-dimensional example: most coordinates are independent with different distributions and parameters,
while some (non-consecutive) are multivariate non-standard normal. A precise description, including a pairplot,
is provided in
Appendix \ref{sec:app_Example30d}.
The flow model was trained on only $n=500$ observations.
For the M\texttt{rad} method, we stratify the radius into $m\in\{3,7\}$ strata, while for $\textrm{M}\texttt{High3}$
and $\textrm{M}\texttt{Rand3}$, we stratify three selected dimensions into $m_0=3$ strata (resulting in $3^3=27$ strata).
Detailed numerical results for estimating $j^+_{t}$  for $t\in\{-1.0, -0.8, -0.6\}$ with $R\in\{2^{12}, 2^{13}\}$, namely estimators
$\hat{Y}^{\F, \rm  prop, \mathrm{M}\texttt{rad}}_{R,m=3}$,
$\hat{Y}^{\F, \rm  opt, \mathrm{M}\texttt{rad}}_{R,m=3}$,
and
$\hat{Y}^{\F, \rm  prop, \mathrm{M}\texttt{rad}}_{R,m=7}$.
are presented in Table~\ref{tab:Example_30d_rad_part1}.
Estimators
$\hat{Y}^{\F, \rm  opt, \mathrm{M}\texttt{rad}}_{R,m=7}$,
$\hat{Y}^{\F, \rm  prop, M\texttt{Rand3}}_{{R},m=3\times 3\times 3}$, and
$\hat{Y}^{\F, \rm  prop, M\texttt{High3}}_{{R},m=3\times 3\times 3}$
are provided in Appendix \ref{sec:app_Example30d}.

For $t=-0.6$, estimation of $I$ from observations yielded 0, as the probability $I=0.001944$ is small,
and none of the $n=500$ observations had all coordinates larger than $t$.
Even in this case, the flow model captured the distribution well, yielding reasonable results --
especially $\textrm{M}\texttt{Rand3}$ and $\textrm{M}\texttt{High3}$ produced good estimates
(\texttt{E}$=0.001913$).
In one case, CMC yielded the best accuracy; in all other cases, stratified methods provided the best results.
\begin{table*}[h]
\caption{(Example  \examplelarged{}) Results for $\mathrm{M}\texttt{rad}$,
$\mathrm{M}\texttt{High3}$ and $\mathrm{M}\texttt{Rand3}$ methods. Part 1  }
\label{tab:Example_30d_rad_part1}
{\fontsize{8pt}{9.2pt}\selectfont
\setlength{\tabcolsep}{1.9pt}
\begin{tabular}{|p{0.65cm}|p{0.45cm}|p{0.5cm}p{0.5cm}p{0.55cm}|p{0.45cm}|p{0.55cm}p{0.55cm}p{0.55cm}|p{0.55cm}p{0.55cm}p{.55cm}|p{0.55cm}p{0.55cm}p{0.55cm}|p{0.55cm}p{0.55cm}p{0.55cm}|}\hline
    \multicolumn{1}{|c|}{$f$} &   $I^*$& \multicolumn{3}{c|}{ \small{$\hat{Y}_{n=500}^{\rm obs}$}}& $R$
     &\multicolumn{3}{c|}{{$\hat{Y}^{\F, \rm CMC}_R$}}&
      \multicolumn{3}{c|}{{$\hat{Y}^{\F, \rm  prop, \mathrm{M}\texttt{rad}}_{R,m=3}$}}&
   \multicolumn{3}{c|}{{$\hat{Y}^{\F, \rm  opt, \mathrm{M}\texttt{rad}}_{R,m=3}$}}&
         \multicolumn{3}{c|}{{$\hat{Y}^{\F, \rm  prop, \mathrm{M}\texttt{rad}}_{R,m=7}$}} \\
     \hline
           & &
               {\texttt{E}${}^*$}&{\texttt{SD}${}^*$}&{\texttt{AC}}&    &
               {\texttt{E}${}^*$}&{\texttt{SD}${}^*$}&{\texttt{AC}}&
               {\texttt{E}${}^*$}&{\texttt{SD}${}^*$}&{\texttt{AC}}&
               {\texttt{E}${}^*$}&{\texttt{SD}${}^*$}&{\texttt{AC}}&
               {\texttt{E}${}^*$}&{\texttt{SD}${}^*$}&{\texttt{AC}}
     \\
    \hline
   \multirow{2}{*}{$j^+_{-1.0}$} & \multirow{2}{*}{\!1.37} &\multirow{2}{*}{1.80}& \multirow{2}{*}{.595} &\multirow{2}{*}{.504}  &     $2^{12}$
   & 1.67 & .200 & .733% CMC
   & 1.52 & .191 & \textbf{1.18}	 % Prp m 4
   & 1.56 & .191 & \underline{.832}% Opt m 4
   & 1.63 & .197 & .771	% PRop m 16
   \\
   &   &   &   &   &   $2^{13}$
 & 1.67 &   .141  & 	\underline{.703}% CMC
   & 1.71 & .143 & .633	 % Prp m 4
   & 1.71 & .145 & .683 % Opt m 4
   & 1.66& .141	& .686% PRop m 16
   \\ \hline
     \multirow{2}{*}{$j^+_{-0.8}$} & \multirow{2}{*}{\!.560} &\multirow{2}{*}{.400}& \multirow{2}{*}{.282} & \multirow{2}{*}{.543} &   $2^{12}$
   & .735	 & .133 & .635% CMC
   & .681 &.128	 &\textbf{1.05} % Prp m 4
   & .477 &	.089 &	.768% Opt m 4
   & .662	& .126 & \underline{1.01} % PRop m 16
   \\
   &   &   &   &   &   $2^{13}$
   & .711	& .092	& .589 % CMC
&   .669	& .089	& .807 % Prp m 4
   & .590 &  .082 & \underline{.936}% Opt m 4
   & .655 & .089 & .777 % PRop m 16
   \\ \hline
        \multirow{2}{*}{$ j^+_{-0.6}$} & \multirow{2}{*}{\!.194} &\multirow{2}{*}{.000} & \multirow{2}{*}{.000} & \multirow{2}{*}{.000} &   $ 2^{12}$
        & 	.236 &	.075	& \textbf{1.00} % CMC
    &  .227 & .073 &	.683 %  Prp m 4
   &.170	&.057& .691  %Opt m 4
   & .227	&.073 & .805% PRop m 16
   \\
   &   &   &   &   &   $ 2^{13}$
   & 	.267	 & .056 & .506 % CMC
    &  .250	&.055 & .613 %  Prp m 4
   & 3.45	& .029 & .278  %Opt m 4
   & .238	&.053	& .784% PRop m 16
   \\ \hline
\end{tabular}
}
\end{table*}

\subsubsection*{Model architecture and training details.}
The CNF component of our model follows the FFJORD architecture
\cite{grathwohl2018ffjord}, and our implementation is based on the original
code\footnote{\url{https://github.com/rtqichen/ffjord}}.
We use two stacked CNF blocks, each composed of two concatsquash layers with
$\tanh$ activations and batch normalization. The number of hidden units
(\texttt{dims}) depends on the example: 16 units for Examples \examplesecond{}
and \examplemultit{}, 64 units for Examples \examplefirst{} and \examplereal{},
256 units for Examples \examplesdf{} and \examplelargestd{}, and 128 units
for Example \examplelarged{}. The models are trained with the ADAM optimizer,
using learning rates between $0.0001$ and $0.01$ and between several hundred
and $5000$ epochs, depending on the example.

\section{Comprehensive Comparison}
For Example \examplefirst{}, we conducted two additional comparisons. First, using the same training observations,
we trained GMMs with 4, 10, 20, 50, and 100 components (denoted as GMM($k$)),
then sampled from the trained models and computed function estimations.
Results are shown in Table~\ref{tab:Example2d_NO_gmm}.
\begin{table}[h!]    
\caption{(Example 2) Comparison with GMM and with flow model $\F_{5k}$ trained on 5000 observations. }%No 
\label{tab:Example2d_NO_gmm}
{\fontsize{8pt}{9.2pt}\selectfont
\setlength{\tabcolsep}{1.9pt}
 \begin{tabular}{|p{.4cm}|p{0.5cm}|p{.4cm}|p{.6cm}p{.5cm}|>{\columncolor[gray]{0.8}}c>{\columncolor[gray]{0.8}}cp{.55cm}p{.15cm}|p{.5cm}p{.45cm}|p{.55cm}p{.45cm}|p{.55cm}p{.45cm}|p{.6cm}p{0.2cm}|}\hline
   $f$ &   $I$&   $R$
    &\multicolumn{2}{c|}{{$\hat{Y}^{\F_{1k} }_R$}}&
     \multicolumn{2}{c|}{{$\hat{Y}^{\F_{5k} }_R$}}&
          \multicolumn{2}{c|}{{$\hat{Y}^{\rm GMM(4) }_R\quad$}}&
               \multicolumn{2}{c|}{{$\hat{Y}^{\rm GMM(10) }_R$}}&
                    \multicolumn{2}{c|}{{$\hat{Y}^{\rm GMM(20) }_R$}}&
                    \multicolumn{2}{c|}{{$\hat{Y}^{\rm GMM(50) }_R$}}&
                    \multicolumn{2}{c|}{{$\hat{Y}^{\rm \tiny{GMM(100)}}_R$}}\\
    \hline
           & &
             & \texttt{E} &     \texttt{AC} &

             \texttt{E} &   \texttt{AC}&
              \texttt{E} &  \texttt{AC}&
              \texttt{E} &  \texttt{AC}&
              \texttt{E} &  \texttt{AC}&  \texttt{E} &  \texttt{AC}&  \texttt{E} &  \texttt{AC}
    \\
    \hline
               \multirow{2}{*}{$j^{+}_{0.5}$} & \multirow{2}{*}{.315}   &   $2^{12}$
                & .304 &    \textbf{1.68} % CMC
                & .321 &  1.72  &  .359 & .857
                & .305 &  \underline{1.54}   % 10 components
                & .306 &  1.22  % 20 components
                & .323 & 1.52
                & .335 & 1.20
           \\
           &   &       $2^{15}$
                & .307 &    \textbf{1.63} % CMC
                & .306 &  1.54  & .355 & .898
                & .319 & \underline{1.52}    % 10 components
                & .321 & 1.43    % 20 components
                & .327 & 1.41
                & .327 & 1.40
           \\ \hline
           \multirow{2}{*}{$j^{+}_{1.2}$} & \multirow{2}{*}{.032}  &   $2^{12}$
                & .031 &    \textbf{1.25} % CMC
                & .025 &  1.68  & .063 & .019
                & .040 & .060 % 10 components
                & .039 & \underline{.644}                     % 20 components
                & .023 & .557
               & .024&  .569
           \\
           &   &       $2^{15}$
                & .032 &    \textbf{1.81} % CMC
                & .029 &  .882 & .068& {\mbox{-.035}}
                & .040 & .060 % 10 components
                & .038 & \underline{.796}    % 20 components
                & .022 & .504
                & .024 & .602
           \\ \hline
             \multirow{2}{*}{$j^{+}_{2.0}$} & \multirow{2}{*}{.001} &   $2^{12}$
                & .001 &     \textbf{.613} % CMC
                & .001 &  .945 &  .002 & {\mbox{-.134}}
               & 0 & 0     % 10 components
                &  .000 & \underline{.388}   % 20 components
                & .000 & .388
                & .002 & .152
           \\
           &   &       $2^{15}$
                & .001 &    \textbf{.668} % CMC
                & .001 &  1.00  & 0.02 & {\mbox{-0.13}} % Prop m 4
                  &.000 & .130  % 10 comp
                &  .000 & \underline{.284} % 20 comp
                &.000 & .089
                &.000 & .109
           \\ \hline
           \multirow{2}{*}{$h_1$} & \multirow{2}{*}{.033}  &  $2^{12}$
                & .031 &     \textbf{1.408} % CMC
                & .033 &  1.71  & .052 & .261
                 &.050 & .303    % 10 components
                 & .042 & .577    % 20 components
                 & .037 & \underline{.966}
                 & .038 & .855
           \\
           &   &       $2^{15}$
                & .032 &     \textbf{1.51} % CMC
                & .032 &  1.51  &  .058 & .131
                & .052 & .251     % 10 components
                  &.040 & .723  % 20 components
                  & .028 & .776
                  & .034 & \underline{1.33}
           \\ \hline
           \multirow{2}{*}{$h_2$} & \multirow{2}{*}{.032} &   $2^{12}$
                & .030 &     \textbf{1.35} % CMC
                & .031 &  1.71 & .051 & .272
                 & .040 & .588    % 10 components
                 & .034 & \underline{1.14}    % 20 components
                 &.034 & 1.07
                 &.040 & .064
           \\
           &   &       $2^{15}$
                & .031 &     \textbf{1.50} % CMC
                & .029 &  .998  & .054 & .154
                & .052 & .251    % 10 components
                & .034 & \underline{1.22}    % 20 components
                &.034 & 1.21
                & .035 & .957
           \\ \hline
            \multirow{2}{*}{$h_3$} & \multirow{2}{*}{.123}  &   $2^{12}$
                & .112 &     \textbf{1.08} % CMC
                & .118&      1.14 & .185 & .296
                 &.139 & .894   % 10 components
                &  .112 & \underline{1.05}   % 20 components
                &.148 & .698
                &.158 & .551
           \\
           &   &      $2^{15}$
                & .122  &    \textbf{1.53} % CMC
                & .116 &     1.41 & .185 & .296
                 &.160 & .525    % 10 components
                 &.132 & \underline{1.14}   % 20 components
                 & .132 & 1.12
                 & .139 & .882
           \\ \hline
\end{tabular}
}
\end{table}

Estimations from the flow model, denoted
$\hat{Y}^{\F_{1k}}_R$ in the Table~\ref{tab:Example2d_NO_gmm}, are the same as
$\hat{Y}^{\F,\rm CMC}_R$ from Table~\ref{tab:Example2d_NO_cart}. As clearly seen,
in all cases, the flow model has the best accuracy, often by a large margin
from any GMM (in each row, the best accuracy is \textbf{bolded}, second-best is \underline{underlined};
note that the column $\hat{Y}^{\F_{5k}}_R$ is not considered here).
We did not apply stratification in this comparison. With stratification, the flow model achieves even better accuracy
(as evidenced in Table~\ref{tab:Example2d_NO_cart}). It is worth noting that if
the data distribution were a mixture of Gaussians, GMMs would yield better results.
%\smallskip\par
We also trained a flow model $\hat{Y}^{\F_{5k}}$ on a training set of size 5000. The results are provided in Table~\ref{tab:Example2d_NO_gmm} (gray column).
The performance of $\hat{Y}^{\F_{1k}}$ and $\hat{Y}^{\F_{5k}}$ remains consistent,
showing that 1000 points were \textsl{enough} to train the flow model accurately.
More detailed model performance as the function training dataset size is further investigated in next section.

\section{Sample size performance}\label{sec:sample_perform}
Fig.~\ref{fig:ablation} illustrates the accuracy of estimating 
$I=\Exp f(\X)$ from Example  \examplefirst{} using samples generated from a normalizing flow 
trained on datasets of varying size $n_{\text{train}}$. 
\begin{wrapfigure}{r}{0.6\textwidth}
%\centering
\includegraphics[width=1\linewidth]{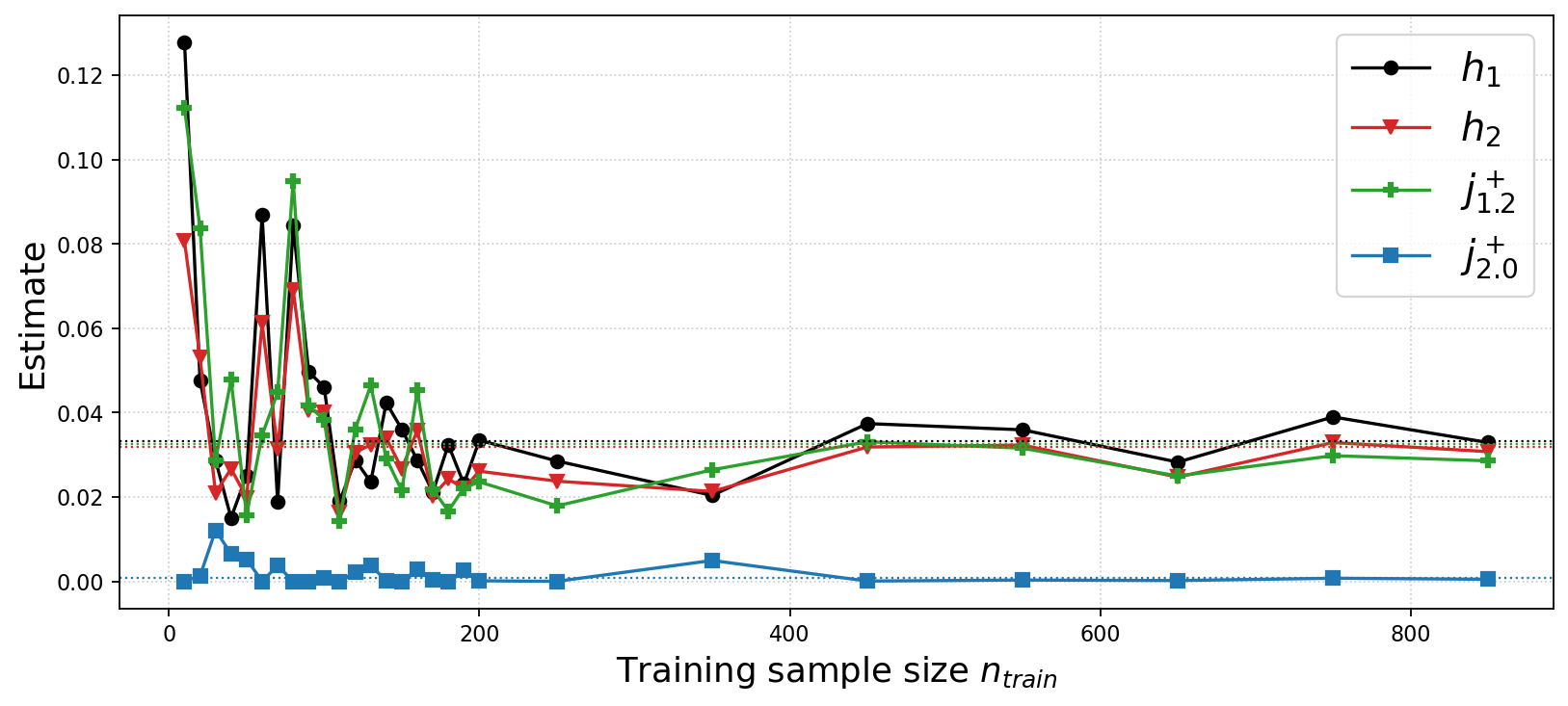}
\caption{Estimates vs.~training sample size $n_{\text{train}}$.}
\label{fig:ablation}
\end{wrapfigure}
Horizontal lines represent the true values of $I$, while the curves correspond 
to flow-based estimates. The results show that already for training samples of several hundreds, 
the flow achieves sufficient accuracy and the estimates stabilize around 
the true values (cf.~Table~\ref{tab:Example2d_NO_cart}).

\section{Related work}
The scenario where the distribution of $\X$ is known, and the goal is to construct an estimator of (\ref{eq:I_ExpfX}) with reduced variance using a neural network, has been studied in the literature.
For example, the \textbf{control variates} variance reduction technique, which aims to find a family of functions $g$ for which $\Exp g(\X)=0$, was investigated in \cite{mira2013zero}. In \cite{wan2018neural}, the authors proposed \textsl{neural control variates} (NCV), where $g$ is designed as a neural network.
A similar approach (with applications in light transport for computer vision), using a normalizing flow for $g$, was studied in \cite{muller2020neural}. An application of Stein's identity to control variates, where $g$ can be a neural network, was presented in \cite{liu2017action}. Another commonly used method in the context of integration is the \textbf{importance sampling} variance reduction technique. In \cite{muller2019neural}, the authors used a nonlinear independent components estimation (NICE) flow model \cite{dinh2014nice} to learn a simple variational distribution.

Stratified sampling has also been used to estimate (\ref{eq:I_ExpfX}) with normalizing flows. In \cite{cundy2020flexible}, the authors tackle a slightly different problem-estimating high-dimensional
integrals and controlling the bias-variance trade-off by interpolating between sampling and variational regimes. They employed the RealNVP model \cite{dinh2016density} with a uniform base distribution. Spherical stratification with adaptive radial splitting was used in
\cite{song2023adaptive}.
Estimating the expectation of a real-valued function of a random variable using only
observations has been studied in the context of learning tail behavior.
The ability of popular normalizing flow models to capture tail probabilities was examined
in \cite{jaini2020tails}, where the authors propose tail-adaptive flows, which perform
better for non-Gaussian tails. This work is extended in \cite{laszkiewicz2022marginal},
where the authors allow for heavy-tailed base distributions on selected coordinates.

\section{Conclusions and limitations}
We introduce a flow-based model that uses a parametrized neural network, allowing
for greater flexibility in modeling unknown data distributions. Our approach is validated
in the context of stratified sampling and outperforms traditional methods in reducing
estimation uncertainty on both synthetic and real-world datasets, including high-dimensional
cases. Overall, this work offers a promising path for improving the accuracy and reliability
of statistical data analysis. In future work, we plan to evaluate the approach across
a wide range of real-life applications. One limitation is that the normalizing flow used
is less accurate for heavy-tailed data distributions. Additionally, the model requires
training, which can be time-consuming.
%\smallskip
\medskip
\par
\textbf{Training time}. 
Training times are available in Appendix \ref{sec:app_training_time}.
%\smallskip
\medskip
\par
\textbf{Future work}.
An in-depth study of alternative  flow base distributions (e.g., those capable of
modeling heavy-tailed distributions) and methods for \textbf{choosing strata}
(note that the optimal split $R_1, \ldots, R_m$ presented in Section \ref{sec:strat_est}
is for \textsl{given} strata)
are left for future work.
In Appendix \ref{sec:app_choosing_strata}, we present preliminary results for a method that ``rotates'' spherical strata based on the estimated function to improve estimation.
For example, rotating the strata from Fig.~\ref{fig:Example2d_norm_flow_16str} (right) by $29.33^{\circ}$ (see the corresponding figure in Appendix \ref{sec:app_choosing_strata}) yields better estimations.
%\smallskip
\medskip
\par
 \textbf{Acknowledgments}. R.T. acknowledges support from the Dioscuri program initiated by the Max Planck Society, jointly managed with the National Science Centre (Poland), and mutually funded by the Polish Ministry of Science and Higher Education and the German Federal Ministry of Research, Technology and Space.

\bibliographystyle{abbrv}
\bibliography{LNCS_library}

\clearpage
\appendix
\thispagestyle{empty}

% Supplementary material: To improve readability, you must use a single-column format for the supplementary material.
\onecolumn

\renewcommand{\thesection}{A.\arabic{section}} % Reformat the section counter to start with "A"
\renewcommand{\thesubsection}{\thesection.\arabic{subsection}} % Subsection includes section number

\setcounter{theorem}{0}
\setcounter{figure}{0}
\setcounter{equation}{0}
\setcounter{table}{0}
\renewcommand{\thetheorem}{A\arabic{theorem}} % Reformat the theorem counter to start with "A"
\renewcommand{\thefigure}{A\arabic{figure}}
\renewcommand{\theequation}{A\arabic{equation}}
\renewcommand{\thetable}{A\arabic{table}}

\section{More on stratified sampling}\label{sec:more_on_str}
As mentioned, the proof of Theorem \ref{thm:optimal} and the decomposition of
a variance of CMC estimator may be found e.g., in Madras (Theorem 3.3).
In next two subsections we provide   different proofs of those.
\subsection{Variance decomposition}
 Recall that $I=\Exp Y$ and that expectation of $I$ on $j$-th strata was defined as $I^j = \Exp[Y |Y \in A^j]$. Define a random variable $J(\omega)=j$ iff $\omega \in A^j$.
 We have
 \begin{eqnarray*}
\Exp \Var(Y|J) & = & \sum_{j=1}^m p_j \Var (Y|J=j) =
\sum_{j=1}^m p_j \sigma_j^2,\\
\Var\Exp (Y|J) & = & \Exp( \Exp(Y|J)-\Exp(\Exp(Y|J)))^2
=\Exp(\Exp(Y|J)-EY)^2 \\
&=&\Exp(\Exp(Y|J)-I)^2=\sum_{j=1}^m (E(Y|J=j)-I)^2=\sum_{j=1}^m p_j(I^j-I)^2.
\end{eqnarray*}
Using the \textsl{law of total variance} $\Var Y=\Exp \Var(Y|J)+\Var\Exp(Y|J)$, we may decompose a variance into \textbf{within} and \textbf{between-stratum} components
given in (\ref{eq:within_between_stratum_var}), i.e.,
\begin{equation}\label{eq:within_between_stratum_var}
\Var Y = \sum_{j=1}^m p_j \sigma_j^2 + \sum_{j=1}^m p_j(I^j-I)^2.
\end{equation}

In below compuations we ignore    issues
with roundings --  e.g., in practice we take $R_j=\lceil{R p_j\rceil}$.

\begin{lemma}\label{eq:app_lem_decomp}
 We have a following decomposition of a variance of CMC estimator involving
 variance of proportional allocation estimator:
$$
\begin{array}{llll}
\displaystyle
\Var(\hat{Y}^{\rm CMC}_R) &=& \displaystyle
\Var(\hat{Y}^{\rm pa}_R)+{1\over R} \sum_{j=1}^m p_j (I^j-I)^2\geq \Var(\hat{Y}^{\rm pa}_R).
\end{array}
$$
\end{lemma}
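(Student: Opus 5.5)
The plan is to combine the explicit variance formulas for the two estimators with the within/between-stratum decomposition (\ref{eq:within_between_stratum_var}) just established.

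First, I will write down both variances. For the CMC estimator, the $Y_j$ are iid, so
\[
\Var(\hat{Y}^{\rm CMC}_R)=\frac{1}{R}\Var Y .
\]
For proportional allocation, substituting $R_j=Rp_j$ into the general stratified variance $\sum_{j=1}^m \frac{p_j^2}{R_j}\sigma_j^2$ from Section~\ref{sec:strat_est} gives
\[
\Var(\hat{Y}^{\rm pa}_R)=\frac{1}{R}\sum_{j=1}^m p_j\sigma_j^2 .
\]
This uses independence of the samples across strata, and rounding of $R_j$ is ignored as stated above.

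Next, I will divide the law-of-total-variance decomposition (\ref{eq:within_between_stratum_var}) by $R$. This yields
\[
\frac{1}{R}\Var Y=\frac{1}{R}\sum_{j=1}^m p_j\sigma_j^2+\frac{1}{R}\sum_{j=1}^m p_j(I^j-I)^2 .
\]
The first term on the right is exactly $\Var(\hat{Y}^{\rm pa}_R)$ and the left side is $\Var(\hat{Y}^{\rm CMC}_R)$, so the claimed identity follows. The inequality holds because the extra term is a sum of nonnegative quantities, each $p_j\ge 0$ times a square.

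The only step needing care is the between-stratum component $\Var\,\Exp(Y|J)$. In the displayed computation above, the weights $p_j$ are dropped at one stage. I will recompute it directly: $\Exp(Y|J)$ takes the value $I^j$ with probability $p_j$, and its mean is $I$. Hence its variance is $\sum_j p_j(I^j-I)^2$, which matches the weighted form needed in the statement.

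I will also note that strata with $p_j=0$ contribute nothing to either sum and can be discarded, so $\sigma_j^2$ and $I^j$ are well defined for every stratum that remains. With that, the argument is routine and I expect no real obstacle.
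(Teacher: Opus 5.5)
Your proposal is correct and follows the paper's proof essentially verbatim. Like the paper, you substitute $R_j=Rp_j$ to get $\Var(\hat{Y}^{\rm pa}_R)=\frac{1}{R}\sum_{j=1}^m p_j\sigma_j^2$, divide the within/between decomposition (\ref{eq:within_between_stratum_var}) by $R$, and identify the terms. You are also right that the paper's intermediate step for $\Var\Exp(Y|J)$ drops the weights $p_j$ (a typo), and your direct recomputation giving $\sum_{j=1}^m p_j(I^j-I)^2$ is the correct justification.
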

\begin{proof}
 Let us introduce the following notation. For $\x=(x_1,\ldots,x_m)$
(here $x_i> 0$ and $x_1+\ldots +x_m=1$)   define
\begin{equation}\label{eq:sigma.strata}
 \sigma^2_{\rm str}(\x):=\sum_{i=1} {p_j^2\over x_j}\sigma^2_j.
\end{equation}
In proportional allocation we have $R_j=p_jR$, thus
$$\Var (\hat{Y}^{\rm pa}_R) = {1\over R}\sigma^2_{\rm str}(R_1/R,\ldots,R_m/R)={1\over R}\sum_{j=1}^m p_j\sigma_j^2.$$
Dividing both sides of (\ref{eq:within_between_stratum_var}) we have
$$
{1\over R}\Var Y = {1\over R}\sum_{j=1}^m p_j \sigma_j^2 + {1\over R}\sum_{j=1}^m p_j(I^j-I)^2
$$
i.e.,
$$
\Var(\hat{Y}^{\rm CMC}_R)=
\Var(\hat{Y}^{\rm pa}_R)+{1\over R}\sum_{j=1}^m p_j \left(I^j-I\right)^2.
$$
\end{proof}

 \subsection{Optimal allocation -- proof}

Let us start with describing an intuition on why the proportional allocation may not be optimal: for some strata $A^j$ we may have
 that the variability (sample variance) of replications of $Y^j$ is smaller
 than the variability of replications of $Y^{j'}$ sampled from strata $A^{j'}$.
 It is reasonable  to  ``spend'' more replications for strata $A^{j'}$ than $A^j$.
 The following theorem
 makes this intuition precise and provides the best possible (for fixed strata) split of~$R$.
 \begin{theorem}\label{thm:optimal}
 Let $m$ strata $A^1,\ldots,A^m$ and total budget $R$ of simulations be fixed.
 Let $\hat{Y}^{\rm str}_R$ be a~stratified estimator with a general split,
 whereas let $\hat{Y}^{\rm opt}_R$ be the stratified estimator with \textsl{optimal split}
 \begin{equation}\label{eq:split}
R_j={p_j\sigma_j\over \sum_{i=1}^m p_i \sigma_i} R, \quad j=1,\ldots,m.
 \end{equation}
Then  we have
 $\displaystyle \Var(\hat{Y}^{\rm opt}_R)\leq \Var(\hat{Y}^{\rm str}_R).$
 \end{theorem}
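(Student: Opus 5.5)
The plan is to treat the variance of a stratified estimator as a function of the allocation fractions and minimize it directly, using the Cauchy--Schwarz inequality. With the notation $\sigma^2_{\rm str}(\x)$ from (\ref{eq:sigma.strata}), a general split $R=R_1+\ldots+R_m$ corresponds to $x_j=R_j/R$, with $x_j>0$ and $\sum_j x_j=1$. Then
$$\Var(\hat{Y}^{\rm str}_R)=\sum_{j=1}^m \frac{p_j^2}{R_j}\sigma_j^2=\frac1R\,\sigma^2_{\rm str}(\x).$$
So it suffices to show that $\sigma^2_{\rm str}(\x)$ is minimized over the simplex at $x_j^*=p_j\sigma_j/\sum_i p_i\sigma_i$, which is exactly the split (\ref{eq:split}). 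As elsewhere in the appendix, rounding of the $R_j$ is ignored.

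First I evaluate the objective at the proposed point. Substituting $x_j^*$ gives
$$\sigma^2_{\rm str}(\x^*)=\sum_j \frac{p_j^2\sigma_j^2}{p_j\sigma_j}\sum_i p_i\sigma_i=\Big(\sum_j p_j\sigma_j\Big)^2,$$
so $\Var(\hat{Y}^{\rm opt}_R)=\frac1R\big(\sum_j p_j\sigma_j\big)^2$.

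Second, I show that this value is a lower bound for every admissible $\x$. Write $p_j\sigma_j=\frac{p_j\sigma_j}{\sqrt{x_j}}\cdot\sqrt{x_j}$ and apply Cauchy--Schwarz:
$$\Big(\sum_j p_j\sigma_j\Big)^2\le \Big(\sum_j \frac{p_j^2\sigma_j^2}{x_j}\Big)\Big(\sum_j x_j\Big)=\sigma^2_{\rm str}(\x).$$
The last equality uses $\sum_j x_j=1$. Multiplying by $1/R$ gives $\Var(\hat{Y}^{\rm opt}_R)\le\Var(\hat{Y}^{\rm str}_R)$. Equality holds exactly when $\sqrt{x_j}\propto p_j\sigma_j/\sqrt{x_j}$, that is, when $x_j\propto p_j\sigma_j$. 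This also identifies (\ref{eq:split}) as the unique minimizer whenever not all $\sigma_j$ vanish.

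The only real obstacle is degeneracy. If some $\sigma_j=0$, the formula (\ref{eq:split}) assigns $R_j=0$. The corresponding term $p_j^2\sigma_j^2/R_j$ should then be read as $0$, and in practice one still needs a single sample to estimate $I^j$. With this convention the Cauchy--Schwarz step is unaffected, since the inequality only involves strata with $x_j>0$. If all $\sigma_j=0$, both variances are zero and the claim is trivial.

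An alternative route is a Lagrange multiplier argument. Minimize $\sum_j p_j^2\sigma_j^2/x_j$ subject to $\sum_j x_j=1$; the first-order condition gives $x_j\propto p_j\sigma_j$, and convexity of $1/x$ on $(0,\infty)$ makes this stationary point the global minimum. I would present the Cauchy--Schwarz version because it is shorter and yields optimality without separately checking second-order conditions.
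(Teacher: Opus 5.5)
Your proposal is correct and follows the paper's own proof. Like the paper, you compute $\Var(\hat{Y}^{\rm opt}_R)=\frac1R\bigl(\sum_j p_j\sigma_j\bigr)^2$ and then apply Cauchy--Schwarz to $\sum_j \sqrt{x_j}\cdot p_j\sigma_j/\sqrt{x_j}$ together with $\sum_j x_j=1$; your remarks on the equality case and on strata with $\sigma_j=0$ are extras the paper omits, and they do not change the argument.
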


\begin{proof}
We may rewrite rhs of variance given in (\ref{eq:estim_str}) as
\begin{eqnarray}\label{eq:stratified_var}
{1\over R}\left(p_1^2\frac{R}{R_1}\sigma_1^2+\ldots+p_m^2\frac{R}{R_m}\sigma_m^2\right)
&=&{1\over R}\left(\frac{p_1^2}{R_1/R}\sigma_1^2+\ldots+\frac{p_m^2}{R_m/R}\sigma_m^2\right)\\
&=&{1\over R}\sigma^2_{\rm str}((R_1/R,\ldots,R_m/R))
      \;.\label{eq.war:str}
\end{eqnarray}
Denote $D=\sum_{j=1}^m p_j\sigma_j$.
The  stratified estimator with split given in (\ref{eq:split}),
i.e., with $ $   $R_i={p_j\sigma_j\over D} R$ is denoted by $\hat{Y}^{\rm opt}_R$.
Recalling (\ref{eq:stratified_var}), using
(\ref{eq:sigma.strata}) and $R_i/R={p_j\sigma_j\over D}$ we have
$$\Var\hat{Y}^{\rm opt}_R={1\over R} \sigma^2_{\rm str}((R_1/R,\ldots,R_m/R))
={1\over R}\sum_{j=1}^m {p_j^2\over p_j\sigma_j/D} \sigma_j^2
={1\over R}D \sum_{j=1}^m p_j\sigma_j={1\over R} D^2.
$$
\noindent
Let $\hat{Y}^{\rm str}_R$ be some general stratified
estimator with $R_i=x_i R$, where $x_1+\ldots+ x_m=1, x_j\geq 0$.
Denote  $\x=(x_1,\ldots,x_m)$.
Let
$$\boldsymbol{\gamma}^*=(\gamma_1^{*},\ldots, \gamma_m^{*}), \quad \gamma_j^{*}=\frac{p_j\sigma_j}{D}.$$
Using Cauchy-Schwarz inequality, for any $\x$ such that $x_1+\ldots+x_m=1, x_j>0$
we have
\begin{eqnarray*}
\Var \hat{Y}^{\rm opt}_R &=&{1\over R}D^2={1\over R}\left(\sum_{j=1}^m p_j\sigma_j\right)^2  ={1\over R} \left(\sum_{j=1}^m\sqrt{x_j}\frac{p_j\sigma_j}{\sqrt{x_j}}\right)^2\\
&\le & {1\over R} \sum_{k=1}^m x_k\sum_{j=1}^m\frac{p_j^2\sigma_j^2}{x_j}={1\over R}\sum_{j=1}^m\frac{p_j^2}{x_j}\sigma_j^2={1\over R}\sigma^2_{\rm str}(\x)=\Var \hat{Y}^{\rm str}_R\;,
\end{eqnarray*}
thus, the variance is minimial for $\x=\boldsymbol{\gamma}^*$.
\end{proof}
 \section{Spherical stratification w.r.t. to angles}\label{app:sampl_hk}
In Section \ref{sec:method2} we shortly pointed out how to perform spherical stratification w.r.t. angles.
Recall, the main idea is to sample the point uniformly from sphere the hypersphere
$$S_{d-1}=\left\{\x=(x_1,\ldots,x_d): \sum_{j=1}^d x_j^2=1\right\}.$$
Nothe that a point
$\x\in S_{d-1}$ has spherical coordinates
\begin{eqnarray*}
 x_1 &=&\cos \phi_1,\quad  x_2   =   \sin \phi_1 \cos \phi_2,  \ldots, \nonumber\\
 x_{d}  & =&   \sin \phi_1 \cdots  \sin \phi_{d-3} \sin \phi_{d-2} \sin \theta,\label{eq:spher_coord_old}
\end{eqnarray*}
where $\phi_1,\ldots,\phi_{d-2}\in[0,\pi)$ and  $\theta \in[0,2\pi).$
 The transformation between Cartesian and spherical coordinates has Jacobian
  $$J(\phi_1,\ldots,\phi_{d-2},\theta)=\sin^{d-2}\phi_1 \sin^{d-3}\phi_2\cdots \sin\phi_{d-2}.$$
 The area of $S_{d-1}$ is $2\pi^{d/2}/\Gamma(d/2)$, thus the density of a uniform distribution on $S_{d-1}$
 is
 \begin{equation}\label{eqn:density_on_sphere2}
 g(\theta,\phi_1,\ldots,\phi_{d-2})={\Gamma(d/2)\over 2\pi^{d/2}}
 \sin^{d-2}\phi_1 \sin^{d-3}\phi_2\cdots \sin\phi_{d-2}.
 \end{equation}

Using the identity
$c_k=\int_0^{\pi} \sin^k(x)\;dx = B\left(\frac{1}{2}, \frac{k+1}{2}\right), $
where $B(a, b)$ is a~beta function,
one may decompose (\ref{eqn:density_on_sphere2}) into
$$
\frac{1}{2\pi}\prod_{k=2}^{d-1}\frac{1}{c_{d-k}}\sin^{d-k}(\phi_{k-1}) = h_0(\theta) \prod_{k=2}^{d-1}h_{d-k}(\phi_{k-1}),
$$
where $h_0$ is a~density of the uniform distribution on $(0,2\pi)$ and
 $$h_k(\phi)={1\over c_k} \sin^k(\phi), \phi\in(0,2\pi), \quad c_k=\int_\pi \sin^k(x)\;dx = B\left({1\over 2},{k+1\over 2}\right),$$
 Since we have a product,
it means that we may sample from a uniform distribution on $S_{d-1}$ by
sampling $\theta, \phi_1,\ldots,\phi_{d-2}$ with distributions $h_0,h_1,\ldots,h_{d-2}$ correspondingly.
sampling from $h_k$ for general $k$ is not straightforward, mainly because there is no closed formula for $\int_0^t \sin^k(x)\;dx$ and thus for a distribution function of $h_k$  (what would allow for using an inverse method). We will use \textbf{acceptance-rejection} (AR) method.
Recall, to sample from $h_k(x)$ we may sample from another density $q_k(x)$ (also on $(0,\pi)$) assuming we know a constant $C_k$ such that
$h_k(x)\leq C_k q_k(x).$
We may simply take $q_k(x)$ to be a uniform distribution on $(0,\pi)$, i.e., $q_k(x)=1/\pi$.
Since $h_k(x)=c_k\sin^k(x)\leq c_k$, we may simply take $C_k=\pi/c_k$.
Finally, the procedure is as follows: We generate $U, U'$ iid $\mathcal{U}(0,1)$,
set $T=U'\pi$ and accept it if $C_kU/\pi\leq h_k(T)$ (note that it may be simplified
to $U\leq \sin^k(T)$). We repeat the procedure until acceptance.
 The procedure is provided in Algorithm \ref{alg:AR}.   Note that line
 \ref{alg:AR_until} can be simplified to:
 \textbf{until} $U\leq \sin^k(T).$
\medskip\par
  \begin{algorithm}[H]
 \caption{AR Algorithm for sampling from $h_k$.}
 \label{alg:AR}
 \begin{algorithmic}[1]
   \Require constant $c_k$
 \State Set $C_k=\pi/c_k$
   \Repeat
 \State Generate $U'\sim \mathcal{U}(0,1)$ and set $T=U'\pi$\label{alg:AR_sampl}
 \State Generate $U\sim\mathcal{U}(0,1)$ (independent from $U'$)
  \Until{$C_k U /\pi \le h_k(T) $ }\label{alg:AR_until}
 \Return $T$
 \end{algorithmic}
 \end{algorithm}
\medskip\par\noindent
To conduct \textbf{stratified sampling } we
proceed with  AR algorithm with the change  that
we sample $U'$ from some strata \eqref{eq:strata_A},
where $a_0=0, a_{m_0}=1$ and $a_j, j=1\ldots,m_0=1$ are chosen  so that
the strata are equally likely, i.e.,
${1\over c_k}\int_{a_{j-1}}^{a_j} \sin^k(x)\; dx={1\over m_0}.$
Values of $a_j$ are found by numerical integration.
\smallskip\par\noindent
 \textbf{Expected running time of AR Algorithm \ref{alg:AR}.}
 Acceptance-rejection expected running time is $C_k$, which is $\pi/c_k$.
 For example for $k=10$ (recall that $k$ is at most $d-2$) on average  $C_{10}=\pi/.7731=4.063$
 iterations of AR are needed (to sample  one random point), whereas  e.g.,  for $k=200$ on average  $C_{200}=\pi/.1770=17.746$ iterations
 are needed. Thus, in practice, the running time of AR algorithm is not problematic.

\section{Details on  high-dimensional stratification}\label{sec:app_more_high}
We aim to sample multiviariate standard normal vector $(Z_1,\ldots,Z_d)$ in a stratify way using $m$
equally likely strata.
The cartesian method M1 splits each coordinates into $m_0$ strata, see Eq. (\ref{eq:strata_A}).
The spherical method M2 splits radius into $m_r$ strata and each angle into $m_0$ strata.
Thus, if decide to split each dimension in M1 method into $m_0=2$ strata or each angle into $m_0=2$ strata,
we have exponential ($2^d, m_r2^d)$ number of total strata, which is prohibitive for large $d$ (in our Examples \examplelarged{} and \examplelargestd{}
we had $d=30$ and $d=128$ respectively). Here we describe two methods mitigating this problem.

\subsection{$\textrm{M}\texttt{rad}$ method: stratification w.r.t radius}
This is a special case of the spherical method: independently of $d$, in spherical method we may decide to stratify only w.r.t
radius
$D$. We thus  set $m=m_r>1$ and $m_0=1$ and perform the procedure described in the first paragraph of section \ref{sec:method2}. In other words, we split
$$\mathbb{R}^d = A^1 \cup A^2 \cup \ldots \cup A^m$$
into $m$ equally probable strata (e.g., in Fig. \ref{fig:Example2d_norm_flow_16str} top-right we have $m=4$ strata
if we disregard splitting w.r.t angles). Sampling  multivariate standard normal random vector from strata $A^j$, i.e.,
$$\mathbf{Z}^j \stackrel{\mathcal{D}}{=}  (\mathbf{Z} | \mathbf{Z}\in A^j)$$
is given by the procedure:
\begin{itemize}[leftmargin=0.5cm,align=left, itemsep=0.2cm]
 \item sample $\mathbf{Z}=(Z_1,\ldots,Z_d)$ with  $d$ iid standard normal $\calN(0,1)$ random variables,
 \item normalize $\mathbf{Z}$ to get $ \mathbf{Z}'=\left( {Z_1\over ||\mathbf{Z}||},\ldots , {Z_1\over ||\mathbf{Z}||}\right) $,
 \item sample $U\sim\mathcal{U}(0,1)$,
 \item Compute $(D^j)^2=F^{-1}_{\chi^2_d}\left({j\over m_r} + {1\over m_r} U\right)$,
 \item Set $\displaystyle \mathbf{Z}^j=(D^j Z_1',\ldots, D^j Z_d')$.
\end{itemize}
We denote the resulting estimators as $\hat{Y}_{R, m_r=m}^{\mathcal{F},\rm opt, \textrm{M}\texttt{rad}}$ (proportional allocation)  and $\hat{Y}_{R, m_r=m}^{\mathcal{F},\rm opt, \textrm{M}\texttt{rad}}$ (optimal allocation).
Note that the method does not depend on a function $f$ we aim to estimate (recall, the goal is to estimate $I=\Exp f(\X)$).

\subsection{$\textrm{M}\texttt{High3}$ and $\textrm{M}\texttt{Rand3}$ methods: stratifying three chosen coordinates}
Using an obvious decomposition
$\mathbb{R}^d = \mathbb{R} \times \mathbb{R}\ldots \times \mathbb{R}$ ($d$ times), we will split $\eta=3$ chosen \textsl{dimensions} (coordinates)
into $m_0$ strata, obtainig thus in total $m=m_0^3$ strata. Say, the chosen dimensions (we describe below how to choose them)
are $i,j,k$. Let each $A_i^s, A_j^s, A_k^s, s=1,\ldots,m_0$ be the a split of $\mathbb{R}$ into $m_0$ intervals,
such that $\Prob(Z\in A_i^s)=1/m_0$, i.e., as it was given in (\ref{eq:strata_A}):
$$\begin{array}{lll}
 A_i^1=(a_0,a_1],&  \ldots,& A_i^{m_0}=(a_{m_0-1},a_{m_0}],\\[6pt]
 A_j^1=(a_0,a_1],&  \ldots,& A_j^{m_0}=(a_{m_0-1},a_{m_0}],\\[6pt]
 A_k^1=(a_0,a_1],&  \ldots,& A_k^{m_0}=(a_{m_0-1},a_{m_0}],
  \end{array}
$$
where
$$a_0=-\infty, \quad a_1=\Phi^{-1}\left({1\over m_0}\right), \quad a_2=\Phi^{-1}\left({2\over m_0}\right), \quad \ldots,\quad a_{m_0}=\infty.$$
\smallskip\par
Denoting ($A_i^{s_i}, A_j^{s_j}, A_k^{s_k}$ are on coordinates $i,j,k$)
$$A_{ijk}^{s_i,s_j,s_k}=\mathbb{R}\times \mathbb{R} \times \ldots A_i^{s_i} \times \mathbb{R} \ldots \times A_i^{s_k} \times \mathbb{R} \ldots \times A_k^{s_k} \times \mathbb{R}\ldots \times \mathbb{R}$$
we have $\Prob(\mathbf{Z}\in A_{ijk}^{s_i,s_j,s_k})=1/(m_0^3)=1/m$. In other words, we stratified $R^d$ into
disjoint equally probable regions
$$\mathbb{R}^d=\bigcup_{(s_i, s_j, s_k)\in \{1,\ldots,m_0\}^d} A_{ijk}^{s_i,s_j,s_k}.$$
Sampling $\mathbf{Z}=(Z_1,\ldots,Z_d)$ from strata $A_{ijk}^{s_i,s_j,s_k}$
is straightforward: we sample $Z_t\sim \mathcal{N}(0,1)$ for $t\notin\{i,j,k\}$ and sample $Z_i$ from $A_i^{s_i}$,
$Z_j$ from $A_j^{s_j}$ and $Z_k$ from $A_k^{s_k}$ as described in section \ref{sec:method1}.
For example, to sample $Z_i$ from  $A_i^{s_i}$ we sample $U\sim\mathcal{U}(0,1)$ and set
$$Z_i = \Phi^{-1}(V^{s_i}), \quad \textrm{where} \quad  V^{s_i}=a_{s_i-1} + (a_{s_i}-a_{s_i-1}) U.$$

The way of choosing $i,j,k$ determines the method.
\medskip\par
\begin{itemize}[leftmargin=0.5cm,align=left, itemsep=0.2cm]
 \item \textbf{Method }$\textrm{M}\texttt{Rand3}$. We simply choose (different) $i,j,k$ uniformly at random from $\{1,\ldots,d\}$.

 \item \textbf{Method }$\textrm{M}\texttt{High3}$.
 Here, the choice depends on a final function we are to estimate ($I=\Exp f(\mathbf{X})$).
 First, we separately consider stratification w.r.t to one dimension only, i.e., we fix $b$ and  consider stratification
 $$\mathbb{R}^d=\bigcup_{s=1}^{m_0} A_{b,s}, \quad \textrm{where} \quad A_{b,s}=\mathbb{R}\times\ldots\times \mathbb{R} \times \underbrace{A_b^s}_{b\textrm{-th dimension}}\times \mathbb{R}\times\ldots\times \mathbb{R}.$$
 Here $A_i^s=(a_{s-1}, a_s)$.
 \smallskip\par
 Now we perform $R_0<R$ pilot simulations and use stratification with proportional allocation
 to obtain the estimator $\hat{Y}_{R_0}^{\mathcal F, \textrm{dim}=b}$. Denote its standard deviation
 by $\sigma^{\textrm{dim}=b}=\sqrt{\Var\hat{Y}_{R_0}^{\mathcal F, \textrm{dim}=b}}$.
 Now we consider three different $b$'s for which $\sigma^{\textrm{dim}=b}$ were largest -- we take these
 as $i,j,k$.
\end{itemize}

Note that $\textrm{M}\texttt{rad}$ and $\textrm{M}\texttt{Rand3}$ do not depend on $f$, whereas
$\textrm{M}\texttt{High3}$ does. All our experiments were done for $\eta=3$ chosen coordinates,
of course it may be other number -- it must be relatively small so that total number of strata $m_0^\eta$ is not prohibitive.
Recall also that in all experiments we present averages from 10 simulations. In particular,
for method $\textrm{M}\texttt{Rand3}$, in each simulation $i,j,k$ were sampled independently.
%\title{Reducing Estimation Uncertainty \\ Using  Normalizing Flows and Stratification}

\section{More details on experimental results}

\subsubsection*{Model architecture and training details.}
The CNF component for our model was inspired by FFJORD. Our implementation is based on the original code provided by the authors\footnote{\url{https://github.com/rtqichen/ffjord}}. We use two stacked blocks of CNFs, each composed of two
hidden concatsquash layers, 16 units (\texttt{dims}) (Example \examplesecond{} and \examplemultit{}), 64 units (Example \examplefirst{} and \examplereal{}), 256 units (Example \examplesdf{} and \examplelargestd{}) or 128 units (Example \examplelarged{})
each, with $\tanh({\cdot})$ activation. After each of the units, we apply the batch-norm operation. We train the model using an ADAM optimizer with a learning rate
ranging from  $0.0001$ to $0.01$ and  number of epochs ranging from several hundreds to  $5000$ (depending on example, details can be checked in code).

\noindent\textbf{Remark.}
In all tables presented in this appendix that consist of parts (a) and (b),
the highlighting conventions (\textbf{bold} for the best value and \underline{underlined} for the second best)
are applied jointly across both parts, i.e., comparisons are made between all rows appearing in (a) and (b).

\subsection{Example \examplefirst{}}\label{sec:app_Example1}

We provide here more numerical results.
Recall, in Table \ref{tab:Example2d_NO_cart} we provided results for cartesian stratification,
for selected model  -- in Table \ref{tab:Example2d_NO_cart_full} we provide results
for more models including both, \texttt{AC} and \texttt{SD${}^*$}.
Moreover, in Table \ref{tab:Example2d_NO_spher} we provide results for spherical stratification.

\begin{table*}[h]
\caption{(Example 1) Numerical results for \textbf{cartesian} (method M1) stratification.}
\label{tab:Example2d_NO_cart_full}
\begin{subtable}{\textwidth}
\centering
{\small
\begin{tabular}{|p{.45cm}|p{0.75cm}|p{.7cm}p{.7cm}p{.85cm}|p{.45cm}|p{.73cm}p{.73cm}p{.85cm}|p{.73cm}p{.73cm}p{.85cm}|p{.73cm}p{.73cm}p{.85cm}|}\hline
% \begin{tabular}{|p{.35cm}|p{0.7cm}|p{.7cm}p{.7cm}p{.8cm}|p{.3cm}|p{.7cm}p{.7cm}p{.8cm}|p{.7cm}p{.7cm}p{.8cm}|p{.7cm}p{.7cm}p{.8cm}|p{.7cm}p{.7cm}p{.8cm}|p{.7cm}p{.7cm}p{.8cm}|}\hline
  $f$ &   $I$& \multicolumn{3}{c|}{ \small{$\hat{Y}_{1000}^{\rm obs}$}}& $R$
    &\multicolumn{3}{c|}{\small{$\hat{Y}^{\F, \rm CMC}_R$}}&
     \multicolumn{3}{c|}{\small{$\hat{Y}^{\F, \rm  prop, M1}_{R,m=2\times2}$}}&
  \multicolumn{3}{c|}{\small{$\hat{Y}^{\F, \rm  opt, M1}_{R,m=2\times2}$}}\\
    \hline
           & &
              \texttt{EST} & \texttt{SD}${}^*$ & \texttt{ACC} &
                   &
             \texttt{EST} & \texttt{SD}${}^*$ & \texttt{ACC}&
              \texttt{EST} & \texttt{SD}${}^*$ &\texttt{ACC}&
              \texttt{EST} &\texttt{SD}${}^*$& \texttt{ACC}
    \\
    \hline
                \multirow{2}{*}{$j^{+}_{0.5}$} & \multirow{2}{*}{.3149} & \multirow{2}{*}{.3130} & \multirow{2}{*}{1.466}  & \multirow{2}{*}{2.216} &   $2^{12}$
                & .3037 &  .7184  & 1.683 % CMC
                & .3080 &  .6486  & \textbf{1.795} % Prp m 4
                & .3023 &  .6199  & 1.447 % Opt m 4
           \\
           &   &   &   &   &   $2^{15}$
                & .3070 &  .2548  & \underline{1.633} % CMC
                & .3065 &  .2295  & 1.576 % Prp m 4
                & .3055 &  .2219  & 1.537 % Opt m 4
           \\ \hline
           \multirow{2}{*}{$j^{+}_{1.2}$} & \multirow{2}{*}{.0324} & \multirow{2}{*}{.0270} & \multirow{2}{*}{.5126}  & \multirow{2}{*}{.7757} &   $2^{12}$
                & .0314 &  .2722  & 1.247 % CMC
                & .0323 &  .2655  & 1.360 % Prp m 4
                & .0316 &  .1674  & \underline{1.626} % Opt m 4
           \\
           &   &   &   &   &   $2^{15}$
                & .0323 &  .0976  & 1.809 % CMC
                & .0322 &  .0938  & 1.837 % Prp m 4
                & .0320 &  .0600  & \underline{2.144} % Opt m 4
           \\ \hline
             \multirow{2}{*}{$j^{+}_{2.0}$} & \multirow{2}{*}{.0008} & \multirow{2}{*}{.0000} & \multirow{2}{*}{.0000}  & \multirow{2}{*}{.0000} &   $2^{12}$
                & .0008 &  .0443  & \underline{.6127} % CMC
                & .0011 &  .0498  & .4672 % Prp m 4
                & .0012 &  .0271  & .5694 % Opt m 4
           \\
           &   &   &   &   &   $2^{15}$
                & .0010 &  .0178  & .6684 % CMC
                & .0011 &  .0179  & .6904 % Prp m 4
                & .0010 &  .0102  & \underline{.7675} % Opt m 4
           \\ \hline
              \multirow{2}{*}{$h_1$} & \multirow{2}{*}{.0333} & \multirow{2}{*}{.0277} & \multirow{2}{*}{.6144}  & \multirow{2}{*}{.7722} &   $2^{12}$
                & .0314 &  .3102  & 1.408 % CMC
                & .0337 &  .3025  & 1.363 % Prp m 4
                & .0328 &  .1878  & 1.447 % Opt m 4
           \\
           &   &   &   &   &   $2^{15}$
                & .0320 &  .1109  & \underline{1.507} % CMC
                & .0316 &  .1057  & 1.330 % Prp m 4
                & .0321 &  .0671  & 1.456 % Opt m 4
           \\ \hline
           \multirow{2}{*}{$h_2$} & \multirow{2}{*}{.0318} & \multirow{2}{*}{.0275} & \multirow{2}{*}{.3722}  & \multirow{2}{*}{.8679} &   $2^{12}$
                & .0303 &  .1937  & 1.349 % CMC
                & .0317 &  .1837  & 1.392 % Prp m 4
                & .0310 &  .1083  & \underline{1.593} % Opt m 4
           \\
           &   &   &   &   &   $2^{15}$
                & .0307 &  .0692  & 1.504 % CMC
                & .0306 &  .0638  & 1.488 % Prp m 4
                & .0309 &  .0383  & \textbf{1.597} % Opt m 4
           \\ \hline
            \multirow{2}{*}{$h_3$} & \multirow{2}{*}{.1230} & \multirow{2}{*}{.1014} & \multirow{2}{*}{1.718}  & \multirow{2}{*}{.7548} &   $2^{12}$
                & .1165 &  1.072  & 1.100 % CMC
                & .1255 &  1.236  & 1.091 % Prp m 4
                & .1203 &  .6756  & \underline{1.461} % Opt m 4
           \\
           &   &   &   &   &   $2^{15}$
                & .1222 &  .6208  & 1.537 % CMC
                & .1199 &  .4325  & 1.613 % Prp m 4
                & .1199 &  .2767  & 1.622 % Opt m 4
           \\ \hline
\end{tabular}

}\vspace{0.4em}
\caption*{(a) Results for $m=2\times2$}
\end{subtable}

\vspace{0.6em}

\begin{subtable}{\textwidth}
\centering
{
% \begin{tabular}{|p{.35cm}|p{0.7cm}|p{.7cm}p{.7cm}p{.8cm}|p{.3cm}|p{.7cm}p{.7cm}p{.8cm}|p{.7cm}p{.7cm}p{.8cm}|p{.7cm}p{.7cm}p{.8cm}|p{.7cm}p{.7cm}p{.8cm}|p{.7cm}p{.7cm}p{.8cm}|}\hline
\begin{tabular}{|p{.45cm}|p{0.75cm}|p{.7cm}p{.7cm}p{.85cm}|p{.45cm}|p{.73cm}p{.73cm}p{.85cm}|p{.73cm}p{.73cm}p{.85cm}|p{.73cm}p{.73cm}p{.85cm}|}\hline
  $f$ &   $I$& \multicolumn{3}{c|}{ \small{$\hat{Y}_{1000}^{\rm obs}$}}& $R$
    &\multicolumn{3}{c|}{\small{$\hat{Y}^{\F, \rm CMC}_R$}}&
        \multicolumn{3}{c|}{\small{$\hat{Y}^{\F, \rm  prop, M1}_{R,m=4\times4}$}}&
  \multicolumn{3}{c|}{\small{$\hat{Y}^{\F, \rm  opt, M1}_{R,m=4\times4}$}}\\
    \hline
           & &
              \texttt{EST} & \texttt{SD}${}^*$ & \texttt{ACC} &
                   &
             \texttt{EST} & \texttt{SD}${}^*$ & \texttt{ACC}&
              \texttt{EST} & \texttt{SD}${}^*$ &\texttt{ACC}&
              \texttt{EST} &\texttt{SD}${}^*$& \texttt{ACC}
    \\
    \hline
                \multirow{2}{*}{$j^{+}_{0.5}$} & \multirow{2}{*}{.3149} & \multirow{2}{*}{.3130} & \multirow{2}{*}{1.466}  & \multirow{2}{*}{2.216} &   $2^{12}$
                & .3037 &  .7184  & 1.683 % CMC
                & .3080 &  .4303  & \underline{1.727} % Prp m 16
                & .3054 &  .3123  & 1.526 % Opt m 16
           \\
           &   &   &   &   &   $2^{15}$
                & .3070 &  .2548  & \underline{1.633} % CMC
                & .3078 &  .1525  & \textbf{1.651} % Prp m 16
                & .3064 &  .1112  & 1.568 % Opt m 16
           \\ \hline
           \multirow{2}{*}{$j^{+}_{1.2}$} & \multirow{2}{*}{.0324} & \multirow{2}{*}{.0270} & \multirow{2}{*}{.5126}  & \multirow{2}{*}{.7757} &   $2^{12}$
                & .0314 &  .2722  & 1.247 % CMC
                & .0324 &  .2153  & 1.567 % Prp m 16
                & .0324 &  .0726  & \textbf{1.819} % Opt m 16
           \\
           &   &   &   &   &   $2^{15}$
                & .0323 &  .0976  & 1.809 % CMC
                & .0324 &  .0760  & 1.846 % Prp m 16
                & .0322 &  .0259  & \textbf{2.216} % Opt m 16
           \\ \hline
             \multirow{2}{*}{$j^{+}_{2.0}$} & \multirow{2}{*}{.0008} & \multirow{2}{*}{.0000} & \multirow{2}{*}{.0000}  & \multirow{2}{*}{.0000} &   $2^{12}$
                & .0008 &  .0443  & \underline{.6127} % CMC
                & .0009 &  .0460  & .5850 % Prp m 16
                & .0009 &  .0128  & \textbf{1.056} % Opt m 16
           \\
           &   &   &   &   &   $2^{15}$
                & .0010 &  .0178  & .6684 % CMC
                & .0010 &  .0174  & \textbf{.9000} % Prp m 16
                & .0010 &  .0052  & .6933 % Opt m 16
           \\ \hline
              \multirow{2}{*}{$h_1$} & \multirow{2}{*}{.0333} & \multirow{2}{*}{.0277} & \multirow{2}{*}{.6144}  & \multirow{2}{*}{.7722} &   $2^{12}$
                & .0314 &  .3102  & 1.408 % CMC
                & .0330 &  .2634  & \underline{1.538} % Prp m 16
                & .0323 &  .1151  & \textbf{1.548} % Opt m 16
           \\
           &   &   &   &   &   $2^{15}$
                & .0320 &  .1109  & \underline{1.507} % CMC
                & .0323 &  .0945  & \textbf{1.548} % Prp m 16
                & .0321 &  .0415  & 1.463 % Opt m 16
           \\ \hline

           \multirow{2}{*}{$h_2$} & \multirow{2}{*}{.0318} & \multirow{2}{*}{.0275} & \multirow{2}{*}{.3722}  & \multirow{2}{*}{.8679} &   $2^{12}$
                & .0303 &  .1937  & 1.349 % CMC
                & .0310 &  .1430  & \textbf{1.829} % Prp m 16
                & .0307 &  .0599  & 1.501 % Opt m 16
           \\
           &   &   &   &   &   $2^{15}$
                & .0307 &  .0692  & 1.504 % CMC
                & .0308 &  .0507  & \underline{1.549} % Prp m 16
                & .0307 &  .0212  & 1.471 % Opt m 16
           \\ \hline
            \multirow{2}{*}{$h_3$} & \multirow{2}{*}{.1230} & \multirow{2}{*}{.1014} & \multirow{2}{*}{1.718}  & \multirow{2}{*}{.7548} &   $2^{12}$
                & .1165 &  1.072  & 1.100 % CMC
                & .1216 &  1.077  & 1.346 % Prp m 16
                & .1198 &  .3774  & \textbf{1.627} % Opt m 16
           \\
           &   &   &   &   &   $2^{15}$
                & .1222 &  .6208  & 1.537 % CMC
                & .1211 &  .4327  & \underline{1.656} % Prp m 16
                & .1205 &  .1789  & \textbf{1.829} % Opt m 16
           \\ \hline

\end{tabular}
}\vspace{0.4em}
\caption*{(b) Results for $m=4\times4$}
\end{subtable}
\end{table*}

\begin{table*}[h]
  \caption{(Example \examplefirst{}) Numerical results for \textbf{spherical} (method M2) stratification. }
\label{tab:Example2d_NO_spher}
\begin{subtable}{\textwidth}
\centering
{
\begin{tabular}{|p{.45cm}|p{0.75cm}|p{.7cm}p{.7cm}p{.85cm}|p{.45cm}|p{.73cm}p{.73cm}p{.85cm}|p{.73cm}p{.73cm}p{.85cm}|p{.73cm}p{.73cm}p{.85cm}|}\hline
 $f$ &   $I$& \multicolumn{3}{c|}{ \small{$\hat{Y}_{1000}^{\rm obs}$}}& $R$

    &\multicolumn{3}{c|}{\small{$\hat{Y}^{\F, \rm CMC}_R$}}&
     \multicolumn{3}{c|}{\small{$\hat{Y}^{\F, \rm  prop, M2}_{R,m=2\times2}$}}&
  \multicolumn{3}{c|}{\small{$\hat{Y}^{\F, \rm  opt, M2}_{R,m=2\times2}$}}
        \\
    \hline
           & &
              \texttt{E} & \texttt{SD}${}^*$ & \texttt{AC} &
                   &
             \texttt{E} & \texttt{SD}${}^*$ & \texttt{AC}&
              \texttt{E} & \texttt{SD}${}^*$ &\texttt{AC}&
              \texttt{E} &\texttt{SD}${}^*$& \texttt{AC}
    \\
    \hline
                 \multirow{2}{*}{$j^+_{0.5}$} & \multirow{2}{*}{.3149} & \multirow{2}{*}{.3130} & \multirow{2}{*}{1.466}  & \multirow{2}{*}{2.216} &   $2^{12}$
                & .3037 &  .7184  & 1.683 % CMC
                & .3074 &  .6622  & 1.688 % Prp m 4
                & .3041 &  .6573  & 1.523 % Opt m 4
           \\
           &   &   &   &   &   $2^{15}$
                & .3070 &  .2548  & 1.633 % CMC
                & .3076 &  .2341  & \underline{1.654} % Prp m 4
                & .3069 &  .2330  & 1.619 % Opt m 4
           \\ \hline
               \multirow{2}{*}{$j^+_{1.2}$} & \multirow{2}{*}{.0324} & \multirow{2}{*}{.0270} & \multirow{2}{*}{.5126}  & \multirow{2}{*}{.7757} &   $2^{12}$
                & .0314 &  .2722  & 1.247 % CMC
                & .0327 &  .2715  & 1.227 % Prp m 4
                & .0330 &  .2210  & 1.301 % Opt m 4
           \\
           &   &   &   &   &   $2^{15}$
                & .0323 &  .0976  & 1.809 % CMC
                & .0325 &  .0961  & \underline{1.889} % Prp m 4
                & .0322 &  .0777  & \textbf{1.992} % Opt m 4
           \\ \hline

            \multirow{2}{*}{$j^+_{2.0}$} & \multirow{2}{*}{.0008} & \multirow{2}{*}{.0000} & \multirow{2}{*}{.0000}  & \multirow{2}{*}{.0000} &   $2^{12}$
                & .0008 &  .0443  & .6127 % CMC
                & .0010 &  .0485  & .3751 % Prp m 4
                & .0009 &  .0235  & \underline{.6191} % Opt m 4
           \\
           &   &   &   &   &   $2^{15}$
                & .0010 &  .0178  & .6684 % CMC
                & .0010 &  .0173  & \textbf{.9075} % Prp m 4
                & .0011 &  .0103  & .5958 % Opt m 4
           \\ \hline

      \multirow{2}{*}{$h_1$} & \multirow{2}{*}{.0333} & \multirow{2}{*}{.0277} & \multirow{2}{*}{.6144}  & \multirow{2}{*}{.7722} &   $2^{12}$
                & .0314 &  .3102  & \underline{1.408} % CMC
                & .0305 &  .3046  & 1.279 % Prp m 4
                & .0306 &  .2568  & 1.091 % Opt m 4
           \\
           &   &   &   &   &   $2^{15}$
                & .0320 &  .1109  & 1.507 % CMC
                & .0322 &  .1082  & \textbf{1.580} % Prp m 4
                & .0321 &  .0916  & 1.531 % Opt m 4
           \\ \hline
                \multirow{2}{*}{$h_2$} & \multirow{2}{*}{.0318} & \multirow{2}{*}{.0275} & \multirow{2}{*}{.3722}  & \multirow{2}{*}{.8679} &   $2^{12}$
                & .0303 &  .1937  & 1.349 % CMC
                & .0303 &  .1881  & 1.371 % Prp m 4
                & .0304 &  .1481  & 1.285 % Opt m 4
           \\
           &   &   &   &   &   $2^{15}$
                & .0307 &  .0692  & \underline{1.504} % CMC
                & .0309 &  .0676  & \textbf{1.601} % Prp m 4
                & .0305 &  .0533  & 1.437 % Opt m 4
           \\ \hline
            \multirow{2}{*}{$h_3$} & \multirow{2}{*}{.1230} & \multirow{2}{*}{.1014} & \multirow{2}{*}{1.718}  & \multirow{2}{*}{.7548} &   $2^{12}$
                & .1165 &  1.072  & 1.100 % CMC
                & .1162 &  1.099  & 1.183 % Prp m 4
                & .1211 &  .7953  & 1.346 % Opt m 4
           \\
           &   &   &   &   &   $2^{15}$
                & .1222 &  .6208  & 1.537 % CMC
                & .1228 &  .5128  & 1.546 % Prp m 4
                & .1197 &  .3023  & \underline{1.679} % Opt m 4
           \\ \hline
\end{tabular}

}\vspace{0.4em}
\caption*{(a) Results for $m=2\times2$}
\end{subtable}

\vspace{0.6em}

\begin{subtable}{\textwidth}
\centering
{
\begin{tabular}{|p{.45cm}|p{0.75cm}|p{.7cm}p{.7cm}p{.85cm}|p{.45cm}|p{.73cm}p{.73cm}p{.85cm}|p{.73cm}p{.73cm}p{.85cm}|p{.73cm}p{.73cm}p{.85cm}|}\hline
   $f$ &   $I$& \multicolumn{3}{c|}{ \small{$\hat{Y}_{1000}^{\rm obs}$}}& $R$
    &\multicolumn{3}{c|}{\small{$\hat{Y}^{\F, \rm CMC}_R$}}&
        \multicolumn{3}{c|}{\small{$\hat{Y}^{\F, \rm  prop, M2}_{R,m=4\times4}$}}&
  \multicolumn{3}{c|}{\small{$\hat{Y}^{\F, \rm  opt, M2}_{R,m=4\times4}$}}\\
    \hline
           & &
              \texttt{E} & \texttt{SD}${}^*$ & \texttt{AC} &
                   &
             \texttt{E} & \texttt{SD}${}^*$ & \texttt{AC}&
              \texttt{E} & \texttt{SD}${}^*$ &\texttt{AC}&
              \texttt{E} &\texttt{SD}${}^*$& \texttt{AC}
    \\
    \hline
                 \multirow{2}{*}{$j^+_{0.5}$} & \multirow{2}{*}{.3149} & \multirow{2}{*}{.3130} & \multirow{2}{*}{1.466}  & \multirow{2}{*}{2.216} &   $2^{12}$
                & .3037 &  .7184  & 1.683 % CMC
                & .3080 &  .5567  & \textbf{1.877} % Prp m 16
                & .3074 &  .4569  & \underline{1.717} % Opt m 16
           \\
           &   &   &   &   &   $2^{15}$
                & .3070 &  .2548  & 1.633 % CMC
                & .3071 &  .1970  & 1.622 % Prp m 16
                & .3078 &  .1618  & \textbf{1.662} % Opt m 16
           \\ \hline
               \multirow{2}{*}{$j^+_{1.2}$} & \multirow{2}{*}{.0324} & \multirow{2}{*}{.0270} & \multirow{2}{*}{.5126}  & \multirow{2}{*}{.7757} &   $2^{12}$
                & .0314 &  .2722  & 1.247 % CMC
                & .0319 &  .2568  & \underline{1.477} % Prp m 16
                & .0322 &  .1308  & \textbf{1.547} % Opt m 16
           \\
           &   &   &   &   &   $2^{15}$
                & .0323 &  .0976  & 1.809 % CMC
                & .0321 &  .0912  & 1.745 % Prp m 16
                & .0322 &  .0467  & 1.880 % Opt m 16
           \\ \hline

            \multirow{2}{*}{$j^+_{2.0}$} & \multirow{2}{*}{.0008} & \multirow{2}{*}{.0000} & \multirow{2}{*}{.0000}  & \multirow{2}{*}{.0000} &   $2^{12}$
                & .0008 &  .0443  & .6127 % CMC
                & .0012 &  .0522  & .4157 % Prp m 16
                & .0010 &  .0123  & \textbf{.9785} % Opt m 16
           \\
           &   &   &   &   &   $2^{15}$
                & .0010 &  .0178  & .6684 % CMC
                & .0010 &  .0176  & \underline{.7266} % Prp m 16
                & .0010 &  .0049  & .6534 % Opt m 16
           \\ \hline

      \multirow{2}{*}{$h_1$} & \multirow{2}{*}{.0333} & \multirow{2}{*}{.0277} & \multirow{2}{*}{.6144}  & \multirow{2}{*}{.7722} &   $2^{12}$
                & .0314 &  .3102  & \underline{1.408} % CMC
                & .0307 &  .2826  & 1.235 % Prp m 16
                & .0322 &  .1665  & \textbf{1.524} % Opt m 16
           \\
           &   &   &   &   &   $2^{15}$
                & .0320 &  .1109  & 1.507 % CMC
                & .0316 &  .1006  & 1.355 % Prp m 16
                & .0321 &  .0585  & \underline{1.579} % Opt m 16
           \\ \hline
                \multirow{2}{*}{$h_2$} & \multirow{2}{*}{.0318} & \multirow{2}{*}{.0275} & \multirow{2}{*}{.3722}  & \multirow{2}{*}{.8679} &   $2^{12}$
                & .0303 &  .1937  & 1.349 % CMC
                & .0301 &  .1758  & \underline{1.507} % Prp m 16
                & .0311 &  .0993  & \textbf{1.572} % Opt m 16
           \\
           &   &   &   &   &   $2^{15}$
                & .0307 &  .0692  & \underline{1.504} % CMC
                & .0306 &  .0628  & 1.483 % Prp m 16
                & .0308 &  .0352  & 1.505 % Opt m 16
           \\ \hline
            \multirow{2}{*}{$h_3$} & \multirow{2}{*}{.1230} & \multirow{2}{*}{.1014} & \multirow{2}{*}{1.718}  & \multirow{2}{*}{.7548} &   $2^{12}$
                & .1165 &  1.072  & 1.100 % CMC
                & .1181 &  1.102  & \underline{1.418} % Prp m 16
                & .1181 &  .4772  & \textbf{1.620} % Opt m 16
           \\
           &   &   &   &   &   $2^{15}$
                & .1222 &  .6208  & 1.537 % CMC
                & .1184 &  .4030  & 1.403 % Prp m 16
                & .1204 &  .2099  & \textbf{1.739} % Opt m 16
           \\ \hline
\end{tabular}
}\vspace{0.4em}
\caption*{(b) Results for $m=4\times4$}
\end{subtable}
\end{table*}

\medskip\par
 To compare only stratification methods -- values of \texttt{AC} from Tables \ref{tab:Example2d_NO_cart} and \ref{tab:Example2d_NO_spher} (only for $R=2^{12}$) are collected in Table \ref{tab:Example2d_NO_spher_cart_R_12}.
 Note that there is no clear ``winner''. Out of 24 pairs (cartesian vs spherical) cartesian
 stratifiaction is better in 13 cases. However, considering only the best accuracy in each of
 rows (i.e., for each function), spherical stratification is better in 4 cases.
    \setlength\tabcolsep{6pt} % default value: 6pt
    \begin{table}[H]
    \vspace{0.2cm}
\caption{Example  \examplesecond{}: Comparison of accuracy \texttt{AC} of cartesian (M1) and spherical (M2)
stratification for $R=2^{12}$. Data from Tables \ref{tab:Example2d_NO_cart} and \ref{tab:Example2d_NO_spher}.
Best accuracy in each pair (M1, M2) is \underline{underlined}, best accuracy in a whole row is \textbf{bolded}.}
\label{tab:Example2d_NO_spher_cart_R_12}
\begin{center}
{
 \begin{tabular}{|p{0.34cm}|c|c|c|c|c|c|c|c|c|c|c|c|c|c|c|c|c|c|c|c|c}\hline
  $t$     & \multicolumn{2}{c|}{\small{$\hat{Y}^{\F, \rm  prop }_{R,m=2\times2}$}}
  & \multicolumn{2}{c|}{\small{$\hat{Y}^{\F, \rm  opt }_{R,m=2\times2}$}}
       & \multicolumn{2}{c|}{\small{$\hat{Y}^{\F, \rm  prop }_{R,m=4\times4}$}}
  & \multicolumn{2}{c|}{\small{$\hat{Y}^{\F, \rm  opt }_{R,m=4\times4}$}}\\
    \hline
     & M1 & M2 &M1 & M2 &M1 & M2 &M1 & M2 \\
     & {\scriptsize(cart)} & {\scriptsize(spher)}  &{\scriptsize(cart)}  & {\scriptsize(spher)}  &{\scriptsize(cart)}  & {\scriptsize(spher)} &{\scriptsize(cart)}  & {\scriptsize(spher)}  \\ [3pt]
     \hline
  $j^+_{0.5}$ &   1.576 &  \underline{\textbf{1.654}}   &  1.537    &   \underline{1.619}  &   \underline{1.651} &    1.622  &  1.568  &  \underline{1.662}    \\[3pt]
  $j^+_{1.2}$ &  1.837 & \underline{1.889}   &  \underline{\textbf{2.144}}  & 1.992   &  \underline{1.846}  &  1.745  &  \underline{2.216}  &  1.880   \\[3pt]
  $j^+_{2.0}$ &  .6904   &  \underline{\textbf{.9075}}  &  \underline{.7675}  &  .5958  & \underline{.9000}   & .7266   & .6933   &  .6534   \\ [3pt]

  $h_1$ &  1.330   &   \underline{\textbf{1.580}}  &  1.456  & \underline{1.531}   & \underline{{1.548}}   &  1.355   & 1.463 & \underline{1.579}   \\ [3pt]

  $h_2$ & 1.488   &  \underline{\textbf{1.601}}  &   \underline{1.597}   &  1.437  & \underline{1.549}   & 1.483   & 1.471   &  \underline{1.505}  \\ [3pt]

  $h_3$ &    \underline{1.613}  &  1.546  &  1.622  & \underline{1.679}   & \underline{1.656}    &
  1.403 & \underline{\textbf{1.829}} & 1.739 \\ [3pt]

  \hline
 \end{tabular}
  }
  \end{center}
\end{table}

\begin{figure*}
    \begin{minipage}{\textwidth}
 \begin{center}
  \begin{tabular}{ccccccccccc}

   &   {\small{$\hat{Y}^{\F, \rm CMC}_R$}}&
     {\small{$\hat{Y}^{\F, \rm  prop, M1}_{R,m=4}$}}&
  {\small{$\hat{Y}^{\F, \rm  opt, M1}_{R,m=4}$}}&
   {\small{$\hat{Y}^{\F, \rm  prop, M1}_{R,m=16}$}}&
  {\small{$\hat{Y}^{\F, \rm  opt, M1}_{R,m=16}$}}\\
  & \includegraphics[width=0.15\textwidth]{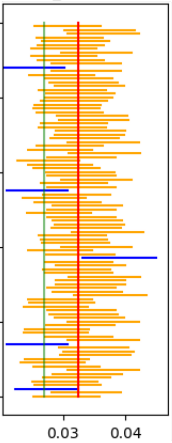}
  &
  \includegraphics[width=0.15\textwidth]{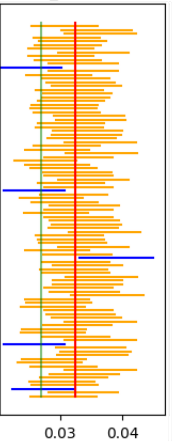}
  & \includegraphics[width=0.15\textwidth]{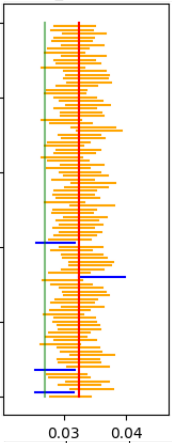}
  & \includegraphics[width=0.15\textwidth]{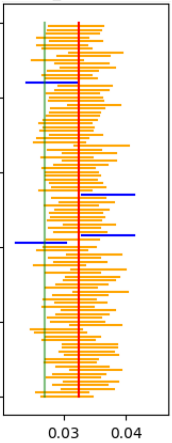}
  & \includegraphics[width=0.15\textwidth]{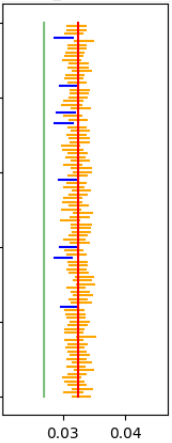}
  \\[5pt]
  \small{$A$:}& $5\%$ & $7\%$ & $4\%$ & $4\%$ & $8\%$ \\[5pt]
   \small{$B$:} & .0108 & .0104 & .0067 &  0.0084 & 0.0029
  \end{tabular}
  \end{center}
\caption{Example \examplesecond{}: 100   estimations of $I=\Prob(X_1>1.2, X_2>1.2)$, each from $R=2^{12}$  simulations.
95$\%$ confidence intervals depicted. \textcolor{red}{Red} line -- true $I$,
\textcolor{green}{green} line -- $\hat{Y}_n^{\rm obs}$ , \textcolor{orange}{orange} lines -- intervals containing $I$; \textcolor{blue}{blue} lines: those not containing $I$. $A$: percentage of intervals not containing $I$; $B$: average confidence interval length}\label{fig:ExNO2d_cartesian_lines}.
    \end{minipage}
    \hfill
    \begin{minipage}{\textwidth}
\begin{center}
  \begin{tabular}{ccccccccccc}

   &   {\small{$\hat{Y}^{\F, \rm CMC}_R$}}&
     {\small{$\hat{Y}^{\F, \rm  prop, M2}_{R,m=4}$}}&
  {\small{$\hat{Y}^{\F, \rm  opt, M2}_{R,m=4}$}}&
   {\small{$\hat{Y}^{\F, \rm  prop, M2}_{R,m=16}$}}&
  {\small{$\hat{Y}^{\F, \rm  opt, M2}_{R,m=16}$}}\\
  & \includegraphics[width=0.15\textwidth]{figures/ExampleNO2d_intervals_R4096_gt1_2_sample.png}
  &
  \includegraphics[width=0.15\textwidth]{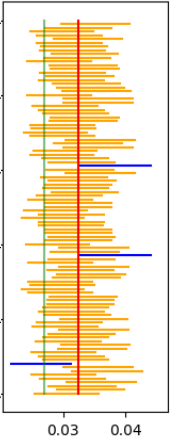}
  & \includegraphics[width=0.15\textwidth]{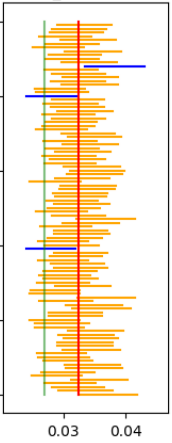}
  & \includegraphics[width=0.15\textwidth]{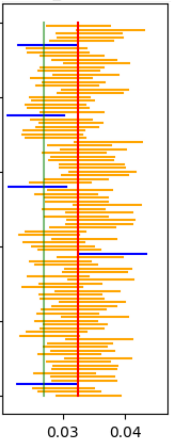}
  & \includegraphics[width=0.15\textwidth]{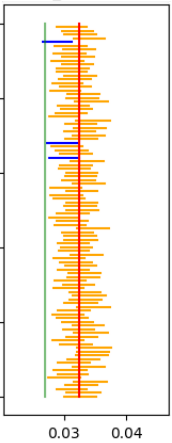}
  \\[5pt]
 \small{$A$:}& $5\%$ & $3\%$ & $3\%$ & $5\%$ & $3\%$ \\[5pt]
  \small{$B$:} & .0108 & .0106 & .0086 &  0.0101 & 0.0051
  \end{tabular}
  \end{center}
\caption{Example \examplesecond{}: $R=2^{12}$, spherical stratification, 100 hundred estimations of $I=\Prob(X_1>1.2, X_2>1.2)$.
Confidence interval (at significance lever $\alpha=5\%$) depicted. \textcolor{red}{Red} line -- true $I$,
\textcolor{green}{green} line -- $\hat{Y}_n^{\rm obs}$, \textcolor{orange}{orange} lines -- intervals containing $I$; \textcolor{blue}{blue} lines: those not containing $I$. $A$: percentage of intervals not containing $I$; $B$: average confidence interval length}\label{fig:ExNO2d_spherical_lines}
    \end{minipage}
\end{figure*}
 In Fig. \ref{fig:ExNO2d_cartesian_lines_short} we presented confidence intervals resulting
 from  100 iterations of the whole procedure (recall, in all the tables average of 10 simulations is reported)
 for estimating $I=\Prob(X_1>1.2, X_2>1.2$) (i.e., $\Exp j^+_{1.2}(\X)$) using  $R=2^{12}$ replications,
 we presented plots only for $\hat{Y}_R^{\mathcal{F}, \rm CMC}$ and
 $\hat{Y}_{R,m=16}^{\mathcal{F}, \rm opt, M1}$ (i.e., cartesian stratification).
  Fig. \ref{fig:ExNO2d_cartesian_lines}  is an extended version of Fig. \ref{fig:ExNO2d_cartesian_lines_short},
 it also contains   $\hat{Y}_{R,m=4}^{\mathcal{F}, \rm prop, M1}$,
 $\hat{Y}_{R,m=4}^{\mathcal{F}, \rm opt, M1}$ and $\hat{Y}_{R,m=16}^{\mathcal{F}, \rm prop, M1}$ estimators.
 Fig. \ref{fig:ExNO2d_spherical_lines} contains similar confidence intervals computed from
 simulations involving spherical stratification (note that $\hat{Y}_R^{\mathcal{F}, \rm CMC}$ used
 the same randomness in both cases).
 To be more precise, for $i=1,\ldots,100$ we perform the sampling procedure, estimate
 the value of $I$ via $\hat{Y}^{(i)}_R$ (one of: CMC or some stratified estimator) and compute $\Var(\hat{Y}^{(i)}_R)$. Afterwards, we draw
  intervals
 $$\left[\hat{Y}^{(i)}_R-z_{1-\alpha/2}\Var(\hat{Y}^{(i)}_R),\ \hat{Y}^{(i)}_R+z_{1-\alpha/2}\Var(\hat{Y}^{(i)}_R)\right].$$
 The theory states that
 $$\Prob\left(I\in\left[\hat{Y}^{(i)}_R-z_{1-\alpha/2}\Var(\hat{Y}^{(i)}_R),\ \hat{Y}^{(i)}_R+z_{1-\alpha/2}\Var(\hat{Y}^{(i)}_R)\right]\right)\approx 1-\alpha.$$
 All the figures are for $\alpha=5\%$, which means that $5\%$ of confidence intervals does not contain (on average)  a true value of $I$. More precisely, number of intervals not containig $I$ has
 bionomial distribution with parameters $n=100$ and $p=0.05$. The probability that we will actualy have
 $\{3,\ldots,8\}$ such intervals is quite likely, it has probability 0.81. In all the figures
 we have between 3 and 8 such intervals. It all shows that the method works,
 thus the best method can be considered the one with shortest confidence interval --
 which is the cartesian stratified estimator with optimal allocation and $m=16$ strata (Fig.
 \ref{fig:ExNO2d_cartesian_lines}, right most) and stratified estimator with
 spherical stratified estimator with also optimal allocation and $m=16$ strata (Fig. \ref{fig:ExNO2d_spherical_lines}, right most) being the second best. Summarizing, using
 stratified sampling results in the smallest uncertainty of estimation.

\subsection{Example \examplesecond{}}\label{sec:app_Example2}
Random variable $X$ is a~mixture of a uniform distribution $\mathcal{U}(0.2,0.4)$,
a   normal $\mathcal{N}(0.6,0.0046)$  distribution and a beta distribution with parameters
$(\alpha,\beta)=(7, 1.1)$ with weights $1/2, 1/4$ and  $1/4$ correspondingly.
We train a flow model on sampled $n=1000$ points for $5000$ epochs, the aim is
to estimate $\Prob(X\leq 0.95), \Prob(X\leq 0.99)$ and $\Prob(X> 0.95)$,
as well as expectations of functions
$\rho_1(x)=\sin(e^x), \rho_2(x)=\log(1+|x|), \rho_3(x)=1/\log(1+|x|).$
We estimate it using
trained flow model by sampling $R=2^{15}$ replications and using
either CMC or stratified estimators (both, proportional and optimal) with $m\in\{4,8\}$ strata.
In Table \ref{tab:Example1d_full} we present results for more models and functions and we report also \texttt{AC}
\begin{table*}[t]
 \caption{ (Example 2)   Numerical results ($R=2^{15}$) for \textbf{cartesian} (method M1) stratification.}\label{tab:Example1d_full}
\begin{subtable}{\textwidth}
\centering
{
\begin{tabular}{|
@{\hspace{2pt}}c@{\hspace{2pt}}|
@{\hspace{2pt}}c@{\hspace{2pt}}|
                p{0.55cm}p{0.55cm}p{0.65cm}|
                p{0.55cm}p{0.55cm}p{0.65cm}|
                p{0.55cm}p{0.55cm}p{0.65cm}|
                p{0.55cm}p{0.55cm}p{0.65cm}|}\hline
     $f$ & $I$& \multicolumn{3}{c|}{ \small{$\hat{Y}_{1000}^{\rm obs}$}}
       &\multicolumn{3}{c|}{\small{$\hat{Y}^{\F, \rm CMC}_R$}}&
      \multicolumn{3}{c|}{\small{$\hat{Y}^{\F, \rm  prop, M1}_{R,m=4}$}}&
   \multicolumn{3}{c|}{\small{$\hat{Y}^{\F, \rm  opt, M1}_{R,m=4}$}} \\
     \hline

         & &\texttt{EST}& \texttt{STD}${}^*$ & \texttt{ACC} &
              \texttt{EST} & \texttt{STD}${}^*$ & \texttt{ACC}&
               \texttt{EST} & \texttt{STD}${}^*$ &\texttt{ACC}&
               \texttt{EST} &\texttt{STD}${}^*$& \texttt{ACC}
     \\
      \hline
      $j^{-}_{0.95}$  & .9351 & .8730 & 1.053  & 1.177
                     & .8788 &  .1803  & 1.221 % CMC
                     & .8793 &  .1380  & 1.224 % Prp m 4
                     & .8797 &  .0690  & \textbf{1.227} % Opt m 4
                \\ \hline
         $j^+_{0.95}$ & .0649 & .1270 & 1.053  & 0.019
                     & .1212 &  .1803  & 0.062 % CMC
                     & .1207 &  .1380  & 0.065 % Prp m 4
                     & .1206 &  .0690  & 0.066 % Opt m 4
                \\ \hline
    $j^{-}_{0.99}$ &   .9875 &  .9640 & .5891  & 1.623
                     & .9729 &  .0897  & 1.831 % CMC
                     & .9731 &  .0856  & \textbf{1.837} % Prp m 4
                     & .9728 &  .0430  & 1.829 % Opt m 4
                 \\ \hline
     $\rho_1$ & .8919 & .8149 & .6144  & 1.063
                     & .8127 &  .1076  & 1.051 % CMC
                     & .8131 &  .0358  & 1.054 % Prp m 4
                     & .8133 &  .0268  & \underline{1.055} % Opt m 4
                \\ \hline

    $\rho_2$ & .4032 & .4405 & .5999  & 1.034
                     & .4403 &  .1054  & 1.036 % CMC
                     & .4401 &  .0211  & 1.038 % Prp m 4
                     & .4399 &  .0178  & \underline{1.040} % Opt m 4
    \\ \hline
     $\rho_3$ & 2.917 & 2.805 & 4.064  & 1.413
                     & 2.810 &  .7114  & 1.438 % CMC
                     & 2.811 &  .1670  & 1.438 % Prp m 4
                     & 2.811 &  .1446  & \underline{1.438} % Opt m 4
    \\ \hline
     \end{tabular}

}\vspace{0.4em}
\caption*{(a) Results for $m=4$.}
\end{subtable}

\vspace{0.6em}

\begin{subtable}{\textwidth}
\centering
{
\begin{tabular}{|
@{\hspace{2pt}}c@{\hspace{2pt}}|
@{\hspace{2pt}}c@{\hspace{2pt}}|
                p{0.55cm}p{0.55cm}p{0.65cm}|
                p{0.55cm}p{0.55cm}p{0.65cm}|
                p{0.55cm}p{0.55cm}p{0.65cm}|
                p{0.55cm}p{0.55cm}p{0.65cm}|}\hline
     $f$ & $I$& \multicolumn{3}{c|}{ \small{$\hat{Y}_{1000}^{\rm obs}$}}
       &\multicolumn{3}{c|}{\small{$\hat{Y}^{\F, \rm CMC}_R$}}&
         \multicolumn{3}{c|}{\small{$\hat{Y}^{\F, \rm  prop, M1}_{R,m=8}$}}&
   \multicolumn{3}{c|}{\small{$\hat{Y}^{\F, \rm  opt, M1}_{R,m=8}$}}\\
     \hline

         & &\texttt{EST}& \texttt{STD}${}^*$ & \texttt{ACC} &
              \texttt{EST} & \texttt{STD}${}^*$ & \texttt{ACC}&
               \texttt{EST} & \texttt{STD}${}^*$ &\texttt{ACC}&
               \texttt{EST} &\texttt{STD}${}^*$& \texttt{ACC}
     \\
      \hline
      $j^{-}_{0.95}$  & .9351 & .8730 & 1.053  & 1.177
                     & .8788 &  .1803  & 1.221 % CMC
                     & .8795 &  .0364  & 1.226 % Prp m 16
                     & .8796 &  .0130  & \underline{1.226} % Opt m 16
                \\ \hline
         $j^+_{0.95}$ & .0649 & .1270 & 1.053  & 0.019
                     & .1212 &  .1803  & 0.062 % CMC
                     & .1205 &  .0364  & \textbf{0.067} % Prp m 16
                     & .1205 &  .0128  & \underline{0.066} % Opt m 16
                \\ \hline
    $j^{-}_{0.99}$ &   .9875 &  .9640 & .5891  & 1.623
                     & .9729 &  .0897  & 1.831 % CMC
                     & .9730 &  .0804  & 1.833 % Prp m 16
                     & .9731 &  .0284  & \underline{1.836} % Opt m 16
                 \\ \hline
     $\rho_1$ & .8919 & .8149 & .6144  & 1.063
                     & .8127 &  .1076  & 1.051 % CMC
                     & .8134 &  .0196  & \textbf{1.055} % Prp m 16
                     & .8133 &  .0140  & 1.054 % Opt m 16
                \\ \hline

    $\rho_2$ & .4032 & .4405 & .5999  & 1.034
                     & .4403 &  .1054  & 1.036 % CMC
                     & .4399 &  .0143  & \textbf{1.040} % Prp m 16
                     & .4400 &  .0098  & 1.039 % Opt m 16
    \\ \hline
     $\rho_3$ & 2.917 & 2.805 & 4.064  & 1.413
                     & 2.810 &  .7114  & 1.438 % CMC
                     & 2.811 &  .1035  & \textbf{1.441} % Prp m 16
                     & 2.811 &  .0799  & 1.438 % Opt m 16
    \\ \hline
     \end{tabular}
}\vspace{0.4em}
\caption*{(b) Results for $m=8$.}
\end{subtable}
\end{table*}

\smallskip\par
Moreover, in Tables \ref{tab:ex1_n500} and \ref{tab:ex1_n2000} we present
results for Example \examplesecond{}, where the flow model was trained on $500$ and $2000$ points respectively.
The conclusions are very similar to those for $n=1000$ (see Table \ref{tab:Example1d_full}),
showing that even $n=500$ points are enough for flow model to learn the distribution correctly. This is also confirmed in Section \ref{sec:sample_perform}, where we conducted a more detailed study on the influence of the sample size $n$ on the quality of the trained flow model.

\begin{table*}[t]
\caption{(Example \examplesecond{})  Numerical results ($R=2^{15}$) for $500$ training observations.}
\label{tab:ex1_n500}
\begin{subtable}{\textwidth}
\centering
{
\begin{tabular}{|
@{\hspace{2pt}}c@{\hspace{2pt}}|
@{\hspace{2pt}}c@{\hspace{2pt}}|
                p{0.55cm}p{0.55cm}p{0.65cm}|
                p{0.55cm}p{0.55cm}p{0.65cm}|
                p{0.55cm}p{0.55cm}p{0.65cm}|
                p{0.55cm}p{0.55cm}p{0.78cm}|}\hline
   $f$ & $I$& \multicolumn{3}{c|}{ \small{$\hat{Y}_{500}^{\rm obs}$}}
    &\multicolumn{3}{c|}{\small{$\hat{Y}^{\F, \rm CMC}_R$}}&
     \multicolumn{3}{c|}{\small{$\hat{Y}^{\F, \rm  prop, M1}_{R,m=4}$}}&
  \multicolumn{3}{c|}{\small{$\hat{Y}^{\F, \rm  opt, M1}_{R,m=4}$}} \\
    \hline
        & &\texttt{E}& \texttt{SD}${}^*$ & \texttt{AC} &
             \texttt{E} & \texttt{SD}${}^*$ & \texttt{AC}&
              \texttt{E} & \texttt{SD}${}^*$ &\texttt{AC}&
              \texttt{E} &\texttt{SD}${}^*$& \texttt{AC}
    \\
    \hline
     $j^{-}_{0.95}$  & .935 & .746 & 1.94  & 0.69
                    & .741 &  .241  & .683 % CMC
                    & .741 &  .049  & .682 % Prp m 4
                    & .741 &  .024  & \textbf{.684} % Opt m 4
               \\ \hline
        $j^+_{0.95}$ & .065 & .254 & 1.946  & \mbox{-0.46}
                    & .258 &  .241  & \mbox{-0.47} % CMC
                    & .258 &  .049  & \mbox{-0.47} % Prp m 4
                    & .258 &  .024  & -\textbf{0.47} % Opt m 4
               \\ \hline
   $j^{-}_{0.99}$ & .987 &  .9280 & 1.156  & 1.22
                    & .929 &  .141  & 1.23 % CMC
                    & .929 &  .124  & \underline{1.23} % Prp m 4
                    & .929 &  .062  & 1.22 % Opt m 4
                \\ \hline
    $\rho_1$ & .892 & .657 & .716  & .580
                    & .657 &  .088  & \textbf{.580} % CMC
                    & .657 &  .025  & .580 % Prp m 4
                    & .657 &  .023  & \underline{.580} % Opt m 4
               \\ \hline
   $\rho_2$ & .4032 & .6222 & .2818 & 0.265
                    & .6222 &  .0348  & \textbf{.265} % CMC
                    & .6222 &  .0164  & .265 % Prp m 4
               & .6222 &  .0118  & \underline{.265} % Opt m 4
   \\ \hline
    $\rho_3$ & 2.917 & 1.62 & .911  & .354
                    & 1.62 &  .121  & .354 % CMC
                    & 1.62 &  .082  & \textbf{.354} % Prp m 4
                    & 1.62 &  .048  & .354 % Opt m 4
   \\ \hline
\end{tabular}

}\vspace{0.4em}
\caption*{(a) Results for $m=4$}
\end{subtable}

\vspace{0.6em}

\begin{subtable}{\textwidth}
\centering
{
\begin{tabular}{|
@{\hspace{2pt}}c@{\hspace{2pt}}|
@{\hspace{2pt}}c@{\hspace{2pt}}|
                p{0.55cm}p{0.55cm}p{0.65cm}|
                p{0.55cm}p{0.55cm}p{0.65cm}|
                p{0.55cm}p{0.55cm}p{0.65cm}|
                p{0.55cm}p{0.55cm}p{0.78cm}|}\hline
   $f$ & $I$& \multicolumn{3}{c|}{ \small{$\hat{Y}_{500}^{\rm obs}$}}
    &\multicolumn{3}{c|}{\small{$\hat{Y}^{\F, \rm CMC}_R$}} &
        \multicolumn{3}{c|}{\small{$\hat{Y}^{\F, \rm  prop, M1}_{R,m=8}$}}&
  \multicolumn{3}{c|}{\small{$\hat{Y}^{\F, \rm  opt, M1}_{R,m=8}$}}\\
    \hline
        & &\texttt{E}& \texttt{SD}${}^*$ & \texttt{AC} &
             \texttt{E} & \texttt{SD}${}^*$ & \texttt{AC}&
              \texttt{E} & \texttt{SD}${}^*$ &\texttt{AC}&
              \texttt{E} &\texttt{SD}${}^*$& \texttt{AC}
    \\
    \hline
     $j^{-}_{0.95}$  & .935 & .746 & 1.94  & 0.69
                    & .741 &  .241  & .683 % CMC
                    & .741 &  .048  & .682 % Prp m 16
                    & .741 &  .017  & \underline{.683} % Opt m 16
               \\ \hline
        $j^+_{0.95}$ & .065 & .254 & 1.946  & \mbox{-0.46}
                    & .258 &  .241  & \mbox{-0.47} % CMC
                    & .258 &  .049  & \mbox{-0.47} % Prp m 16
                    & .258 &  .017  & \underline{-0.47} % Opt m 16
               \\ \hline
   $j^{-}_{0.99}$ & .987 &  .9280 & 1.156  & 1.22
                    & .929 &  .141  & 1.23 % CMC
                    & .929 &  .097  & \textbf{1.23} % Prp m 16
                    & .929 &  .034  & 1.23 % Opt m 16
                \\ \hline
    $\rho_1$ & .892 & .657 & .716  & .580
                    & .657 &  .088  & \textbf{.580} % CMC
                    & .657 &  .012  & .580 % Prp m 16
                    & .657 &  .012  & .580 % Opt m 16
               \\ \hline
   $\rho_2$ & .4032 & .6222 & .2818 & 0.265
                    & .6222 &  .0348  & \textbf{.265} % CMC
                    & .6222 &  .0110  & .265 % Prp m 16
                    & .6222 &  .0065  & .265 % Opt m 16
   \\ \hline
    $\rho_3$ & 2.917 & 1.62 & .911  & .354
                    & 1.62 &  .121  & .354 % CMC
                    & 1.62 &  .074  & \underline{.354} % Prp m 16
                    & 1.62 &  .046  & .354 % Opt m 16
   \\ \hline
\end{tabular}
}\vspace{0.4em}
\caption*{(b) Results for $m=8$}
\end{subtable}
\end{table*}

\begin{table*}[t]
\caption{(Example \examplesecond{})  Numerical results ($R=2^{15}$) for $2000$ training observations.}\label{tab:ex1_n2000}
\begin{subtable}{\textwidth}
\centering
{
\begin{tabular}{|
@{\hspace{2pt}}c@{\hspace{2pt}}|
@{\hspace{2pt}}c@{\hspace{2pt}}|
                p{0.55cm}p{0.55cm}p{0.65cm}|
                p{0.55cm}p{0.55cm}p{0.65cm}|
                p{0.55cm}p{0.55cm}p{0.65cm}|
                p{0.55cm}p{0.55cm}p{0.78cm}|}\hline
   $f$ & $I$& \multicolumn{3}{c|}{ \small{$\hat{Y}_{500}^{\rm obs}$}}
    &\multicolumn{3}{c|}{\small{$\hat{Y}^{\F, \rm CMC}_R$}}&
     \multicolumn{3}{c|}{\small{$\hat{Y}^{\F, \rm  prop, M1}_{R,m=4}$}}&
  \multicolumn{3}{c|}{\small{$\hat{Y}^{\F, \rm  opt, M1}_{R,m=4}$}} \\
    \hline
        & &\texttt{E}& \texttt{SD}${}^*$ & \texttt{AC} &
             \texttt{E} & \texttt{SD}${}^*$ & \texttt{AC}&
              \texttt{E} & \texttt{SD}${}^*$ &\texttt{AC}&
              \texttt{E} &\texttt{SD}${}^*$& \texttt{AC}
    \\
    \hline
      $j^{-}_{0.95}$ & .9351  & .9365  &  .5453   &  2.825
                    & .9349 &  .1363  & 3.066 % CMC
                    & .9359 &  .1206  & 3.282 % Prp m 4
                    & .9349 &  .0606  & 3.578 % Opt m 4
               \\ \hline
        $j^+_{0.95}$ &   .0649 & .0635 & .5453  & 1.667
                    & .0651 &  .1363  & 1.908 % CMC
                    & .0641 &  .1206  & 2.124 % Prp m 4
                    & .0649 &  .0606  & 2.117 % Opt m 4
               \\ \hline
   $j^{-}_{0.99}$ & .9875 & .9820 & .2973  & 2.255
                    & .9838 &  .0697  & 2.434 % CMC
                    & .9838 &  .0679  & \textbf{2.439} % Prp m 4
                    & .9839 &  .0339  & \underline{2.436} % Opt m 4
                \\ \hline
     $\rho_1$ & .8919 & .8905 & .3533  & 2.795
                    & .8908 &  .0872  & \underline{2.897} % CMC
                    & .8908 &  .0417  & \textbf{2.912} % Prp m 4
                    & .8904 &  .0261  & 2.781 % Opt m 4
               \\ \hline
   $\rho_2$ & .4032 & .4036 & .3589  & 3.015
                    & .4034 &  .0886  & 3.031 % CMC
                    & .4034 &  .0189  & \textbf{3.198} % Prp m 4
                    & .4036 &  .0179  & \underline{3.034} % Opt m 4
   \\ \hline
    $\rho_3$ & 2.917 & 2.921 & 2.624  & 2.903
                    & 2.921 &  .6473  & \textbf{3.118} % CMC
                    & 2.921 &  .1603  & 2.835 % Prp m 4
                    & 2.921 &  .1429  & \underline{2.950} % Opt m 4
   \\ \hline
\end{tabular}

}\vspace{0.4em}
\caption*{(a) Results for $m=4$}
\end{subtable}

\vspace{0.6em}

\begin{subtable}{\textwidth}
\centering
{
\begin{tabular}{|
@{\hspace{2pt}}c@{\hspace{2pt}}|
@{\hspace{2pt}}c@{\hspace{2pt}}|
                p{0.55cm}p{0.55cm}p{0.65cm}|
                p{0.55cm}p{0.55cm}p{0.65cm}|
                p{0.55cm}p{0.55cm}p{0.65cm}|
                p{0.55cm}p{0.55cm}p{0.78cm}|}\hline
   $f$ & $I$& \multicolumn{3}{c|}{ \small{$\hat{Y}_{2000}^{\rm obs}$}}
    &\multicolumn{3}{c|}{\small{$\hat{Y}^{\F, \rm CMC}_R$}} &
        \multicolumn{3}{c|}{\small{$\hat{Y}^{\F, \rm  prop, M1}_{R,m=8}$}}&
  \multicolumn{3}{c|}{\small{$\hat{Y}^{\F, \rm  opt, M1}_{R,m=8}$}}\\
    \hline
        & &\texttt{E}& \texttt{SD}${}^*$ & \texttt{AC} &
             \texttt{E} & \texttt{SD}${}^*$ & \texttt{AC}&
              \texttt{E} & \texttt{SD}${}^*$ &\texttt{AC}&
              \texttt{E} &\texttt{SD}${}^*$& \texttt{AC}

    \\
    \hline
      $j^{-}_{0.95}$ & .9351  & .9365  &  .5453   &  2.825
                    & .9349 &  .1363  & 3.066 % CMC
                    & .9350 &  .0269  & \underline{3.591} % Prp m 16
                    & .9350 &  .0068  & \textbf{3.995} % Opt m 16
               \\ \hline
        $j^+_{0.95}$ &   .0649 & .0635 & .5453  & 1.667
                    & .0651 &  .1363  & 1.908 % CMC
                    & .0650 &  .0269  & \underline{2.433} % Prp m 16
                    & .0650 &  .0068  & \textbf{2.807} % Opt m 16
               \\ \hline
   $j^{-}_{0.99}$ & .9875 & .9820 & .2973  & 2.255
                    & .9838 &  .0697  & 2.434 % CMC
                    & .9837 &  .0607  & 2.417 % Prp m 16
                    & .9838 &  .0151  & 2.424 % Opt m 16
                \\ \hline
     $\rho_1$ & .8919 & .8905 & .3533  & 2.795
                    & .8908 &  .0872  & \underline{2.897} % CMC
                    & .8905 &  .0102  & 2.808 % Prp m 16
                    & .8905 &  .0064  & 2.803 % Opt m 16
               \\ \hline
   $\rho_2$ & .4032 & .4036 & .3589  & 3.015
                    & .4034 &  .0886  & 3.031 % CMC
                    & .4036 &  .0057  & 3.015 % Prp m 16
                    & .4036 &  .0049  & 3.008 % Opt m 16
   \\ \hline
    $\rho_3$ & 2.917 & 2.921 & 2.624  & 2.903
                    & 2.921 &  .6473  & \textbf{3.118} % CMC
                    & 2.920 &  .0455  & 2.940 % Prp m 16
                    & 2.921 &  .0384  & 2.935 % Opt m 16
   \\ \hline
\end{tabular}
}\vspace{0.4em}
\caption*{(b) Results for $m=8$}
\end{subtable}
\end{table*}

\subsection{Example \examplemultit{}}\label{sec:app_Example3}
In Table \ref{tab:Example_stud_d4} we provided results for Example \examplemultit{} in case $d=4$. Here, in Table \ref{tab:Example_stud_d3},
we provide results for case $d=3$. Conclusions are similar.
 \begin{table*}[t]
 \caption{(Example \examplemultit{}, $d=3$) Numerical results for \textbf{cartesian} (method M1) and \textbf{spherical} (method M2)
}
\label{tab:Example_stud_d3}
\begin{subtable}{\textwidth}
\centering
%
%
% \begin{tabular}{|
% @{\hspace{2pt}}c@{\hspace{2pt}}|
% @{\hspace{2pt}}c@{\hspace{2pt}}|
%                 p{0.55cm}p{0.55cm}p{0.65cm}|
%                 p{0.55cm}p{0.55cm}p{0.65cm}|
%                 p{0.55cm}p{0.55cm}p{0.65cm}|
%                 p{0.55cm}p{0.55cm}p{0.78cm}|}\hline
%
{
\begin{tabular}{|
 @{\hspace{2pt}}c@{\hspace{2pt}}|
 @{\hspace{2pt}}c@{\hspace{2pt}}|
 p{0.3cm}p{0.3cm}p{0.4cm}|p{0.65cm}|
 p{0.45cm}p{0.45cm}p{0.5cm}|p{0.45cm}p{0.45cm}p{0.55cm}|p{0.45cm}p{0.45cm}p{0.55cm}|} \hline
     %   &  & & & & & & & &  \multicolumn{6}{c|}{Cartesian} &\multicolumn{6}{c|}{Spherical} \\\hline
        $f$ &     $I^*$& \multicolumn{3}{c|}{ \small{$\hat{Y}_{20k}^{\rm obs}$}}& $R$
            &\multicolumn{3}{c|}{\small{$\hat{Y}^{\F, \rm CMC}_R$}}&
             \multicolumn{3}{c|}{\small{$\hat{Y}^{\F, \rm  prop, M1}_{R, m=10\!\times\! 10\!\times\! 10}$}}&
          \multicolumn{3}{c|}{\small{$\hat{Y}^{\F, \rm  opt, M1}_{R, m=10\!\times\! 10\!\times\! 10}$}}\\
            \hline
            & &
            \texttt{E}$^*$ & \texttt{SD}${}^*$ & \texttt{AC} &
               &
           \texttt{E}$^*$ & \texttt{SD}${}^*$ & \texttt{AC}&
            \texttt{E}$^*$ & \texttt{SD}${}^*$ &\texttt{AC}&
            \texttt{E}$^*$ &\texttt{SD}${}^*$& \texttt{AC}
  \\
  \hline
  &  &  &  &  & 50k & \mbox{-.025} & .044 & \mbox{-.121} & \mbox{-.029} & .042 & \mbox{-.130} & \mbox{-.017} & \mbox{.008} & \mbox{-.015}  \\
  $h_1$ & \mbox{-.03} & .070 & .066 & \mbox{-.52} & 100k & \mbox{-.039} & .031 & .174 & \mbox{-.029} & .030 & .310 & \mbox{-.036} & .006 & \textbf{.475}  \\
  &  &  &  &  & 500k & \mbox{-.019} & .014 & .560 & \mbox{-.029} & .013 & .669 & \mbox{-.037} & .003 & \underline{1.35}  \\ \hline
  &  &  &  &  & 50k & .383 & .015 & \textbf{1.61} & .386 & .010 & \underline{1.56} & .390 & .002 & 1.53   \\
  $h_2$ & .400 & .376 & .024 & 1.23 & 100k & .380 & .011 & 1.37 & .387 & .007 & \textbf{1.53} & .381 & .001 & 1.35   \\
  &  &  &  &  & 500k & .384 & .005 & 1.41 & .386 & .003 & 1.47 & .386 & .001 & 1.45  \\ \hline
  &  &  &  &  & 50k & 1.30 & .051 & 1.45 & 1.31 & .037 & \textbf{1.61} & 1.36 & .005 & 1.12  \\
  $h_3$ & 1.36 & 1.27 & .080 & 1.20 & 100k & 1.28 & .036 & 1.35 & 1.30 & .025 & 1.40 & 1.31 & .004 & 1.34   \\
  &  &  &  &  & 500k & 1.30 & .017 & 1.41 & 1.31 & .012 & 1.44 & 1.31 & .002 & \textbf{1.55}  \\            \hline
        \end{tabular}

}\vspace{0.4em}
\caption*{(a) Results for method M1}
\end{subtable}

\vspace{0.6em}

\begin{subtable}{\textwidth}
\centering
{
\begin{tabular}{|
 @{\hspace{2pt}}c@{\hspace{2pt}}|
 @{\hspace{2pt}}c@{\hspace{2pt}}|
 p{0.3cm}p{0.3cm}p{0.4cm}|p{0.65cm}|
 p{0.45cm}p{0.45cm}p{0.5cm}|p{0.45cm}p{0.45cm}p{0.55cm}|p{0.45cm}p{0.45cm}p{0.55cm}|} \hline
       $f$ &     $I^*$& \multicolumn{3}{c|}{ \small{$\hat{Y}_{20k}^{\rm obs}$}}& $R$
            &\multicolumn{3}{c|}{\small{$\hat{Y}^{\F, \rm CMC}_R$}}&

                \multicolumn{3}{c|}{\small{$\hat{Y}^{\F, \rm  prop, M2}_{R, m=10\!\times\! 10\!\times\! 10}$}}&
          \multicolumn{3}{c|}{\small{$\hat{Y}^{\F, \rm  opt, M2}_{R, m=10\!\times\! 10\!\times\! 10}$}}\\
            \hline
            & &
            \texttt{E}$^*$ & \texttt{SD}${}^*$ & \texttt{AC} &
               &
           \texttt{E}$^*$ & \texttt{SD}${}^*$ & \texttt{AC}&
            \texttt{E}$^*$ & \texttt{SD}${}^*$ &\texttt{AC}&
            \texttt{E}$^*$ &\texttt{SD}${}^*$& \texttt{AC}
  \\
  \hline
  &  &  &  &  & 50k & \mbox{-.025} & .044 & \mbox{-.121} & \mbox{-.035} & .040 & \textbf{.247} & .004 & .009 & \underline{.095} \\
  $h_1$ & \mbox{-.03} & .070 & .066 & \mbox{-.52} & 100k & \mbox{-.039} & .031 & .174  &\mbox{-.027} & .029 & .182 & \mbox{-.021} & .006 & \underline{.347} \\
  &  &  &  &  & 500k & \mbox{-.019} & .014 & .560   & \mbox{-.023} & .013 & .762 & \mbox{-.030} & .003 & \textbf{1.90}\\ \hline
  &  &  &  &  & 50k & .383 & .015 & \textbf{1.61} & .382 & .011 & 1.39 & .394 & .002 & 1.25 \\
  $h_2$ & .400 & .376 & .024 & 1.23 & 100k & .380 & .011 & 1.37& .384 & .008 & \underline{1.43} & .382 & .001 & 1.27 \\
  &  &  &  &  & 500k & .384 & .005 & 1.41   & .387 & .003 & \textbf{1.51} & .386 & .001 & \underline{1.47} \\ \hline
  &  &  &  &  & 50k & 1.30 & .051 & 1.45   & 1.31 & .043 & \underline{1.55} & 1.34 & .006 & 1.10 \\
  $h_3$ & 1.36 & 1.27 & .080 & 1.20 & 100k & 1.28 & .036 & 1.35 & 1.31 & .027 & \textbf{1.52} & 1.29 & .004 & \underline{1.44} \\
  &  &  &  &  & 500k & 1.30 & .017 & 1.41 &  1.31 & .012 & \underline{1.47} & 1.31 & .002 & 1.42 \\            \hline
        \end{tabular}
}\vspace{0.4em}
\caption*{(b) Results for method M2.}
\end{subtable}
\end{table*}

 \subsection{Example \examplesdf{}: Signed Distance Function (SDF)}\label{sec:app_ExampleSDF}

In this experiment, the random vector $\X=(X_1,X_2,X_3)^T$ has a density induced by
a 3D mesh represented through a signed distance function (SDF)~\cite{sdf_git}.
Figure~\ref{fig:sdf} illustrates the mesh (left), a point cloud of $1000$ mesh points (center),
and a point cloud of $10000$ points generated by the flow trained on $1000$ points (right).

\begin{wraptable}{r}{0.6\textwidth}
\centering
\includegraphics[width=0.7\linewidth]{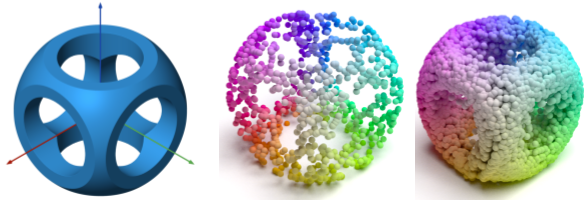}
\caption{Example \examplesdf{}. Sample mesh (SDF).}
\label{fig:sdf}
\end{wraptable}
We consider the function $j^{+}_t$ for $t\in\{0.30,0.50,0.55\}$.
The reference value~$I$ is computed from a mesh of $2^{22}$ points.
The model is trained on $1000$ examples and the estimation of~$I$ is performed using
$R\in\{2^{17},2^{18}\}$ points.
For optimal allocation, we use $R^{\prime}=R$ pilot simulations.

Table~\ref{tab:Example_SDF} presents the results for cartesian stratification.
Table~\ref{tab:Example_SDF_spher} contains the results for spherical stratification.
In both cases, stratified sampling provides lower empirical standard deviations compared
to crude Monte Carlo, with optimal allocation yielding the smallest variability.
Accuracy is highest for CMC and proportional allocation, while optimal allocation
provides the most significant variance reduction.
\begin{table*}%[htbp]
\caption{(Example \examplesdf{}) Numerical results for \textbf{cartesian} (M1) stratification.}
\label{tab:Example_SDF}
{\fontsize{8pt}{9.2pt}\selectfont
\setlength{\tabcolsep}{1.9pt}
% \begin{tabular}{|
%  @{\hspace{2pt}}c@{\hspace{2pt}}|
%  @{\hspace{2pt}}c@{\hspace{2pt}}|
%  p{0.3cm}p{0.3cm}p{0.4cm}|p{0.65cm}|
%  p{0.45cm}p{0.45cm}p{0.5cm}|p{0.45cm}p{0.45cm}p{0.55cm}|p{0.45cm}p{0.45cm}p{0.55cm}|} \hline
%
\begin{tabular}{|@{\hspace{2pt}}c@{\hspace{2pt}}|p{0.45cm}p{0.45cm}p{0.45cm}|p{0.45cm}|
p{0.5cm}p{0.5cm}p{0.45cm}|
p{0.55cm}p{0.55cm}p{0.45cm}|
p{0.55cm}p{0.55cm}p{0.45cm}|
p{0.55cm}p{0.55cm}p{0.45cm}|
p{0.55cm}p{0.55cm}p{0.45cm}|}
\hline
  $f$ &   \multicolumn{3}{c|}{ {$\hat{Y}_{1000}^{\rm obs}$}}& $R$    &\multicolumn{3}{c|}{{$\hat{Y}^{\F, \rm CMC}_R$}}&
     \multicolumn{3}{c|}{{$\hat{Y}^{\F, \rm  prop, M1}_{R,m=8}$}}&
  \multicolumn{3}{c|}{{$\hat{Y}^{\F, \rm  opt, M1}_{R,m=8}$}}&
        \multicolumn{3}{c|}{{$\hat{Y}^{\F, \rm  prop, M1}_{R,m=64}$}}&
  \multicolumn{3}{c|}{{$\hat{Y}^{\F, \rm  opt, M1}_{R,m=64}$}}\\
    \hline

          &
              \texttt{E} & \texttt{SD}${}^*$ & \texttt{AC} &
                   &
             \texttt{E} & \texttt{SD}${}^*$ & \texttt{AC}&
              \texttt{E} & \texttt{SD}${}^*$ &\texttt{AC}&
              \texttt{E} &\texttt{SD}${}^*$& \texttt{AC}&
              \texttt{E} & \texttt{SD}${}^*$& \texttt{AC}&
            \texttt{E} & \texttt{SD}${}^*$ & \texttt{AC}
    \\

    \hline

   \multirow{2}{*}{$j^{+}_{0.3}$} & \multirow{2}{*}{.053} & \multirow{2}{*}{.708} &\multirow{2}{*}{1.1}  &     $2^{17}$
   & .050&  .060 & 1.8 % CMC
   & .048 & .055  & 1.9  % Prp m 8
   & .049 & .033  & 2.1 % Opt m 8
   & .049& .051 & \textbf{2.5} % PRop m 64
   & .049& .025  & \underline{2.4} % Opt  m 64
   \\
   &   &   &   &     $2^{18}$
  & .049 &   .042 & 2.1 % CMC
   & .050  & .040  & 1.6  % Prp m 8
   & .049 & .023  & \textbf{3.2} % Opt m 8
   & .050 & .036 & 1.8 % PRop m 64
   & .049 & .017  & \underline{2.4} % Opt  m 64
   \\ \hline

     \multirow{2}{*}{$j^{+}_{0.5}$} & \multirow{2}{*}{.004}& \multirow{2}{*}{.199} & \multirow{2}{*}{.89} &   $2^{17}$
   & .004&   .019 & \underline{1.3}% CMC
   & .005  & .019 & .86 % Prp m 8
   & .005 &  .009  & .99 % Opt m 8
   & .004&   .018  & \textbf{1.6} % PRop m 64
   & .004  & .005 & 1.2 % Opt  m 64
   \\
   &   &   &   &    $2^{18}$
   & .004&  .013 & \textbf{1.4} % CMC
   & .005 & .013 & .99  % Prp m 8
   & .005 & .007  & 1.1 % Opt m 8
   & .004&  .013  & \underline{1.3} % PRop m 64
   & .005&  .004 & 1.1 % Opt  m 64
   \\ \hline

        \multirow{2}{*}{$j^{+}_{0.55}$} & \multirow{2}{*}{.000} & \multirow{2}{*}{.000} & \multirow{2}{*}{.00} &   $2^{17}$
   & .001& .008  & .23 % CMC
   & .001& .008 &   .33  % Prp m 8
   & .001 &.004  & \textbf{.46} % Opt m 8
   & .001& .009  & .14% PRop m 64
   & .001& .002 &  \underline{.34} % Opt  m 64
   \\
   &   &   &   &   $2^{18}$
   & .001&  .006& .30 % CMC
   & .001&  .006& .34 %  Prp m 8
   & .001 & .003 &.38  %Opt m 8
   & .001&  .005& \underline{.44}% PRop m 64
   & .001 & .001& \textbf{.45} % Opt  m 64
   \\ \hline

\end{tabular}
}
\end{table*}

\begin{table*}[t]
\caption{Example \examplesdf{}: numerical results for \textbf{spherical} (method M2) stratification.  }
\label{tab:Example_SDF_spher}
\begin{subtable}{\textwidth}
\centering
{
\fontsize{8pt}{9.2pt}\selectfont
\setlength{\tabcolsep}{1.9pt}
\begin{tabular}{|
@{\hspace{2pt}}c@{\hspace{2pt}}|
@{\hspace{2pt}}c@{\hspace{2pt}}|
p{0.65cm}p{0.65cm}p{0.65cm}|
p{0.5cm}|
rrr|rrr|rrr|rrr|rrr|rrr|rrr|rrr|rrr|rrr|rrr|}\hline
  $f$ &   $I$& \multicolumn{3}{c|}{ \small{$\hat{Y}_{1000}^{\rm obs}$}}& $R$

    &\multicolumn{3}{c|}{\small{$\hat{Y}^{\F, \rm CMC}_R$}}&
     \multicolumn{3}{c|}{\small{$\hat{Y}^{\F, \rm  prop, M2}_{R,m=8}$}}&
  \multicolumn{3}{c|}{\small{$\hat{Y}^{\F, \rm  opt, M2}_{R,m=8}$}} \\
    \hline

          & &
              \texttt{E} & \texttt{SD}${}^*$ & \texttt{AC} &
                   &
             \texttt{E} & \texttt{SD}${}^*$ & \texttt{AC}&
              \texttt{E} & \texttt{SD}${}^*$ &\texttt{AC}&
              \texttt{E} &\texttt{SD}${}^*$& \texttt{AC}
    \\

    \hline

   \multirow{2}{*}{$j^+_{0.3}$} & \multirow{2}{*}{.0494} & \multirow{2}{*}{.0532} & \multirow{2}{*}{.7085} &\multirow{2}{*}{1.146}  &     $2^{17}$
   & .0501&   .0603  & \textbf{1.848} % CMC
   & .0550& .0610 &  .9479  % Prp m 4
   & .0563&  .0515  &.8565 % Opt m 4

   \\
   &   &   &   &   &   $2^{18}$
  & .0491 &   .0422  & \textbf{2.126} % CMC
   & .0552&  .0432 & .9339  % Prp m 4
   & .0552 & .0383  & .9330 % Opt m 4

   \\ \hline

     \multirow{2}{*}{$j^+_{0.5}$} & \multirow{2}{*}{.0046} & \multirow{2}{*}{.0041}& \multirow{2}{*}{.1996} & \multirow{2}{*}{.8927} &   $2^{17}$
   & .0048&     .0191 & \underline{1.277}% CMC
   & .0034  & .0162 & .6229 % Prp m 4
   & .0034 & .0104 & .5888 % Opt m 4

   \\
   &   &   &   &   &   $2^{18}$
   & .0047&    .0134  & \textbf{1.412} % CMC
&   .0032&   .0109 & .5148 % Prp m 4
   & .0032 & .0069 & .5090 % Opt m 4

   \\ \hline

        \multirow{2}{*}{$j^+_{0.55}$} & \multirow{2}{*}{.0006} & \multirow{2}{*}{.0000} & \multirow{2}{*}{.0000} & \multirow{2}{*}{.0000} &   $2^{17}$
   & .0009&     .0083 & .2921 % CMC
      & .0004&.0061& .7934 %  Prp m 4
   & .0005 & .0038 & .8513  %Opt m 4

   \\
   &   &   &   &   &   $2^{18}$
   & .0008&   .0058 & .3032 % CMC
&    .0006 & .00468& \textbf{1.595} % Prp m 4
   & .0005& .0026&   .7001  % Opt m 4

   \\ \hline

\end{tabular}

}\vspace{0.4em}
\caption*{(a) Results for $m=8$.}
\end{subtable}

\vspace{0.6em}

\begin{subtable}{\textwidth}
\centering
{
\fontsize{8pt}{9.2pt}\selectfont
\setlength{\tabcolsep}{1.9pt}
\begin{tabular}{|p{0.5cm}|p{0.6cm}|p{0.65cm}p{0.65cm}p{0.65cm}|p{0.5cm}|
rrr|rrr|rrr|rrr|rrr|rrr|rrr|rrr|rrr|rrr|rrr|}\hline
  $f$ &   $I$& \multicolumn{3}{c|}{ \small{$\hat{Y}_{1000}^{\rm obs}$}}& $R$

    &\multicolumn{3}{c|}{\small{$\hat{Y}^{\F, \rm CMC}_R$}}&
        \multicolumn{3}{c|}{\small{$\hat{Y}^{\F, \rm  prop, M2}_{R,m=64}$}}&
  \multicolumn{3}{c|}{\small{$\hat{Y}^{\F, \rm  opt, M2}_{R,m=64}$}}\\
    \hline

          & &
              \texttt{E} & \texttt{SD}${}^*$ & \texttt{AC} &
                   &
             \texttt{E} & \texttt{SD}${}^*$ & \texttt{AC}&
              \texttt{E} & \texttt{SD}${}^*$ &\texttt{AC}&
              \texttt{E} &\texttt{SD}${}^*$& \texttt{AC}
    \\

    \hline

   \multirow{2}{*}{$j^+_{0.3}$} & \multirow{2}{*}{.0494} & \multirow{2}{*}{.0532} & \multirow{2}{*}{.7085} &\multirow{2}{*}{1.146}  &     $2^{17}$
   & .0501&   .0603  & \textbf{1.848} % CMC

   & .0559&  .0565 & .8819 % PRop m 16
   & .0547 & .0306  & \underline{.9767} % Opt  m 16
   \\
   &   &   &   &   &   $2^{18}$
  & .0491 &   .0422  & \textbf{2.126} % CMC

   & .0553 & .0398 & .9268 % PRop m 16
   & .0549 & .0216  & \underline{.9562} % Opt  m 16
   \\ \hline

     \multirow{2}{*}{$j^+_{0.5}$} & \multirow{2}{*}{.0046} & \multirow{2}{*}{.0041}& \multirow{2}{*}{.1996} & \multirow{2}{*}{.8927} &   $2^{17}$
   & .0048&     .0191 & \underline{1.277}% CMC

   & .0005&  .0046  &\textbf{1.595} % PRop m 16
   & .0004 & .0026 & 1.226 % Opt  m 16
   \\
   &   &   &   &   &   $2^{18}$
   & .0047&    .0134  & \textbf{1.412} % CMC

   & .0032&  .0109 & .5231 % PRop m 16
   & .0034&  .0042& \underline{.5810} % Opt  m 16
   \\ \hline

        \multirow{2}{*}{$j^+_{0.55}$} & \multirow{2}{*}{.0006} & \multirow{2}{*}{.0000} & \multirow{2}{*}{.0000} & \multirow{2}{*}{.0000} &   $2^{17}$
   & .0009&     .0083 & .2921 % CMC

   & .0005&  .0065& \textbf{1.239}% PRop m 16
   & .0005&  .0019& \underline{.9648} % Opt  m 16
   \\
   &   &   &   &   &   $2^{18}$
   & .0008&   .0058 & .3032 % CMC

   & .0005& .0043 & \underline{.8491}% PRop m 16
   & .0005& .0013&  .8269 % Opt  m 16
   \\ \hline

\end{tabular}
}\vspace{0.4em}
\caption*{(b) Results for $m=64.$}
\end{subtable}
\end{table*}

\subsection{Example \examplereal{} (real-world AmeriGEOSS data)}\label{sec:app_ExampleWIND}

We considered estimating expectations of functions $g_1, g_2, g_3$ and
provided results (Table \ref{tab:Example2d_WIND_spher}) of spherical stratification only.
Here we additionally consider functions
$$g_4(x_1,x_2)=|\cos(e^{x_1x_2})|, \quad  \quad g_5(x_1,x_2)=\log(x_1+x_2), \quad g_6(x_1,x_2)=|\log|x_1+x_2||^{-1}$$
and provide also results for cartesian stratification.
Table \ref{tab:Example2d_WIND_spher_full} contains result for
spherical stratification (thus, this is an extended version of Table \ref{tab:Example2d_WIND_spher}),
whereas Table  \ref{tab:Example2d_WIND_cart_full} contains results for cartesian stratification.
Moreover, in Fig. \ref{fig:AmeriGEOSS_orig_flow} we present 3000 (left) training data
as well as 3000 points sampled from flow model trained on these points (right).

\setlength\tabcolsep{3pt}
  \begin{table*}[h!]
  \caption{(Example \examplereal{}) Numerical results for \textbf{cartesian} (method M1) stratification. }
\label{tab:Example2d_WIND_cart_full}
\begin{center}
{
\fontsize{8pt}{9.2pt}\selectfont
\setlength{\tabcolsep}{1.9pt}
 \begin{tabular}{|p{0.34cm}|p{0.7cm}p{0.7cm}|p{0.4cm}|p{0.8cm}p{0.8cm}|p{0.8cm}p{0.8cm}|p{0.8cm}p{0.8cm}|p{0.8cm}p{0.8cm}|p{0.8cm}p{0.85cm}|}\hline
   $F$ &  \multicolumn{2}{c|}{ \small{$\hat{Y}_{3000}^{\rm obs}$}}& $R$
     &\multicolumn{2}{c|}{\small{$\hat{Y}^{\F, \rm CMC}_R$}}&
      \multicolumn{2}{c|}{\small{$\hat{Y}^{\F, \rm  prop, M1}_{R,m=2\times2}$}}&
   \multicolumn{2}{c|}{\small{$\hat{Y}^{\F, \rm  opt, M1}_{R,m=2\times2}$}}&
         \multicolumn{2}{c|}{\small{$\hat{Y}^{\F, \rm  prop, M1}_{R,m=4\times4}$}}&
   \multicolumn{2}{c|}{\small{$\hat{Y}^{\F, \rm  opt, M1}_{R,m=4\times4}$}}
  \\
     \hline

           &
               \texttt{E} & \texttt{SD}${}^*$ &
                    &
              \texttt{E} & \texttt{SD}${}^*$ &
               \texttt{E} & \texttt{SD}${}^*$ &
               \texttt{E} &\texttt{SD}${}^*$&
               \texttt{E} & \texttt{SD}${}^*$&
             \texttt{E} & \texttt{SD}${}^*$
      \\
     \hline
         \multirow{2}{*}{$g_1$} &  \multirow{2}{*}{.1923} & \multirow{2}{*}{.7196}   &   $2^{12}$
                & .1944 &  .6182   % CMC
                & .1920 &  .3654  % Prp m 4
                & .1935 &  \underline{.2368}   % Opt m 4
                & .1930 &  .2915   % Prp m 16
                & .1934 &  \textbf{.1389}   % Opt m 16
           \\
           &   &   &       $2^{15}$
                & .1943 &  .2186   % CMC
                & .1935 &  .1287    % Prp m 4
                & .1935 &  \underline{.0833}   % Opt m 4
                & .1933 &  .1037   % Prp m 16
                & .1932 &  \textbf{.0495}   % Opt m 16
           \\ \hline
           \multirow{2}{*}{$g_2$} &   \multirow{2}{*}{.0009} & \multirow{2}{*}{.0877}  &   $2^{12}$
                & .0010 &  .0749   % CMC
                & .0007 &  .0457   % Prp m 4
                & .0012 &  .0459   % Opt m 4
                & .0009 &  \underline{.0276}   % Prp m 16
                & .0009 &  \textbf{.0256}    % Opt m 16
           \\
              &   &   &   $2^{15}$
                & .0010 &  .0265    % CMC
                & .0009 &  .0162  % Prp m 4
                & .0008 &  .0162   % Opt m 4
                & .0009 & \underline{.0098}   % Prp m 16
                & .0009 &  \textbf{.0091}    % Opt m 16
           \\ \hline
           \multirow{2}{*}{$g_4$} &  \multirow{2}{*}{.0046} & \multirow{2}{*}{.0097}  &     $2^{12}$
                & .0046 &  .0082      % CMC
                & .0046 &  .0082      % Prp m 4
                & .0046 &  .0083      % Opt m 4
                & .0046 &  \underline{.0053}    % Prp m 16
                & .0046 &  \textbf{.0045}     % Opt m 16
           \\
           &   &   &      $2^{15}$
                & .0046 &  .0029    % CMC
                & .0046 &  .0029   % Prp m 4
                & .0046 &  .0029  % Opt m 4
                & .0046 &  \underline{.0019}    % Prp m 16
                & .0046 &  \textbf{.0016}    % Opt m 16
           \\ \hline
               \multirow{2}{*}{$g_5$} &   \multirow{2}{*}{.5402} & \multirow{2}{*}{.0109}  &    $2^{12}$
                & .5402 &  .0092    % CMC
                & .5402 &  .0070   % Prp m 4
                & .5402 &  .0070   % Opt m 4
                & .5402 &  \underline{.0046}   % Prp m 16
                & .5402 &  \textbf{.0039}    % Opt m 16
           \\
           &   &   &      $2^{15}$
                & .5402 &  .0033   % CMC
                & .5402 &  .0025   % Prp m 4
                & .5402 &  .0025    % Opt m 4
                & .5402 &  .0016    % Prp m 16
                & .5402 &  \textbf{.0014}    % Opt m 16
           \\ \hline
            \multirow{2}{*}{$g_6$} & \multirow{2}{*}{-2.847} & \multirow{2}{*}{2.839}  &   $2^{12}$
                & -2.825 &  2.171   % CMC
                & -2.837 &  1.988    % Prp m 4
                & -2.839 &  1.953   % Opt m 4
                & -2.829 &  \underline{1.266}   % Prp m 16
                & -2.837 &  \textbf{1.141}    % Opt m 16
           \\
           &   &   &      $2^{15}$
                & -2.832 &  .7722   % CMC
                & -2.835 &  .7075    % Prp m 4
                & -2.833 &  .6889   % Opt m 4
                & -2.832 &  \underline{.4531}    % Prp m 16
                & -2.835 &  \textbf{.4017}   % Opt m 16
           \\ \hline
                   \multirow{2}{*}{$g_7$} & \multirow{2}{*}{.4370} & \multirow{2}{*}{.3574}  &    $2^{12}$
                & .4373 &  .3033    % CMC
                & .4357 &  .2752    % Prp m 4
                & .4380 &  .2746   % Opt m 4
                & .4374 &  \underline{.1690}   % Prp m 16
                & .4368 &  \textbf{.1611}    % Opt m 16
           \\
           &   &   &       $2^{15}$
                & .4371 &  .1078   % CMC
                & .4366 &  .0976   % Prp m 4
                & .4369 &  .0970   % Opt m 4
                & .4371 &  \underline{.0599}    % Prp m 16
                & .4365 &  \textbf{.0567}   % Opt m 16
           \\ \hline
\end{tabular}
}
\end{center}

\end{table*}

\begin{table*}[h!]
\vspace{.4cm}
\caption{(Example \examplereal{}) Numerical results for \textbf{spherical} (method M2) stratification.
Full version of Table \ref{tab:Example2d_WIND_spher}.}
\label{tab:Example2d_WIND_spher_full}
\begin{center}
{
\fontsize{8pt}{9.2pt}\selectfont
\setlength{\tabcolsep}{1.9pt}
  \begin{tabular}{|p{0.34cm}|p{0.7cm}p{0.7cm}|p{0.6cm}|p{0.8cm}p{0.8cm}|p{0.8cm}p{0.8cm}|p{0.8cm}p{0.8cm}|p{0.8cm}p{0.8cm}|p{0.8cm}p{0.85cm}|}\hline
   $F$ &  \multicolumn{2}{c|}{ \small{$\hat{Y}_{3000}^{\rm obs}$}}& $R$
     &\multicolumn{2}{c|}{\small{$\hat{Y}^{\F, \rm CMC}_R$}}&
      \multicolumn{2}{c|}{\small{$\hat{Y}^{\F, \rm  prop, M2}_{R,m=2\times2}$}}&
   \multicolumn{2}{c|}{\small{$\hat{Y}^{\F, \rm  opt, M2}_{R,m=2\times2}$}}&
         \multicolumn{2}{c|}{\small{$\hat{Y}^{\F, \rm  prop, M2}_{R,m=4\times4}$}}&
   \multicolumn{2}{c|}{\small{$\hat{Y}^{\F, \rm  opt, M2}_{R,m=4\times4}$}}
  \\
     \hline

           &
               \texttt{E} & \texttt{SD}${}^*$ &
                    &
              \texttt{E} & \texttt{SD}${}^*$ &
               \texttt{E} & \texttt{SD}${}^*$ &
               \texttt{E} &\texttt{SD}${}^*$&
               \texttt{E} & \texttt{SD}${}^*$&
             \texttt{E} & \texttt{SD}${}^*$
      \\
     \hline
                \multirow{2}{*}{$g_1$} &  \multirow{2}{*}{.1923} & \multirow{2}{*}{.7196}  &    $2^{12}$
                & .1944 &  .6182   % CMC
                & .1938 &  .5270    % Prp m 4
                & .1916 &  .3715   % Opt m 4
                & .1944 &  \underline{.2883}    % Prp m 16
                & .1930 &  \textbf{.1758}   % Opt m 16
           \\
           &   &   &      $2^{15}$
                & .1943 &  .2186    % CMC
                & .1926 &  .1862    % Prp m 4
                & .1929 &  .1315    % Opt m 4
                & .1937 &  \underline{.1031}   % Prp m 16
                & .1931 &  \textbf{.0640}   % Opt m 16
           \\ \hline
            \multirow{2}{*}{$g_2$}  & \multirow{2}{*}{.0009} & \multirow{2}{*}{.0877}  &   $2^{12}$
                & .0010 &  .0749   % CMC
                & .0008 &  .0372   % Prp m 4
                & .0007 &  .0349   % Opt m 4
                & .0007 &  \underline{.0339}   % Prp m 16
                & .0010 &  \textbf{.0303}    % Opt m 16
           \\
           &   &     &   $2^{15}$
                & .0010 &  .0265    % CMC
                & .0008 &  .0131   % Prp m 4
                & .0009 &  .0123    % Opt m 4
                & .0009 &  \underline{.0120}   % Prp m 16
                & .0009 &  \textbf{.0107}   % Opt m 16
           \\ \hline
            \multirow{2}{*}{$g_4$} &  \multirow{2}{*}{.0046} & \multirow{2}{*}{.0097}  &    $2^{12}$
                & .0046 &  .0082   % CMC
                & .0046 &  .0062    % Prp m 4
                & .0046 &  .0052    % Opt m 4
                & .0046 &  \underline{.0051}    % Prp m 16
                & .0046 &  \textbf{.0039}   % Opt m 16
           \\
           &   &      &   $2^{15}$
                & .0046 &  .0029  % CMC
                & .0046 &  .0022    % Prp m 4
                & .0046 &  \underline{.0018}    % Opt m 4
                & .0046 &  .0018   % Prp m 16
                & .0046 &  \textbf{.0014}  % Opt m 16
           \\ \hline
            \multirow{2}{*}{$g_5$} &  \multirow{2}{*}{.5402} & \multirow{2}{*}{.0109}   &   $2^{12}$
                & .5402 &  .0092    % CMC
                & .5402 &  .0092    % Prp m 4
                & .5402 &  .0075   % Opt m 4
                & .5402 &  .0046    % Prp m 16
                & .5402 &  \textbf{.0035}   % Opt m 16
           \\
           &   &   &      $2^{15}$
                & .5402 &  .0033    % CMC
                & .5402 &  .0032   % Prp m 4
                & .5402 &  .0026  % Opt m 4
                & .5402 &  .0017   % Prp m 16
                & .5402 &  \textbf{.0012}   % Opt m 16
           \\ \hline
           \multirow{2}{*}{$g_6$} &   \multirow{2}{*}{-2.847} & \multirow{2}{*}{2.839}  &     $2^{12}$
                & -2.825 &  2.171    % CMC
                & -2.836 &  1.914  % Prp m 4
                & -2.836 &  1.884    % Opt m 4
                & -2.833 &  \underline{1.525}  % Prp m 16
                & -2.834 &  \textbf{1.326}   % Opt m 16
           \\
           &   &   &     $2^{15}$
                & -2.832 &  .7722   % CMC
                & -2.834 &  .6742    % Prp m 4
                & -2.841 &  .6747    % Opt m 4
                & -2.833 &  \underline{.5311}    % Prp m 16
                & -2.837 &  \textbf{.4685}   % Opt m 16
           \\ \hline
           \multirow{2}{*}{$g_7$} &  \multirow{2}{*}{.4370} & \multirow{2}{*}{.3574}  &     $2^{12}$
                & .4373 &  .3033   % CMC
                & .4371 &  .2480    % Prp m 4
                & .4365 &  .2391    % Opt m 4
                & .4375 &  \underline{.1733}   % Prp m 16
                & .4361 &  \textbf{.1592}   % Opt m 16
           \\
           &   &   &       $2^{15}$
                & .4371 &  .1078  % CMC
                & .4366 &  .0872    % Prp m 4
                & .4367 &  .0840    % Opt m 4
                & .4370 &  \underline{.0613}    % Prp m 16
                & .4365 &  \textbf{.0563}    % Opt m 16
           \\ \hline

\end{tabular}
}
\end{center}
\end{table*}

 \begin{figure}[h!]
  \begin{center}
  \begin{tabular}{llll}
  \includegraphics[width=0.46\textwidth]{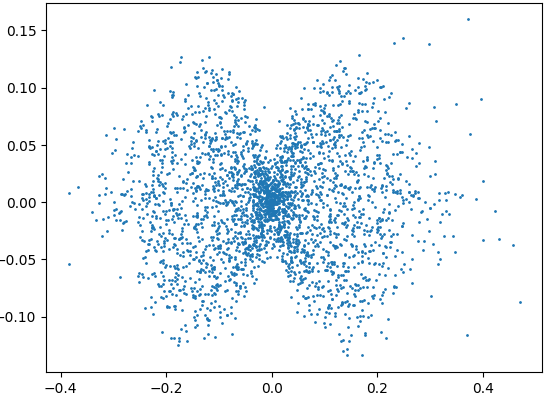} & $\qquad$
  \includegraphics[width=0.46\textwidth]{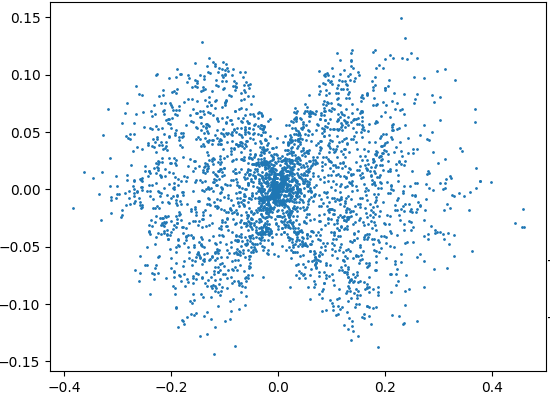}
  \end{tabular}
  \end{center}
\caption{Example \examplereal{}: 3000 training points (left) and 3000 replications from  trained flow model (right)}
\label{fig:AmeriGEOSS_orig_flow}
%\end{wrapfigure}
\end{figure}

\subsection{Example \examplelarged{} (30D example)}\label{sec:app_Example30d}
In Table~\ref{tab:Example_30d_rad_part1} we presented
estimators
$\hat{Y}^{\F, \rm  prop, \mathrm{M}\texttt{rad}}_{R,m=3}$,
$\hat{Y}^{\F, \rm  opt, \mathrm{M}\texttt{rad}}_{R,m=3}$,
and
$\hat{Y}^{\F, \rm  prop, \mathrm{M}\texttt{rad}}_{R,m=7}$. In Table~\ref{tab:Example_30d_rad_part2}
estimators
$\hat{Y}^{\F, \rm  opt, \mathrm{M}\texttt{rad}}_{R,m=7}$,
$\hat{Y}^{\F, \rm  prop, M\texttt{Rand3}}_{{R},m=3\times 3\times 3}$, and
$\hat{Y}^{\F, \rm  prop, M\texttt{High3}}_{{R},m=3\times 3\times 3}$
are provided.
\begin{table*}[h]
\caption{(Example  \examplelarged{}) Results for $\mathrm{M}\texttt{rad}$,
$\mathrm{M}\texttt{High3}$ and $\mathrm{M}\texttt{Rand3}$ methods. Part 2.  }
\label{tab:Example_30d_rad_part2}
\fontsize{8pt}{9.2pt}\selectfont
\setlength{\tabcolsep}{1.9pt}
\begin{tabular}{|C{0.75cm}|p{0.6cm}|p{0.6cm}p{0.6cm}p{0.7cm}|C{0.45cm}|p{0.6cm}p{0.6cm}p{0.6cm}|p{0.6cm}p{0.6cm}p{0.6cm}|p{0.6cm}p{0.6cm}p{0.6cm}|p{0.6cm}p{0.6cm}p{0.6cm}|}\hline
    \multicolumn{1}{|c|}{$f$} &   $I^*$& \multicolumn{3}{c|}{ \small{$\hat{Y}_{n=500}^{\rm obs}$}}& $R$
     &\multicolumn{3}{c|}{{$\hat{Y}^{\F, \rm CMC}_R$}}&
   \multicolumn{3}{c|}{{$\hat{Y}^{\F, \rm  opt, \mathrm{M}\texttt{rad}}_{R,m=7}$}} &
     \multicolumn{3}{c|}{{$\hat{Y}^{\F, \rm  prop, M\texttt{Rand3}}_{{R},m=3\times 3\times 3}$}}&
         \multicolumn{3}{c|}{{$\hat{Y}^{\F, \rm  prop, M\texttt{High3}}_{{R},m=3\times 3\times 3}$}}\\
     \hline
           & &
               {\texttt{E}${}^*$}&{\texttt{SD}${}^*$}&{\texttt{AC}}&    &
               {\texttt{E}${}^*$}&{\texttt{SD}${}^*$}&{\texttt{AC}}&
               {\texttt{E}${}^*$}&{\texttt{SD}${}^*$}&{\texttt{AC}}&
               {\texttt{E}${}^*$}&{\texttt{SD}${}^*$}&{\texttt{AC}}&
               {\texttt{E}${}^*$}&{\texttt{SD}${}^*$}&{\texttt{AC}}
     \\
    \hline
   \multirow{2}{*}{$\!\!j^+_{-1.0}$} & \multirow{2}{*}{\!1.37} &\multirow{2}{*}{1.80}& \multirow{2}{*}{.595} &\multirow{2}{*}{.504}  &     $\!\!2^{12}$
   & 1.67 & .200 & .733% CMC
   & 2.60 & .141 &	.815% Opt  m 16
   & 1.82 & .208 & .507	 % Prp m 4
   & 1.86 & .210 & .465%
   \\
   &   &   &   &   &   $\!\!2^{13}$
 & 1.67 &   .141  & 	\underline{.703}% CMC
   & 1.55&	.132	& \textbf{.918} % Opt  m 16
    & 1.74 &   .144 & .605 % Prp m 4
   & 1.68	&.141&	.697% Opt m 4
   \\ \hline
     \multirow{2}{*}{$\!\!j^+_{-0.8}$} & \multirow{2}{*}{\!.560} &\multirow{2}{*}{.400}& \multirow{2}{*}{.282} & \multirow{2}{*}{.543} &   $\!\!2^{12}$
   & .735	 & .133 & .635% CM
   & .235 & .043 & .333 % Opt  m 16
      & .699& .129 & .642 % Prp m 4
   & .679& .127 & .841% Opt m 4
   \\
   &   &   &   &   &   $\!\!2^{13}$
   & .711	& .092	& .589 % CMC
   & .444 & .057 & .918 % Opt  m 16
   &   .669	& .089	& \textbf{1.07} % Prp m 4
   & .689 & .091 & .719% Opt m 4
   \\ \hline
        \multirow{2}{*}{$\!\!j^+_{-0.6}$} & \multirow{2}{*}{\!.194} &\multirow{2}{*}{.000} & \multirow{2}{*}{.000} & \multirow{2}{*}{.000} &   $\!\!2^{12}$
        & 	.236 &	.075	& \textbf{1.00} % CMC
   & .151& .047& .453 % Opt  m 16
    & .191 & .067 & .821  %Opt m 4
   & .191	&.067 & \underline{.972}% PRop m 16
   \\
   &   &   &   &   &   $\!\!2^{13}$
   & 	.267	 & .056 & .506 % CMC
   & .070 & .016 & .264 % Opt  m 16
     &  .190  & .048 & \textbf{1.06} %  Prp m 4
   &.189	&.047 & \underline{.924}
   \\ \hline
\end{tabular}
%}\vspace{0.4em}
%\caption*{(b) Results: part 2.}
%\end{subtable}
\end{table*}

We consider synthetic $d=30$ dimensional example with coordinates from different distributions.
The distribution of $\mathbf{X}=(X_1,\ldots,X_{30})$ can be described as follows:
\begin{itemize}[leftmargin=0.5cm,align=left, itemsep=0.2cm]
    \item $X_6, X_7, X_8, X_9, X_{21}, X_{22}, X_{23}, X_{24}$ are independent having standard normal  $N(0, \sigma^2)$
    distributions with standard deviations $1,\ 1.2,\ 1.4,\ 1.6,\ 1,\ 1.2,\ 1.4$ and $1.6$.
    \item $X_{10}, X_{11}, X_{12}, X_{13}, X_{25}, X_{26}, X_{27}, X_{28}$  are independent  exponential Exp($\lambda$) random variables
    with scales $ 0.4,\ 0.5,\ 0.6,\ 0.7,\ 0.4,\ 0.5,\ 0.6$ and $ 0.7$.
    \item $X_{14}, X_{15}, X_{29}, X_{30}$  are independet  gamma random variables  $\Gamma(2, \beta)$ with
    parameters $\beta$ being $1.6,\ 1.8,\ 1.6$ and $ 1.8$.
    \item $(X_1, ..., X_5)$  and $(X_{16},.., X_{20})$
    are two five-dimensional normal random vectors $\mathcal{N}(\boldsymbol{\mu}_1,\boldsymbol{\Sigma}_1)$
    and $\mathcal{N}(\boldsymbol{\mu}_2,\boldsymbol{\Sigma}_2)$ respectively, where
    $$
    \begin{array}{lllllllll}
  \boldsymbol{\mu}_1  & = & \begin{bmatrix}
    -1, &1, & 2, & 1, & 1
\end{bmatrix}^{T}
& \qquad & \boldsymbol{\mu}_2 &=&
    \begin{bmatrix}
    1.0,& -0.5,& 0.0,& 1.2,& -0.8
\end{bmatrix}^{T}
\\[10pt]
\boldsymbol{\Sigma}_1 &=&
    \begin{bmatrix}
55&  8& 17& 19& 23\\
8&  8&  8&  9& 11\\
17&  8& 13& 15& 20\\
19&  9& 15& 23& 24\\
23& 11& 20& 24& 32
\end{bmatrix} & \qquad &  \boldsymbol{\Sigma}_2 &=&
    \begin{bmatrix}
    1.0& 0.8& 0.6& 0.4& -0.3\\
    0.8& 2.0& 1.0& 0.7& -0.5\\
    0.6& 1.0& 1.5& 0.9& -0.4\\
    0.4& 0.7& 0.9& 1.2& -0.2\\
    -0.3& -0.5& -0.4& -0.2& 1.0
\end{bmatrix}
    \end{array}
    $$

\end{itemize}

 \begin{figure}[h!]
  \begin{center}
  \end{center}
\caption{Example \examplelarged{}: Pairplot 1000 points from $d=30$ dimensional distribution.}
\vspace{1cm}
\label{fig:Ex6Pairplot}
\end{figure}

1000 points sampled from this 30D distribution are shown in Fig. \ref{fig:Ex6Pairplot}.
The figure contains a pairplot: $i$-th row and $j$-th column (for $i\neq j$) is a
scatter plot between $X_i$ and $X_j$, whereas histograms are presented on the diagonal.
We can see that most of the dimensions are independent,
the dependent part is  clearly visible  (upper right corner and   near the center of the plot).
This synthetic example was created for reflect real-life scenarios, where we have data with multiple dimensions and that some of the variables are correlated.
In our experiments we estimate $I = P(X_1 \geq t, X_2 \geq t, ..., X_{30} \geq t)$ (i.e., $\Exp j^+_t(\X)$) for various values of $t$.
For $\textrm{M}\texttt{High3}$ method we first perform $R_0=2^{10}$ simulations per dimension to estimate $\sigma^{\rm dim}$.

\subsection{Example \examplelargestd{}: 128-dimensional experiment}
\label{sec:app_Example128d}

In this experiment, we consider the $128$-dimensional latent space of the
PointFlow model~\cite{yang2019pointflow} trained on $4500$ examples from the
\emph{chair} subset of the ShapeNet dataset~\cite{chang2015shapenet}.
The distribution in the latent space is modeled using a normalizing flow,
and we used the publicly available trained model~\cite{pointflow_git} to compute
estimates of $\Exp j^{+}_{t}(\X)$ for $t\in\{-0.45,-0.50,-0.55\}$,
where $\X\in\mathbb{R}^{128}$.

We applied three stratification approaches:
(i)~the $\textrm{M}\texttt{rad}$ method, stratifying the radius into
$m\in\{3,5,7,9\}$ strata,
(ii)~the $\textrm{M}\texttt{High3}$ method, and
(iii)~the $\textrm{M}\texttt{Rand3}$ method,
both stratifying three selected coordinates into
$m_0\in\{2,4\}$ strata (yielding $8$ or $64$ strata in total).

Table~\ref{tab:Example_128d} reports the results for $n=1000$ samples.
Since the true value $I$ is unknown, we present only the estimates $\texttt{E}$
and the empirical standard deviations $\texttt{SD}$.
In all but one case, a stratified method achieves the lowest $\texttt{SD}$.
However, the differences between methods are relatively small.

\begin{table*}
\caption{(Example 7) numerical results for $\mathrm{M}\texttt{rad}$,
$\mathrm{M}\texttt{High3}$ and $\mathrm{M}\texttt{Rand3}$ methods.}
\label{tab:Example_128d}
\begin{subtable}{\textwidth}
\centering
{
\fontsize{8pt}{9.2pt}\selectfont
\setlength{\tabcolsep}{1.9pt}
\begin{tabular}{|C{0.85cm}|p{0.75cm}p{0.7cm}|C{0.5cm}|p{0.7cm}p{0.75cm}|p{0.7cm}p{0.75cm}|p{0.7cm}p{0.75cm}|p{0.7cm}p{0.75cm}|p{.7cm}p{0.8cm}|}\hline
         \multicolumn{1}{|c|}{$f$} &
         \multicolumn{2}{c|}{ \small{$\hat{Y}_{4500}^{\rm obs}$}}& $R$&
         \multicolumn{2}{c|}{\small{$\hat{Y}^{\F, \rm CMC}_R$}}&
         \multicolumn{2}{c|}{\small{$\hat{Y}^{\F, \textrm{M}\texttt{rad}}_{R,m=3}$}}&
         \multicolumn{2}{c|}{\small{$\hat{Y}^{\F, \textrm{M}\texttt{rad}}_{R,m=5}$}}&
         \multicolumn{2}{c|}{\small{$\hat{Y}^{\F, \textrm{M}\texttt{rad}}_{R,m=7}$}}&
         \multicolumn{2}{c|}{\small{$\hat{Y}^{\F, \textrm{M}\texttt{rad}}_{R,m=9}$}}
  \\
     \hline
    & \texttt{EST} & \texttt{STD}${}^*$ &   &
       \texttt{EST} & \texttt{STD}${}^*$ &
       \texttt{EST} & \texttt{STD}${}^*$ &
       \texttt{EST} &\texttt{STD}${}^*$&
       \texttt{EST} & \texttt{STD}${}^*$&
       \texttt{EST} & \texttt{STD}${}^*$
      \\
     \hline
                \multirow{2}{*}{$\!\!j^{+}_{-0.45}$} &  \multirow{2}{*}{.029} & \multirow{2}{*}{.2491}  &    $2^{17}$
                & .039  &       .0532   % CMC
                & .038  &     \textbf{.0523}      %    MR_3_EST
                & .038  &       .0530               %    MR_5_EST
                & .038  &       .0530               %    MR_7_EST
                & .038  &       .0529               %    MR_9_EST
           \\
           &   &   &      $2^{18}$
               &    .038  &       .0374    % CMC
                &   .038  &       .0373       %    MR_3_EST
                &   .038 &        \underline{.0373}              %    MR_5_EST
                &   .038   &      .0373               %    MR_7_EST
                &   .038   &     \textbf{.0373}             %    MR_9_EST
           \\ \hline
            \multirow{2}{*}{$\!\!j^{+}_{-0.50}$}  & \multirow{2}{*}{.070} & \multirow{2}{*}{.3747}  &   $2^{17}$
                &   .080   &      \textbf{.0745}  % CMC
                &   .080  &       .0746     %    MR_3_EST
                &   .080&         .0747             %    MR_5_EST
                &   .080    &     .0747               %    MR_7_EST
                &   .080  &       \underline{.0746}               %    MR_9_EST
           \\
           &   &     &   $2^{18}$
                  & .080&         .0530 % CMC
                &   .080   &      .0528     %    MR_3_EST
                &   .080 &        .0528          %    MR_5_EST
                &   .080  &       \underline{.0528}          %    MR_7_EST
                &   .080  &       .0529       %    MR_9_EST
           \\ \hline

            \multirow{2}{*}{$\!\!j^{+}_{-0.55}$} &  \multirow{2}{*}{.128} & \multirow{2}{*}{.4921}  &    $2^{17}$
             &      .138    &  .0952 % CMC
                &   .137 &    \textbf{.0946}     %    MR_3_EST
                &   .138  &    .0947            %    MR_5_EST
                &   .137  &    .0946            %    MR_7_EST
                &   .137  &    \underline{.0946}          %    MR_9_EST
           \\
           &   &      &   $2^{18}$
            &       .138   &    .0673  % CMC
                &   .138  &     .0670        %    MR_3_EST
                &   .138 &      .0670                %    MR_5_EST
                &   .137 &     \underline{.0669}        %    MR_7_EST
                &   .137  &   \textbf{.0669}      %    MR_9_EST
           \\ \hline
\end{tabular}
}\vspace{0.4em}
\caption*{(a) Results for $m\in\{3, 5, 7 , 9\}$.}
\end{subtable}

\vspace{0.6em}

\begin{subtable}{\textwidth}
\centering
{
\fontsize{8pt}{9.2pt}\selectfont
\setlength{\tabcolsep}{1.9pt}
\begin{tabular}{|C{0.85cm}|p{0.75cm}p{0.7cm}|C{0.5cm}|p{0.7cm}p{0.75cm}|p{0.7cm}p{0.75cm}|p{0.7cm}p{0.75cm}|p{0.7cm}p{0.75cm}|p{.7cm}p{0.8cm}|}\hline
         \multicolumn{1}{|c|}{$f$} &
         \multicolumn{2}{c|}{ \small{$\hat{Y}_{4500}^{\rm obs}$}}& $R$&
         \multicolumn{2}{c|}{\small{$\hat{Y}^{\F, \rm CMC}_R$}}&
         \multicolumn{2}{c|}{\small{$\hat{Y}^{\F, \textrm{M}\texttt{High}3}_{R,m=2^3}$}}&
         \multicolumn{2}{c|}{\small{$\hat{Y}^{\F, \textrm{M}\texttt{Rand}3}_{R,m=2^3}$}}&
         \multicolumn{2}{c|}{\small{$\hat{Y}^{\F, \textrm{M}\texttt{High}3}_{R,m=4^3}$}}&
         \multicolumn{2}{c|}{\small{$\hat{Y}^{\F, \textrm{M}\texttt{Rand}3}_{R,m=4^3}$}}
  \\
     \hline
    & \texttt{EST} & \texttt{STD}${}^*$ &   &
       \texttt{EST} & \texttt{STD}${}^*$ &
       \texttt{EST} & \texttt{STD}${}^*$ &
       \texttt{EST} &\texttt{STD}${}^*$&
       \texttt{EST} & \texttt{STD}${}^*$&
       \texttt{EST} & \texttt{STD}${}^*$
      \\
     \hline
                \multirow{2}{*}{$\!\!j^{+}_{-0.45}$} &  \multirow{2}{*}{.029} & \multirow{2}{*}{.2491}  &    $2^{17}$
                & .039  &       .0532   % CMC
                & .039  &     \underline{.0531}   %    DIM3_best3_2_EST
                & .038  &       .0523               %    DIM3_rand3_2_EST
                & .039  &       .0531               %    DIM3_best3_4_EST
                & .038  &       .0530               %    DIM3_rand3_4_EST
           \\
           &   &   &      $2^{18}$
               &    .038  &       .0374    % CMC
                &   .038 &        .0374  %    DIM3_best3_2_EST
                &   .039    &     .0375            %    DIM3_rand3_2_EST
                &   .038    &     .0373              %    DIM3_best3_4_EST
                &   .038  &       .0375            %    DIM3_rand3_4_EST
           \\ \hline
            \multirow{2}{*}{$\!\!j^{+}_{-0.50}$}  & \multirow{2}{*}{.070} & \multirow{2}{*}{.3747}  &   $2^{17}$
                &   .080   &      \textbf{.0745}  % CMC
                &   .080&         .0749 %    DIM3_best3_2_EST
                &   .080  &       .0749               %    DIM3_rand3_2_EST
                &   .080   &      .0748              %    DIM3_best3_4_EST
                &   .080  &       .0749              %    DIM3_rand3_4_EST
           \\
           &   &     &   $2^{18}$
                  & .080&         .0530 % CMC
                &   .080&         \textbf{.0528}      %    DIM3_best3_2_EST
                &   .080 &        .0529      %    DIM3_rand3_2_EST
                &   .080  &       .0529      %    DIM3_best3_4_EST
                &   .080  &       .0529               %    DIM3_rand3_4_EST
           \\ \hline

            \multirow{2}{*}{$\!\!j^{+}_{-0.55}$} &  \multirow{2}{*}{.128} & \multirow{2}{*}{.4921}  &    $2^{17}$
             &      .138    &  .0952 % CMC
                &   .138 &     .0951  %    DIM3_best3_2_EST
                &   .137   &   .0950             %    DIM3_rand3_2_EST
                &   .137  &    .0950           %    DIM3_best3_4_EST
                &   .138  &    .0950              %    DIM3_rand3_4_EST
           \\
           &   &      &   $2^{18}$
            &       .138   &    .0673  % CMC
                &   .137 &      .0672  %    DIM3_best3_2_EST
                &   .137   &    .0672             %    DIM3_rand3_2_EST
                &   .137   &    .0671              %    DIM3_best3_4_EST
                &   .137     &  .0671        %    DIM3_rand3_4_EST
           \\ \hline
\end{tabular}
}\vspace{0.4em}
\caption*{(b) Results for $m\in\{2^3, 4^3\}$.}
\end{subtable}
\end{table*}

\section{Training time}.\label{sec:app_training_time}
Training (on a single GeForce RTX 2080) demanding Example \examplelarged{} (30D) with CNF's parameter \texttt{dims}=128 (see A.3) takes $\sim$3h (\texttt{dims}=32 takes $\sim$1h which gives slightly worse performance -- not reported). Training Examples \examplefirst{} and \examplesecond{} takes $\sim$1h. Note that in case of estimating many functions $\Exp f_i(X)$, the flow model must be trained only once.

\section{More details on choosing strata}\label{sec:app_choosing_strata}
\begin{minipage}{1.0\textwidth}
\begin{figure}[H]
  \begin{center}
   \begin{tabular}{cccc}
  \includegraphics[width=0.31\linewidth]{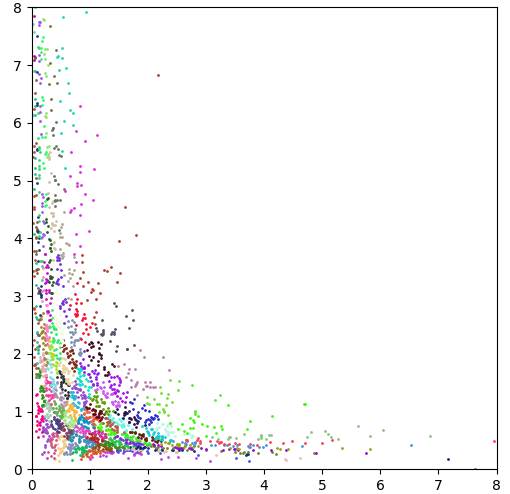}
  &\includegraphics[width=0.31\linewidth]{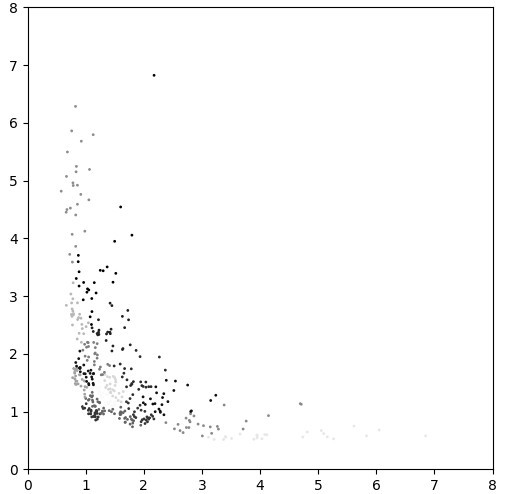}
  &
  \includegraphics[width=0.31\linewidth]{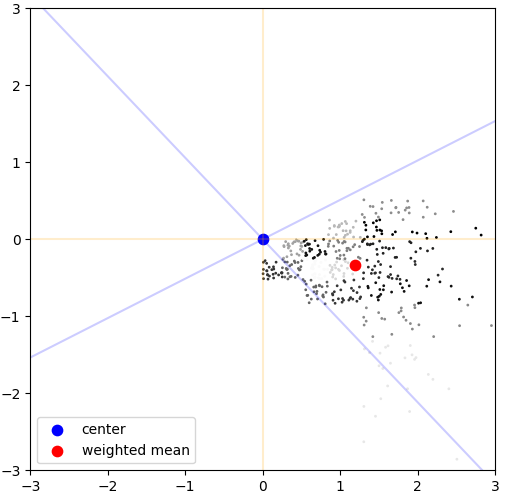}
  \\
  (a) & (b) & (c)
  \end{tabular}
  \end{center}
 \caption{Choosing strata in Example \examplesecond{}:  (a)  $R=2^{13}$ points from a base bivariate normal distribution sampled in 100 strata, transformed via flow model; (b) standard deviations (weights, the darker the larger)
 of a function $h_1$ of points from given strata (all points in the same strata are assign the same weight); (c) points from (b) presented in base distribution, together with it weighted mean (\textcolor{red}{red} ball),
 rotation indicated by \textcolor{blue}{blue} axis (rotated by $29.33^{\circ}$).
  }  \label{fig:Example2d_norm_flow_100str}
\end{figure}
\end{minipage}

\vspace{1cm}
\begin{minipage}{1.0\textwidth}
 \begin{figure}[H]
  \begin{center}
   \begin{tabular}{cccc}
  \includegraphics[width=0.32\linewidth]{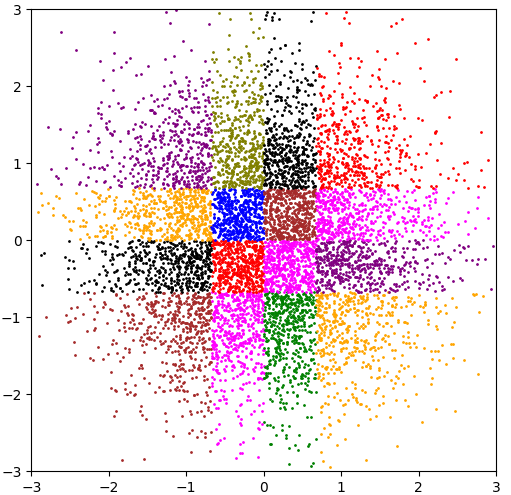}
  &
    \includegraphics[width=0.32\linewidth]{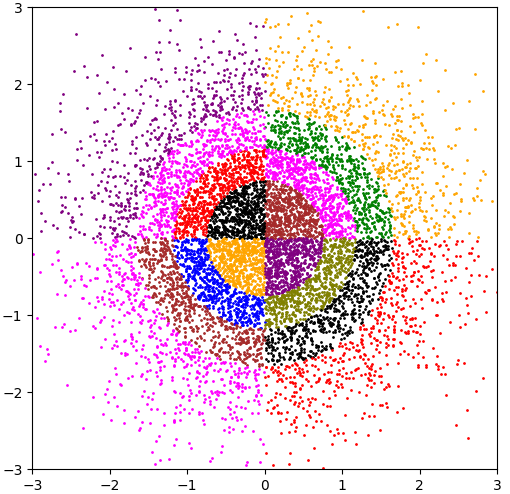}
    &
    \includegraphics[width=0.32\linewidth]{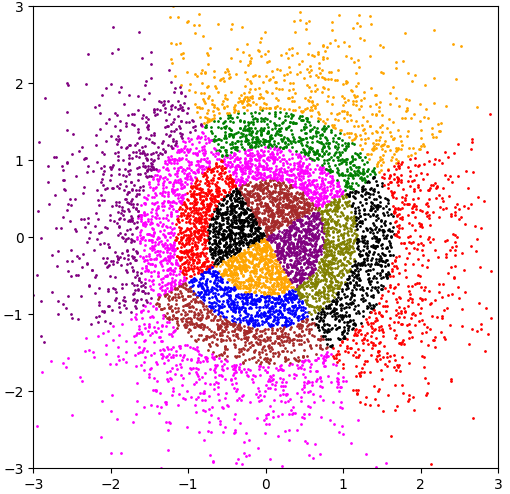}
    \\
   \includegraphics[width=0.32\linewidth]{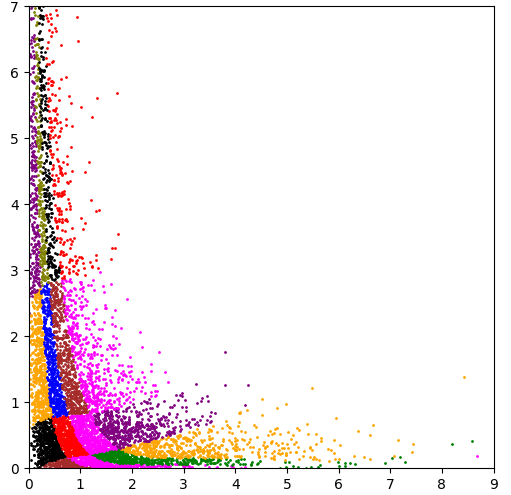}
  &\includegraphics[width=0.32\linewidth]{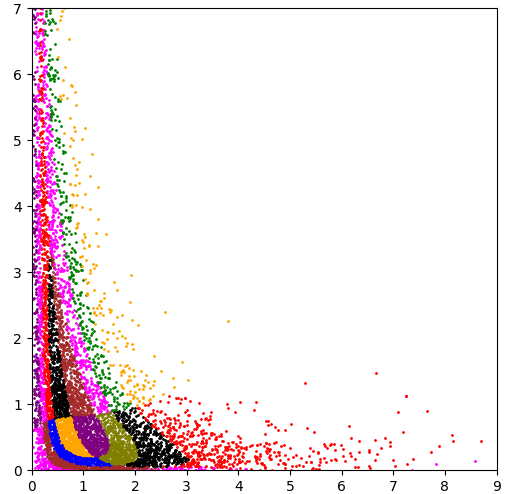}
  &\includegraphics[width=0.32\linewidth]{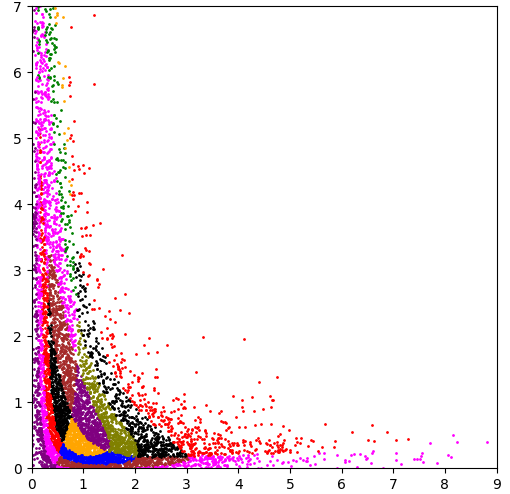}
  \end{tabular}
  \end{center}
 \caption{Example \examplesecond{}: $R=2^{13}$ points and $m=16$ strata.
 Cartesian ($1^{\rm st}$ column) and spherical ($2^{\rm nd}$ and $3^{\rm rd}$ columns,
 in the latter strata are rotated by $29.33^{\circ}$). First two columns were depicted in Fig. \ref{fig:Example2d_norm_flow_16str}.
  }  \label{fig:Example2d_norm_flow_16str_v2}
\end{figure}
\end{minipage}
\smallskip\par
Note that even if we fix number of strata, we decide to have them equally probable, we may choose them
in a variety of ways. E.g., in case of spherical method we may ``rotate'' strata
(in all previous examples spherical strata started at angles 0, i.e., unrotated).
As mentioned, a general design for choosing strata is left for a future work, here we shortly describe some reasonable approach
for a 2-dimensional case. We will continue Example~\examplesecond{} with estimation of $\Exp h_1(\mathbf{X})$. The main idea is that the angle will depend on a function we want to estimate,
and we will try to accommodate many points in a small number of strata.
Simple method could be (for 2D) as follows.
 We start with some relatively large number of cartesian strata, say $100$ (i.e., 10 in each dimension).
For each stratum in a latent space we compute the standard deviation of estimated function.
Then, each point is assigned a weight -- the computed standard deviation.
In Fig. \ref{fig:Example2d_norm_flow_100str}: left -- points sampled from 100 strata are presented;
center -- the darker the point the larger the standard deviation (most points invisible -- std from corresponding
strata either 0 or close to 0); right --  the points in base (2D normal) distribution with their weights depicted.
Afterwards we compute the weighted (with weights being the standard deviation) mean of the point (\textcolor{red}{red} ball in Fig. \ref{fig:Example2d_norm_flow_100str}). The ball is at angle $-15.67^{\circ}$. Finally, we wish to have this ball in the middle of the strata -- we rotate it $-15.67^{\circ} +45^{\circ} = 29.33^{\circ}$. This way most of the points will be in
this case in one stratum (disregarding radii). The rotated strata are depicted in the right-most column  of Fig.  \ref{fig:Example2d_norm_flow_16str_v2}.

Initial simulations show that it improves the accuracy: with $R=2^{15}$ the accuracy
of a stratified estimator with optimal allocation yields accuracy 1.2271 in case of unrotated strata and  1.6056 in case of rotated strata.

We want to \textsl{emphasize} that this is just some heuristic approach,
but it proves that choosing a strata based on a function we wish to estimate may results in better performance.
A more in-depth study is left for a future work.

\end{document}